\documentclass[twoside,11pt]{article}

\usepackage{graphicx,subfig}
\usepackage{caption}
\captionsetup[subfigure]{labelformat=parens,labelsep=space,font=small}
\usepackage[top=1in,bottom=1in,left=1in,right=1in]{geometry}
\usepackage[sort&compress, numbers]{natbib} \setlength{\bibsep}{0.0pt}
\usepackage{amsfonts, amsmath, amssymb, amsthm}
\usepackage{algorithm, algorithmic}



\usepackage{color}
\definecolor{darkred}{RGB}{100,0,0}
\definecolor{darkgreen}{RGB}{0,100,0}
\definecolor{darkblue}{RGB}{0,0,150}

\usepackage{hyperref}
\hypersetup{colorlinks=true, linkcolor=darkred, citecolor=darkgreen, urlcolor=darkblue}

\def\one{1}
\def\M{\cM}

\newtheorem{thm}{Theorem}
\newtheorem{prp}{Proposition}
\newtheorem{lem}{Lemma}
\newtheorem{cor}{Corollary}
\theoremstyle{remark}
\newtheorem{Def}{Definition}
\newtheorem{rem}{Remark}
\newtheorem{example}{Example}

\def\beq{\begin{equation}} 
\def\eeq{\end{equation}}
\def\beqn{\beq\notag}
\def\Bitem{\begin{itemize}\setlength{\itemsep}{.2in}}
\def\bitem{\begin{itemize}\setlength{\itemsep}{.05in}}
\def\eitem{\end{itemize}}
\def\Benum{\begin{enumerate}\setlength{\itemsep}{.2in}}
\def\benum{\begin{enumerate}\setlength{\itemsep}{.05in}}
\def\eenum{\end{enumerate}}
\def\bmult{\begin{multline*}}
\def\emult{\end{multline*}}
\def\bcenter{\begin{center}}
\def\ecenter{\end{center}}
\def\bframe{\begin{frame}}
\def\eframe{\end{frame}}

\newcommand{\thmref}[1]{Theorem~\ref{thm:#1}}
\newcommand{\prpref}[1]{Proposition~\ref{prp:#1}}
\newcommand{\corref}[1]{Corollary~\ref{cor:#1}}
\newcommand{\lemref}[1]{Lemma~\ref{lem:#1}}
\newcommand{\secref}[1]{Section~\ref{sec:#1}}
\newcommand{\figref}[1]{Figure~\ref{fig:#1}}

\newcommand{\algref}[1]{Algorithm~\ref{alg:#1}}
\newcommand{\remref}[1]{Remark~\ref{rem:#1}}




\DeclareMathOperator{\diag}{diag}


\def\cA{\mathcal{A}}

\def\cD{\mathcal{D}}

\def\cL{\mathcal{L}}
\def\cM{\mathcal{M}}

\def\cO{\mathcal{O}}

\def\cY{\mathcal{Y}}
\def\cZ{\mathcal{Z}}





\def\bbI{\mathbb{I}}

\def\bbR{\mathbb{R}}

\def\avgc{c_{{\rm avg}}}


\renewcommand{\P}{\operatorname{\mathbb{P}}}



\def\eps{\varepsilon}

\def\1{\mathbbm{1}}

\def\tLambda{\tilde{\Lambda}}
\def\reals{\mathbb{R}}

\def\tdelta{\tilde{\delta}}
\def\tDelta{\tilde{\Delta}}
\def\ty{\tilde{y}}
\def\tz{\tilde{z}}
\def\tY{\tilde{Y}}
\def\tZ{\tilde{Z}}
\def\tlambda{\tilde{\lambda}}
\def\normal{{\sf N}}

\newcommand{\IND}[1]{\bbI\{ #1 \}}

\definecolor{purple}{rgb}{0.4,.1,.9}

\newcommand{\new}[1]{\textcolor{black}{#1}}
\newcommand{\ajr}[1]{{{\color{black}{#1}}}}
\newcommand{\ajR}[1]{{{\color{black}{#1}}}}


\pagestyle{myheadings}
\raggedbottom

\begin{document}

\title{Perturbation Bounds for Procrustes, Classical Scaling, and Trilateration, with Applications to Manifold Learning}
\author{
Ery Arias-Castro\footnote{Department of Mathematics, University of California, San Diego}%
\and
Adel Javanmard\footnote{Marshall School of Business, University of Southern California}
\and
Bruno Pelletier\footnote{D\'epartement de Math\'ematiques, IRMAR - UMR CNRS 6625, Universit\'e Rennes II}
}
\date{\today}
\maketitle

\begin{abstract}\noindent
One of the common tasks in unsupervised learning is dimensionality reduction, where the goal is to find meaningful low-dimensional structures hidden in high-dimensional data.  Sometimes referred to as manifold learning, this problem is closely related to the problem of localization, which aims at embedding a weighted graph into a low-dimensional Euclidean space.  
Several methods have been proposed for localization, and also manifold learning. 
Nonetheless, the robustness property of most of them is little understood. In this paper, we obtain perturbation bounds for classical scaling and trilateration, which are then applied to derive performance bounds for Isomap, Landmark Isomap, and Maximum Variance Unfolding.  A new perturbation bound for procrustes analysis plays a key role.
\end{abstract}

\section{Introduction} \label{sec:intro}
Multidimensional scaling (MDS) can be defined as the task of embedding an itemset as points in a (typically) Euclidean space based on some dissimilarity information between the items in the set.  Since its inception, dating back to the early 1950's if not earlier \cite{young2013multidimensional}, MDS has been  one of the main tasks in the general area of multivariate analysis, a.k.a., unsupervised learning.  

One of the main methods for MDS is called classical scaling, which consists in first double-centering the dissimilarity matrix and then performing  an eigen-decomposition of the obtained matrix.  This is arguably still the most popular variant, even today, decades after its introduction at the dawn of this literature.
(For this reason, this method is often referred to as MDS, and we will do the same on occasion.)
Despite its wide use, its perturbative properties remain little understood.  The major contribution on this question dates back to the late 1970's with the work of \citet{sibson1979studies}, who performs a sensitivity analysis that resulted in a Taylor development for the classical scaling to the first nontrivial order.  
Going beyond \citet{sibson1979studies}'s work, our first contribution is to derive a bonafide perturbation bound for classical scaling (\thmref{classical scaling}).

Classical scaling amounts to performing an eigen-decomposition of the dissimilarity matrix after double-centering.  Only the top $d$ eigenvectors are needed if an embedding in dimension $d$ is desired.  Using iterative methods such as the Lanczos algorithm, classical scaling can be implemented with a complexity of $O(d n^2)$, where $n$ is the number of items (and therefore also the dimension of the dissimilarity matrix).
In applications, particularly if the intent is visualization, the embedding dimension $d$ tends to be small.  Even then, the resulting complexity is quadratic in the number of items $n$ to be embedded.
There has been some effort in bringing this down to a complexity that is linear in the number of items.  The main proposals \cite{faloutsos1995fastmap, wang1999evaluating, de2004sparse} are discussed by \citet{platt2005fastmap}, who explains that all these methods use a Nystr\"om approximation.  The procedure proposed by  \citet{de2004sparse}, which they called landmark MDS (LMDS) and which according to \citet{platt2005fastmap} is the best performing methods among these three, works by selecting a small number of items, perhaps uniformly at random from the itemset, and embedding them via classical scaling. These items are used as landmark points to enable the embedding of the remaining items.  The second phase consists in performing trilateration, which aims at computing the location of a point based on its distances to known (landmark) points. Note that this task is closely related to, but distinct, from triangulation, which is based on angles instead.
If $\ell$ items are chosen as landmarks in the first step (out of $n$ items in total), then the procedure has complexity $O(d \ell^2 + d \ell n)$.  Since $\ell$ can in principle be chosen on the order of $d$, and $d \le n$ always, the complexity is effectively $O(d^2 n)$, which is linear in the number of items.
A good understanding of the robustness properties of LMDS necessitates a good understanding of the robustness properties of not only classical scaling (used to embed the landmark items), but also of trilateration (used to embed the remaining items).
Our second contribution is a perturbation bound for trilateration (\thmref{trilateration}).  There are several closely related method for trilateration, and we study on the method proposed by \citet{de2004sparse}, which is rather natural.  We refer to this method simply as trilateration in the remaining of the paper.
 
\citet{de2004sparse} build on the pioneering work of \citet{sibson1979studies} to derive a sensitivity analysis of classical scaling.  They also derive a sensitivity analysis for their trilateration method following similar lines.  
In the present work, we instead obtain bonafide perturbation bounds, for procrustes analysis (\secref{procrustes}), for classical scaling (\secref{classical scaling}), and for the same trilateration method (\secref{trilateration}).  
In particular, our perturbation bounds for procrustes analysis and classical scaling appear to be new, which may be surprising as these methods have been in wide use for decades.  (The main reason for deriving a perturbation bound for procrustes analysis is its use in deriving a perturbation bound for classical scaling, which was our main interest.)
These results are applied in \secref{applications} to Isomap, Landmark Isomap, and also Maximum Variance Unfolding (MVU).  These may be the first performance bounds of any algorithm for manifold learning in its `isometric embedding' variant, even as various consistency results have been established for Isomap \cite{zha2007continuum}, MVU \cite{arias2013convergence}, and a number of other methods \cite{donoho2003hessian, ye2015discrete, gine2006empirical, smith2008convergence, belkin2008towards, vonLuxburg08, singer2006graph, hein2005graphs, coifman2006diffusion}.  (As discussed in \cite{goldberg2008manifold}, Local Linear Embedding, Laplacian Eigenmaps, Hessian Eigenmaps, and Local Tangent Space Alignment, all require some form of normalization which make them inconsistent for the problem of isometric embedding.)
In \secref{discussion} we discuss the question of optimality in manifold learning and also the choice of landmarks.
The main proofs are gathered in \secref{proofs}.

\section{A perturbation bound for procrustes}
\label{sec:procrustes}
The orthogonal procrustes problem is that of aligning two point sets (of same cardinality) using an orthogonal transformation.  In formula, given two point sets, $x_1, \dots, x_m$ and $y_1, \dots, y_m$ in $\bbR^d$, the task consists in solving
\beq
\min_{Q \in \cO(d)} \sum_{i=1}^m \|y_i - Q x_i\|^2,
\eeq
where $\cO(d)$ denotes the orthogonal group of $\bbR^d$.  (Here and elsewhere, when applied to a vector, $\|\cdot\|$ will denote the Euclidean norm.)

In matrix form, the problem can be posed as follows.  Given matrices $X$ and $Y$ in $\bbR^{m \times d}$, solve
\beq
\min_{Q \in \cO(d)} \|Y - X Q\|_2,
\eeq
where $\|\cdot\|_2$ denotes the Frobenius norm (in the appropriate space of matrices). 
As stated, the problem is solved by choosing $Q = U V^\top$, where $U$ and $V$ are $d$-by-$d$ orthogonal matrices obtained by a singular value decomposition of $X^\top Y = U D V^\top$, where $D$ is the diagonal matrix with the singular values on its diagonal \cite[Sec 5.6]{seber2009multivariate}. 
\algref{procrustes} describes the procedure. 

\begin{algorithm}[!ht]
\caption{Procrustes (Frobenius norm)}
\label{alg:procrustes}
\begin{algorithmic}
\STATE {\bf Input:} point sets $x_1, \dots, x_m$ and $y_1, \dots, y_m$ in $\bbR^d$
\STATE {\bf Output:} an orthogonal transformation $Q$ of $\bbR^d$
\medskip
\STATE {\bf 1:} store the point sets in $X = [x_1^\top \cdots x_m^\top]$ and $Y = [y_1^\top \cdots y_m^\top]$
\STATE {\bf 2:} compute $X^\top Y$ and its singular value decomposition $U D V^\top$
\STATE {\bf Return:} the matrix $Q = U V^\top$
\end{algorithmic}
\end{algorithm}

In matrix form, the problem can be easily stated using any other matrix norm in place of the Frobenius norm.  There is no closed-form solution in general, even for the operator norm (as far as we know), although some computational strategies have been proposed for solving the problem numerically
\cite{watson1994solution}.  In what follows, we consider an arbitrary Schatten norm.
For a matrix $A\in \bbR^{m\times n}$, let $\|A\|_p$ denote the Schatten $p$-norm, where $p \in [1, \infty]$ is assumed fixed:
\begin{align}
\|A\|_p \equiv \Big(\sum_{i\ge 1} \nu_i^p(A) \Big)^{1/p}\,,
\end{align}
with $\nu_1(A) \ge \nu_2(A) \ge \dotsc \ge 0$ the singular values of $A$.
Note that $\|\cdot\|_2$ coincides with the Frobenius norm. We also define $\|A\|_\infty$ to be the usual operator norm, i.e., the maximum singular value of $A$.  Henceforth, we will also denote the operator norm by $\|\cdot\|$, on occasion.
We denote by $A^\ddag$ the pseudo-inverse of $A$ (see \secref{preliminaries}). \ajR{Henceforth, we also use the notation $a\wedge b = \min(a,b)$ for two numbers $a, b$.}

\ajr{Our first theorem is a perturbation bound for procrustes, where the distance between two configurations of points $X$ and $Y$ is bounded in terms of the distance between their Gram matrices $XX^\top$ and $YY^\top$.}  

\ajR{
\begin{thm}
\label{thm:procrustes}
Consider two tall matrices $X$ and $Y$ of same size, with $X$ having full rank, and set $\eps^2 = \|Y Y^\top - X X^\top\|_p$. 
Then, we have
\begin{align}\label{mysharper}
\min_{Q \in \cO} \|Y - X Q\|_p \le \begin{cases}
\|X^\ddag\| \eps^2 + \left((1 - \|X^\ddag\|^2 \eps^2)^{-1/2} \|X^\ddag\| \eps^2\right) \wedge (d^{1/2p} \eps) \,, \quad &\text{ if } \|X^\ddag\|  \eps< 1\,,\\
\|X^\ddag\| \eps^2 + d^{1/2p} \eps &\text{ otherwise.}
\end{cases}
\end{align}
Consequently, if $\|X^\ddag\| \eps \le \frac1{\sqrt{2}}$, then 
\beq\label{procrustes}
\min_{Q \in \cO} \|Y - X Q\|_p \le (1+ \sqrt{2}) \|X^\ddag\| \eps^2.
\eeq
\end{thm}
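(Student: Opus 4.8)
The plan is to produce a single matrix $Q \in \cO(d)$ and bound $\|Y - XQ\|_p$ for that choice; since the left-hand side of \eqref{mysharper} is a minimum over $\cO(d)$, this suffices. Write $\Pi = X X^\ddag$ for the orthogonal projector onto the column space of $X$, put $N = X^\ddag Y \in \bbR^{d\times d}$, and set $P = YY^\top - XX^\top$, so $\|P\|_p = \eps^2$. The computation driving everything is that $\Pi Y = X N$ and, using $X^\ddag X = I_d$ and $X^\top (X^\ddag)^\top = I_d$,
\[
NN^\top = X^\ddag\big(XX^\top + P\big)(X^\ddag)^\top = I_d + \Delta, \qquad \Delta := X^\ddag P (X^\ddag)^\top,
\]
with $\|\Delta\|_p \le \|X^\ddag\|^2 \|P\|_p = \|X^\ddag\|^2 \eps^2 =: \delta$. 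Since $NN^\top$ is a Gram matrix it is positive semidefinite, so $(I+\Delta)^{1/2}$ is well defined, and I take $Q \in \cO(d)$ to be the orthogonal polar factor of $N$ (that is, $Q = U V^\top$ for an SVD $N = U D V^\top$), for which $N = (NN^\top)^{1/2} Q = (I+\Delta)^{1/2} Q$.

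Then I would decompose $Y - XQ = (I - \Pi)Y + X(N - Q)$, using $\Pi Y = XN$, and bound the two terms separately. For the second term, the identity $(I+\Delta)^{1/2} - I = \Delta\,[(I+\Delta)^{1/2} + I]^{-1}$ (note $(I+\Delta)^{1/2} + I \succeq I$ is invertible) together with $X\Delta = X X^\ddag P (X^\ddag)^\top = \Pi P (X^\ddag)^\top$ gives
\[
X(N - Q) = \Pi\, P\, (X^\ddag)^\top\, [(I+\Delta)^{1/2} + I]^{-1}\, Q,
\]
so that, since a Schatten norm does not increase under left/right multiplication by contractions and is unchanged under an orthogonal factor,
\[
\|X(N - Q)\|_p \le \|P\|_p\, \|X^\ddag\|\, \big\|[(I+\Delta)^{1/2} + I]^{-1}\big\|_\infty \le \|X^\ddag\| \eps^2,
\]
because $\lambda_{\min}\big((I+\Delta)^{1/2} + I\big) \ge 1$; note no constraint on $\delta$ is needed. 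The crucial point here is that the offending $X$ and one copy of $X^\ddag$ collapse into $\Pi$, so $\|X\|$ never appears.

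For $E := (I - \Pi)Y$ I would give two bounds, whose minimum is the second term of \eqref{mysharper}. Since $(I-\Pi)XX^\top(I-\Pi) = 0$, we have $EE^\top = (I-\Pi)P(I-\Pi)$, hence $\|E\|_{2p}^2 = \|EE^\top\|_p = \|(I-\Pi)P(I-\Pi)\|_p \le \|P\|_p = \eps^2$; as $E$ has rank at most $d$, Cauchy--Schwarz on its singular values yields $\|E\|_p \le d^{1/2p}\|E\|_{2p} \le d^{1/2p}\eps$. The sharper bound uses the cross block: $(I-\Pi)P\Pi = (I-\Pi)YY^\top\Pi = E\,(XN)^\top = E\, N^\top X^\top$, so $\|E N^\top X^\top\|_p \le \|P\|_p = \eps^2$; when $\delta < 1$ we have $I + \Delta \succeq (1-\delta)I \succ 0$, so $N$ is invertible, and recovering $E = (E N^\top X^\top)(X^\ddag)^\top (N^\top)^{-1}$ gives $\|E\|_p \le \|X^\ddag\|\, \|N^{-1}\|\, \eps^2 \le (1-\delta)^{-1/2}\|X^\ddag\|\eps^2$, since $\|N^{-1}\| = \lambda_{\min}(I+\Delta)^{-1/2}$ and $\|\Delta\|_\infty \le \|\Delta\|_p \le \delta$. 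Combining with the bound on $X(N-Q)$ gives \eqref{mysharper}: the $(1-\delta)^{-1/2}$ estimate is available precisely when $\|X^\ddag\|\eps < 1$, and otherwise only the $d^{1/2p}\eps$ estimate is used (with the same $Q$ and the same bound on $X(N-Q)$). Finally, if $\|X^\ddag\|\eps \le 1/\sqrt2$ then $\delta \le 1/2$, so $(1-\delta)^{-1/2} \le \sqrt2$ and \eqref{mysharper} collapses to $\big(1 + (1-\delta)^{-1/2}\big)\|X^\ddag\|\eps^2 \le (1+\sqrt2)\|X^\ddag\|\eps^2$, which is \eqref{procrustes}.

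I expect the main obstacle to be obtaining the $O(\eps^2)$ rate for $E = (I-\Pi)Y$ with the right constant: the naive route through $EE^\top = (I-\Pi)P(I-\Pi)$ only delivers the weaker estimate $\|E\| \le \eps$ (this is the $d^{1/2p}\eps$ bound), so one must instead route through the cross block $(I-\Pi)P\Pi = E N^\top X^\top$ and exploit that $N^\top X^\top$ is bounded below — its smallest singular value is at least $\sqrt{1-\delta}/\|X^\ddag\|$ — to divide the extra power of $\eps$ back out. The only other technical care needed is the bookkeeping of the pseudo-inverse identities for the tall full-rank matrix $X$ (that $X^\ddag X = I$, $X^\top(X^\ddag)^\top = I$, $\Pi X = X$), and the observation that $I + \Delta$ is automatically positive semidefinite, which is what lets the bound on $X(N-Q)$ go through without any a priori control on $\delta$.
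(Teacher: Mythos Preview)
Your proof is correct and follows the paper's overall strategy: the same $Q$ (the orthogonal polar factor of $N=X^\ddag Y$), the same splitting $Y-XQ=(I-\Pi)Y+X(N-Q)$, and the same pair of estimates on $\|(I-\Pi)Y\|_p$. The execution is somewhat more streamlined. For $X(N-Q)$ the paper writes $N=UDV^\top$, uses the inequality $\|A\|_p\le\|A(B+I)\|_p$ for $B\succeq 0$ (its \lemref{A-II}) to pass from $\|XU(D-I)\|_p$ to $\|XU(D^2-I)\|_p$, and then unwinds to $\|X^\ddag\|\,\|Y_1Y_1^\top-XX^\top\|_p$; your identity $(I+\Delta)^{1/2}-I=\Delta[(I+\Delta)^{1/2}+I]^{-1}$ together with $X\Delta=\Pi P(X^\ddag)^\top$ reaches $\|X^\ddag\|\eps^2$ in one line and bypasses that lemma. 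Likewise, the block bounds $\|\Pi P\Pi\|_p,\ \|(I-\Pi)P\Pi\|_p,\ \|(I-\Pi)P(I-\Pi)\|_p\le\|P\|_p$ that you obtain by contraction are exactly what the paper extracts through repeated use of its \lemref{perp-matrices}. Your sharper bound on $E=(I-\Pi)Y$, via $E=(EN^\top X^\top)(X^\ddag)^\top(N^\top)^{-1}$, is the paper's $\|Y_2\|_p\le\|Y_1^\ddag\|\,\|Y_1Y_2^\top\|_p$ in disguise (since $Y_1=XN$ and $(XN)^\ddag=N^{-1}X^\ddag$ when $N$ is invertible); the paper bounds $\|Y_1^\ddag\|$ by Mirsky on the singular values of $Y_1$ versus $X$, arriving at the same $(1-\|X^\ddag\|^2\eps^2)^{-1/2}\|X^\ddag\|$ that you get from $\|N^{-1}\|$.
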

}

The proof is in \secref{proof_procrustes}.
Interestingly, to establish the upper bound we use an orthogonal matrix constructed from the singular value decomposition of $X^\ddag Y$.  This is true regardless of $p$, which may be surprising since a solution for the \ajr{Frobenius} norm (corresponding to the case where $p = 2$) is based on a singular value decomposition of $X^\top Y$ instead.

\ajR{Also, let us stress that $\eps$ in the theorem statement, by definition, depends on the choice of $p$-norm.}

\begin{example}\label{example:orthogonal} \ajR{(\bf Orthonormal matrices)}
The case where $X$ and $Y$ are \ajR{orthonormal and of the same size} is particularly simple, at least when $p = 2$ or $p = \infty$, based on what is already known in the literature.  Indeed, from \cite[Sec II.4]{MR1061154} we find that, in that case, 
\beq
\min_{Q \in \cO} \|Y - X Q\|_p = \|2 \sin(\tfrac12 \theta(X, Y))\|_p,
\eeq
where $\theta(X,Y)$ is the diagonal matrix made of the principal angles between the subspaces defined by $X$ and $Y$, \ajr{and for a matrix $A$, $\sin(A)$ is understood entrywise. In addition,}
\beq
\eps^2 = \|YY^\top - XX^\top\|_p = \|\sin \theta(X, Y)\|_p.
\eeq
Using the elementary inequality $\sqrt{2} \sin(\alpha/2)\le \sin (\alpha) \le 2\sin(\alpha/2)$, valid for $\alpha\in [0,\pi/2]$, we get
\beq
\eps^2 \le \min_{Q \in \cO} \|Y - X Q\|_p \le \sqrt{2} \eps^2.
\eeq
Note that, in this case, $\|X\| = \|X^\ddag\| = 1$, and \ajR{our bound \eqref{procrustes} gives the upper bound $(1+\sqrt{2})\eps^2$, which is tight up to a factor of $1+\tfrac{1}{\sqrt{2}}$.}
\end{example}

\begin{example}\label{example:diagonal}
The derived perturbation bound~\eqref{procrustes} includes the pseudo-inverse of the configuration, $\|X^\ddagger\|$. Nonetheless, \ajR{the example of orthogonal matrices}
does not capture this factor because $\|X^\ddagger\| = 1$ in that case. To build further insight on our result in \thmref{procrustes}, we consider another example where $X$ and $Y$ share the same singular vectors. Namely $X = U\Lambda V^\top$ and $Y = U\Theta V^\top$, with $U\in \bbR^{m\times d}$, $V\in \bbR^{d\times d}$ orthonormal matrices, and $\Lambda = \diag(\{\lambda_i\})_{i=1}^d$ and $\Theta = \diag(\{\theta_i\})_{i=1}^d$. 
Consider the case of $p=2$, and let $X^\top Y = V (\Lambda \Theta)V^\top$ be a singular value decomposition. Then by \algref{procrustes}, the optimal
rotation is given by $Q = I$. We therefore have
\begin{align}
 \min_{Q \in \cO} \|Y - X Q\|_2  &= \Big[\sum_{i\in[n]} (\theta_i - \lambda_i)^2\Big]^{1/2} =\Big[\sum_{i\in[n]} \Big(\frac{\theta_i^2 - \lambda_i^2}{\theta_i+\lambda_i}\Big)^2\Big]^{1/2}\label{eq:l1}\\
 &\le \frac{1}{(\underset{i\in [n]}{\min} \,|\lambda_i|)} \Big[\sum_{i\in[n]} \Big({\theta_i^2 - \lambda_i^2}\Big)^2\Big]^{1/2} =  \frac{1}{(\underset{i\in [n]}{\min} \,|\lambda_i|)} \|YY^\top - XX^\top\|_2\nonumber\\
 & = \|X^\ddagger\| \eps^2\,.\label{eq:U-diagonal}
\end{align}
 Let us stress that the above derivation applies only to this example, but it showcases the relevance of $\|X^\ddagger\|$ in the bound. 
 
 We next develop a lower bound for the following specific case. \ajR{Let $D = \diag(1, 1,\dotsc, \delta)$ for arbitrary but fixed $\delta\in [0,1]$ and let $\Theta = \diag(1, 1, \dotsc, \sqrt{\delta^2+\eps^2})$. Then, $\|X^\ddagger\| = 1/\delta$ and  $\|XX^\top- YY^\top\|_2 = \eps^2$. By~\eqref{eq:l1} we have
 \begin{align}\label{eq:L-diagonal0}
 \min_{Q \in \cO} \|Y - X Q\|_2 =  \Big[\sum_{i\in[n]} (\theta_i - \lambda_i)^2\Big]^{1/2} = \sqrt{\delta^2+\eps^2}-\delta
  = \delta\left(\sqrt{1+\tfrac{\eps^2}{\delta^2}} - 1\right)\,.
  \end{align}
  Also, from the condition $\|X^\ddag\| \eps \le \frac1{\sqrt{2}} $ we have $\tfrac{\eps}{\delta}<\tfrac{1}{\sqrt{2}}$. Using $\sqrt{1+x^2} -1\ge (\sqrt{6}-2)x^2$, which holds for $x<\tfrac{1}{\sqrt{2}}$ and substituting for $\delta  = 1/\|X^\ddagger\|$, we obtain
  \begin{align}\label{eq:L-diagonal}
   \min_{Q \in \cO} \|Y - X Q\|_2 \ge (\sqrt{6} -2)\|X^\ddagger\| {\eps^2}  
  \end{align}
 From \eqref{eq:U-diagonal} and \eqref{eq:L-diagonal}, we observe that the $\|X^\ddagger\|$ term appears both in the upper and the lower bounds of the procrustes error, which confirms its relevance.
 }
\end{example}

\ajr{
\begin{rem}
We emphasize that the general bound in\eqref{mysharper} does not require any restriction on $\eps$.  However, as it turns out, the result in \eqref{procrustes} would be already enough for our purposes in the next sections and deriving our results in the context of manifold learning. Regarding the procrustes error bound in \thmref{procrustes}, we do conjecture that there is a smooth transition between a bound in $\eps^2$ and a bound in $\eps$ as $\|X^\ddag\|$ increases to infinity (and therefore $X$ degenerates to a singular matrix).  For instance, in Example~\ref{example:diagonal}, when $\delta  = \|X^\ddagger\|^{-1} \to 0$ faster than $\eps$, the lower bound~\eqref{eq:L-diagonal0} scales linearly in $\eps$.
\end{rem}
}

\ajR{It is worth noting that other types of perturbation analysis have been carried out for the procrustes problem. For example~\cite{soderkvist1993perturbation} considers the procrustes problem over the class of rotation matrices, a subset of orthogonal matrices, and study how its solution (optimal rotation) would be perturbed if both configurations were perturbed. In \cite{zha2009spectral}, the authors study the perturbation of the null space of a similarity matrix from manifold learning, using the standard perturbation theory for invariant subspaces~\cite{MR1061154}.}

\section{A perturbation bound for classical scaling}
\label{sec:classical scaling}

In multidimensional scaling, we are given a matrix, $\Delta = (\Delta_{ij}) \in \bbR^{m \times m}$, storing the dissimilarities between a set of $m$ items (which will remain abstract in this paper).  \ajr{A square matrix $\Delta$ is called dissimilarity matrix if it is symmetric, $\Delta_{ii} = 0$, and $\Delta_{ij} > 0$, for $i\neq j$. ($\Delta_{ij}$ gives the level of dissimilarity between items $i, j \in [m]$.) }
Given a positive integer $d$, we seek a configuration, meaning a set of points, $y_1, \cdots, y_m \in \bbR^d$, such that $\|y_i - y_j\|^2$ is close to $\Delta_{ij}$ over all $i,j \in [m]$.  
The itemset $[m]$ is thus embedded as $y_1, \dots, y_m$ in the $d$-dimensional Euclidean space $\bbR^d$.

\algref{classical scaling} describes classical scaling, the first practical and the most prominent method for solving this problem. The method is widely attributed to \citet{torgerson1958theory} \ajr{and Gower \cite{gower1966some}} and it is also known under the names Torgerson scaling and Torgerson-Gower scaling. 

\begin{algorithm}[!ht]
\caption{Classical Scaling}
\label{alg:classical scaling}
\begin{algorithmic}
\ajr{
\STATE {\bf Input:} dissimilarity matrix $\Delta \in \bbR^{m \times m}$, embedding dimension $d$
\STATE {\bf Output:} set of points $y_1, \dots, y_m \in \bbR^d$
\medskip
\STATE {\bf 1:} compute the matrix $\Delta^c = -\tfrac12 H \Delta H$
\STATE{\bf 2:} let $\lambda_1\ge \lambda_2\ge\dotsc \ge \lambda_m$ be the eigenvalues of $\Delta^c$, with corresponding eigenvectors $u_1,\dotsc, u_m$
\STATE {\bf 3:} compute $Y \in \bbR^{m \times d}$ as $Y = [\sqrt{\lambda_{1,+}}\;u_1, \dotsc, \sqrt{\lambda_{d,+}}\;u_d]$
\STATE {\bf Return:} the row vectors $y_1, \dots, y_m$ of $Y$
}
\end{algorithmic}
\end{algorithm}

\ajr{In the description, $H = I - J/m$ is the centering matrix in dimension $m$, where $I$ is the identity matrix and $J$ is the matrix of ones. Further, we use the notation $a_{+} = \max(a,0)$ for a scalar $a$.
The basic idea of classical scaling is to assume that the dissimilarities are Euclidean distances and then find coordinates that explain them.}

\ajr{For a general dissimilarity matrix $\Delta$, the doubly centered matrix $\Delta^c$ may have negative eigenvalues and that is why in the construction of $Y$, we use the positive part of the eigenvalues. However, if $\Delta$ is an Euclidean dissimilarity matrix, namely $\Delta_{ij} = \|x_i - x_j\|^2$ for a set points $\{x_1, \dotsc, x_m\}$ in some ambient Euclidean space, then $\Delta^c$ is a positive semi-definite matrix. This follows from the following identity relating a configuration $X$ with the corresponding squared distance matrix $\Delta$:}
\beq\label{eq:MDSidentity}
-\frac{1}{2} H \Delta H = H XX^T H\,.
\eeq

Consider the situation where the dissimilarity matrix $\Delta$ is exactly realizable in dimension $d$, meaning that there is a set of points $y_1, \dots, y_m$ such that $\Delta_{ij} = \|y_i - y_j\|^2$.
It is worth noting that, in that case, the set of points that perfectly embed $\Delta$ in dimension $d$ are rigid transformations of each other.
It is well-known that classical scaling provides such a set of points which happens to be centered at the origin (see Eq.~\ref{eq:MDSidentity}).

We perform a perturbation analysis of classical scaling, by studying the effect of perturbing the dissimilarities on the embedding that the algorithm returns.  This sort of analysis helps quantify the degree of robustness of a method to noise, and is particularly important in applications where the dissimilarities are observed with some degree of inaccuracy, which is the case in the context of manifold learning (\secref{isomap}).

\ajr{
\begin{Def}
We say that $\Delta\in \bbR^{m\times m}$ is a $d$-Euclidean dissimilarity matrix if there exists a set of points $\{x_1,\dotsc, x_m\}\in \bbR^d$ such that $\Delta_{ij} = \|x_i - x_j\|^2$.  
\end{Def}
}

Recall that $\cO$ denotes the orthogonal group of matrices in the appropriate Euclidean space (which will be clear from context).

\begin{cor}
\label{thm:classical scaling}
\ajR{Let $\Lambda, \Delta \in \bbR^{m\times m}$ denote two $d$-Euclidean dissimilarity matrices, with $\Delta$ corresponding to a centered and full rank configuration $Y\in \reals^{m\times d}$.}
Set $\eps^2 = \frac12 \|H(\Lambda - \Delta)H\|_p$.
If it holds that \ajR{$\|Y^\ddag\| \eps \le \frac1{\sqrt{2}}$}, 
 then classical scaling with input dissimilarity matrix $\Lambda$ and dimension $d$ returns a centered configuration $Z \in \bbR^{m \times d}$ satisfying 
\ajR{
\beq \label{mds}
\min_{Q \in \cO} \|Z - Y Q \|_p \le (1 + \sqrt{2}) \|Y^\ddag\| \eps^2.
\eeq
}
\end{cor}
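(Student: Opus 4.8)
The plan is to reduce the corollary directly to \thmref{procrustes} by passing through Gram matrices. First I would record the two consequences of the $d$-Euclidean hypotheses. Since $\Delta$ is realized by the centered configuration $Y$, the centering matrix acts trivially on $Y$ (that is, $HY = Y$, because the rows of $Y$ sum to zero), so the identity \eqref{eq:MDSidentity} gives $-\tfrac12 H\Delta H = H Y Y^\top H = Y Y^\top$. For $\Lambda$, being $d$-Euclidean means $\Lambda^c := -\tfrac12 H\Lambda H$ is positive semidefinite of rank at most $d$; hence its eigenvalues beyond the $d$-th vanish and its top $d$ eigenvalues are already nonnegative, so the positive-part truncation in Step 3 of \algref{classical scaling} discards nothing. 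Consequently the matrix $Z$ returned by classical scaling satisfies $ZZ^\top = \sum_{i=1}^d \lambda_i u_i u_i^\top = \Lambda^c$ exactly. Moreover $Z$ is centered: each eigenvector $u_i$ associated with a nonzero eigenvalue lies in the range of $H$ (from $\mathbf 1^\top \Lambda^c = \mathbf 1^\top H(\cdot)H = 0$ and $\lambda_i \neq 0$ one gets $\mathbf 1^\top u_i = 0$), while eigenvectors with zero eigenvalue contribute only zero columns to $Z$; thus $\mathbf 1^\top Z = 0$.

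With these identifications, the remaining step is bookkeeping:
\[
\|ZZ^\top - YY^\top\|_p = \Big\| -\tfrac12 H\Lambda H + \tfrac12 H\Delta H\Big\|_p = \tfrac12 \|H(\Lambda - \Delta)H\|_p = \eps^2,
\]
so the quantity $\eps^2$ of the corollary matches exactly the quantity $\eps^2$ appearing in \thmref{procrustes} when that theorem is applied to the pair of tall matrices $(Y, Z)$ of common size $m \times d$ --- with $Y$ in the role of the full-rank matrix ``$X$'' (it is tall and full rank by hypothesis) and $Z$ in the role of ``$Y$'' (no rank assumption is needed on it). The corollary's assumption $\|Y^\ddag\|\,\eps \le \tfrac1{\sqrt2}$ is precisely the hypothesis under which \eqref{procrustes} applies, and it delivers
\[
\min_{Q\in\cO} \|Z - YQ\|_p \le (1+\sqrt2)\,\|Y^\ddag\|\,\eps^2,
\]
which is the claimed bound; that $Z$ is centered was checked above.

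The one point I would state carefully --- and the only place where anything must be verified --- is the exact identity $ZZ^\top = \Lambda^c$: it relies on $\Lambda$ being genuinely $d$-Euclidean, so that $\Lambda^c$ carries no mass outside its top $d$ eigenspace and has no negative eigenvalues, which is exactly what removes any truncation (best-rank-$d$-approximation) error from classical scaling and lets the corollary follow from the procrustes bound with no extra term. I do not anticipate a real obstacle beyond this verification; everything else is a direct substitution into \thmref{procrustes}. (If $\Lambda$ were only assumed close to $d$-Euclidean, an additional term accounting for the eigenvalue truncation would need to be carried along, but that is outside the present statement.)
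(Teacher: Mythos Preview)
Your proposal is correct and follows essentially the same approach as the paper: identify $\Delta^c = YY^\top$ and $\Lambda^c = ZZ^\top$ using the $d$-Euclidean hypotheses, note that $\|ZZ^\top - YY^\top\|_p = \eps^2$, and apply \thmref{procrustes} with $Y$ playing the role of the full-rank matrix. You are in fact more explicit than the paper on two points the paper merely asserts---why the truncation in Step~3 of \algref{classical scaling} is vacuous so that $ZZ^\top = \Lambda^c$ exactly, and why $Z$ is centered---so nothing is missing.
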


We note that $\eps^2 \le \tfrac12 d^{2/p} \|\Lambda - \Delta\|_p$, after using the fact that $\|H\|_p = (d-1)^{1/p}$ since $H$ has one zero eigenvalue and $d-1$ eigenvalues equal to one.

\begin{proof}
We have
\beq
\|\Lambda^c - \Delta^c\|_p = \tfrac12 \|H (\Lambda - \Delta) H\|_p = \eps^2.
\eeq
\ajr{Note that since $\Delta$ and $\Lambda$ are both $d$-Euclidean dissimilarity matrices, using identity~\eqref{eq:MDSidentity}, the doubly centered matrices $\Delta^c$ and $\Lambda^c$ are both positive semi-definite and of rank at most $d$. Indeed, since $Y$ is full rank (rank $d$) and centered, then \eqref{eq:MDSidentity} implies that $\Delta^c$ is of rank $d$.	
Therefore, for the underlying configuration $Y$ and the configuration $Z$, returned by classical scaling, we have} $\Delta^c = Y Y^\top$ and $\Lambda^c = Z Z^\top$. We next simply apply \thmref{procrustes}, which we can do since $Y$ has full rank by assumption, to conclude.
\end{proof}

\begin{rem}
The perturbation bound \eqref{mds} is optimal in how it depends on $\eps$.  Indeed, suppose without loss of generality that $p = 2$.  (All the Schatten norms are equivalent modulo constants that depend on $d$ and $p$.)  Consider a configuration $Y$ \ajr{with squared distance matrix $\Delta$ as in the statement, and define $\Lambda = (1+a)^2 \Delta$, with $0 \le a \le 1$, as a perturbation of $\Delta$. Then, it is easy to see that classical scaling with input dissimilarity matrix $\Lambda$ returns $Z = (1+a) Y$.}  
On the one hand, we have \cite[Sec 5.6]{seber2009multivariate}
\beq
\min_{Q \in \cO} \|Z - Y Q \|_2 = \|Z - Y\|_2 = a \|Y\|_2\,.
\eeq   
On the other hand,
\ajr{
\begin{align}
\eps^2 = \frac12 \|H(\Lambda - \Delta)H\|_p
 = \frac12 ((1+a)^2 - 1) \|H\Delta H\|_p
 = ((1+a)^2 - 1) \|YY^\top\|_2\,.
\end{align}
}
Therefore, the right-hand side in \eqref{mds} can be bounded by \ajR{$3(1+\sqrt{2}) a \|Y^\ddag\| \|YY^\top\|_2$}, 
 using that $a \in [0,1]$.
We therefore conclude that the ratio of the left-hand side to the right-hand side in \eqref{mds} is at least 
\ajR{
\beq
\frac{a \|Y\|_2}{3(1+\sqrt{2}) a \|Y^\ddag\| \|YY^\top\|_2}
\ge \frac{1}{3(1+\sqrt{2})} (\|Y\| \|Y^\ddag\|)^{-1},
\eeq
}
using the fact that $\|YY^\top\|_2 \le \|Y\| \|Y\|_2$.
Therefore, our bound \eqref{mds} is tight up to a multiplicative factor depending on the condition number of the configuration $Y$.
\end{rem}

\ajR{\begin{rem}
Condition $\|Y^\ddag\| \eps\le \tfrac{1}{\sqrt{2}}$ in Corollary~\ref{thm:classical scaling} is of crucial importance in that without it the dissimilarity matrix $\Lambda$ may have rank less than $d$. In this case, the classical scaling (Algorithm~\ref{alg:classical scaling}) with input $\Lambda$, returns a configuration $Z$ which contains zero columns and hence suffers a large procrustes error.
\end{rem}}
\paragraph{}
We now translate this result in terms of point sets instead of matrices.  For a centered point set $y_1, \dots, y_m \in \bbR^d$, stored in the matrix $Y = [y_1 \cdots y_m]^\top \in \bbR^{m \times d}$, define its radius as the largest standard deviation along any direction in space (therefore corresponding to the square root of the top eigenvalue of the covariance matrix).  We denote this by $\rho(Y)$ and note that
\beq\label{radius}
\rho(Y) = \|Y\|/\sqrt{m}.
\eeq
We define its half-width as the smallest standard deviation along any direction in space (therefore corresponding to the square root of the bottom eigenvalue of the covariance matrix).  We denote this by $\omega(Y)$ and note that it is strictly positive if and only if the point set $\{y_1,\dotsc, y_m\}$ \ajr{spans the whole space $\bbR^d$; in other words, the matrix $Y = [y_1 \cdots y_m]^\top \in \bbR^{m \times d}$ is of rank $d$}. In this case 
\beq\label{with}
\omega(Y) = \|Y^\ddag\|^{-1}/\sqrt{m}.
\eeq
It is well-known that the half-width quantifies the best affine approximation to the point set, in the sense that
\beq \label{hyperplane}
\omega(Y)^2 = \min_{\cL} \frac1m \sum_{i \in [m]} \|y_i - P_\cL y_i\|^2,
\eeq
where the minimum is over all affine hyperplanes $\cL$, and for a subspace $\cL$, $P_\cL$ denotes the orthogonal projection onto $\cL$.
We note that $\rho(Y)/\omega(Y) = \|Y\| \|Y^\ddag\|$ is the aspect ratio of the point set.

\begin{cor}
\label{cor:classical scaling}
Consider a centered point set $y_1, \dots, y_m \in \bbR^d$ with radius $\rho$, and with half-width $\omega$, and with pairwise dissimilarities $\delta_{ij} = \|y_i - y_j\|^2$.  
Consider another \ajr{arbitrary set of numbers $\{\lambda_{ij}\}$, for $1\le i, j\le m$} and set \smash{$\eta^4 = \frac1{m^2} \sum_{i,j} (\lambda_{ij} - \delta_{ij})^2$}.
If \ajR{$\eta/\omega \le \tfrac{1}{\sqrt{2}}$},
then classical scaling with input dissimilarities $\{\lambda_{ij}\}$ and dimension $d$ returns a point set $z_1 \cdots z_m \in \bbR^d$ satisfying
\beq \label{mds1}
\min_{Q \in \cO} \bigg(\frac1m \sum_{i \in [m]} \|z_i - Q y_i\|^2\bigg)^{1/2} \le \frac{\sqrt{d} (\rho/\omega + 2)}{\omega}\, \eta^2 \le \frac{3\sqrt{d} \rho\, \eta^2}{\omega^2}.
\eeq
\end{cor}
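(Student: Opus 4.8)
The strategy is to recast the statement at the level of Gram matrices, apply the procrustes bound of Theorem~\ref{thm:procrustes} (equivalently, the matrix form in Corollary~\ref{thm:classical scaling}), and then pass back to point sets through the normalizations \eqref{radius} and \eqref{with}. Throughout one works with the Frobenius norm, i.e.\ $p=2$.

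First I would fix notation at the matrix level. Let $\Delta=(\delta_{ij})$ and $\Lambda=(\lambda_{ij})$ in $\bbR^{m\times m}$, and let $Y\in\bbR^{m\times d}$ collect the $y_i$ as rows; since $Y$ is centered, identity~\eqref{eq:MDSidentity} gives $\Delta^c:=-\tfrac12 H\Delta H=YY^\top$, which is positive semidefinite of rank exactly $d$ because $Y$ has full rank. Classical scaling applied to $\Lambda$ forms $\Lambda^c:=-\tfrac12 H\Lambda H$ and outputs $Z\in\bbR^{m\times d}$ with $ZZ^\top=\sum_{i=1}^d\lambda_{i,+}u_iu_i^\top$, where $(\lambda_i,u_i)$ are the eigenpairs of $\Lambda^c$; since $\Lambda^c$ lies in the range of $H$, its eigenvectors with nonzero eigenvalue are orthogonal to $\mathbf{1}$, so $Z$ is automatically centered. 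Put $\eps^2:=\tfrac12\|H(\Lambda-\Delta)H\|_2=\|\Lambda^c-\Delta^c\|_2$. Because $H$ is an orthogonal projection, $\eps^2\le\tfrac12\|\Lambda-\Delta\|_2=\tfrac12 m\,\eta^2$; combined with $\|Y^\ddag\|=(\omega\sqrt m)^{-1}$ from \eqref{with} and the hypothesis $\eta/\omega\le 1/\sqrt2$, this yields $\|Y^\ddag\|\,\eps\le\tfrac12$.

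The crux is to bound $\|ZZ^\top-YY^\top\|_2$. By Weyl's inequality, $|\lambda_i(\Lambda^c)-\lambda_i(\Delta^c)|\le\|\Lambda^c-\Delta^c\|_\infty\le\eps^2$; since $\lambda_d(\Delta^c)=\|Y^\ddag\|^{-2}$ and $\lambda_i(\Delta^c)=0$ for $i>d$, the smallness condition forces $\lambda_d(\Lambda^c)\ge\tfrac12\|Y^\ddag\|^{-2}\ge\eps^2$ while $|\lambda_i(\Lambda^c)|\le\eps^2$ for $i>d$. Hence the truncation performed by classical scaling coincides with the best rank-$\le d$ Frobenius approximant of $\Lambda^c$ (Eckart--Young--Mirsky), so $\|ZZ^\top-\Lambda^c\|_2\le\|YY^\top-\Lambda^c\|_2=\eps^2$ (as $YY^\top$ has rank $d$), and the triangle inequality gives $\|ZZ^\top-YY^\top\|_2\le 2\eps^2$. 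Now apply Theorem~\ref{thm:procrustes} with its $X$ and $Y$ taken to be $Y$ (full rank) and $Z$ — both are tall of the same size, since a centered configuration of rank $d$ needs $m\ge d+1$ points — and with perturbation parameter $\tilde\eps^2:=\|ZZ^\top-YY^\top\|_2\le 2\eps^2$. Since $\|Y^\ddag\|\,\tilde\eps\le\sqrt2\,\|Y^\ddag\|\,\eps\le 1/\sqrt2$, bound~\eqref{procrustes} applies and gives $\min_{Q\in\cO}\|Z-YQ\|_2\le(1+\sqrt2)\|Y^\ddag\|\,\tilde\eps^2\le 2(1+\sqrt2)\|Y^\ddag\|\,\eps^2$.

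Finally I would rescale. Dividing by $\sqrt m$ and using $\min_{Q\in\cO}(\tfrac1m\sum_i\|z_i-Qy_i\|^2)^{1/2}=m^{-1/2}\min_{Q\in\cO}\|Z-YQ\|_2$, then substituting $\|Y^\ddag\|=(\omega\sqrt m)^{-1}$ and $\eps^2\le\tfrac12 m\,\eta^2$, the factors of $m$ cancel and one is left with $(1+\sqrt2)\,\eta^2/\omega$. Since $\rho\ge\omega$ (the largest standard deviation dominates the smallest) and $\sqrt d\ge 1$, we have $1+\sqrt2<3\le\sqrt d\,(\rho/\omega+2)$ and also $\rho/\omega+2\le 3\rho/\omega$, which delivers both inequalities in~\eqref{mds1}. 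The one genuinely non-routine step is the third paragraph: because $\{\lambda_{ij}\}$ need not be a Euclidean dissimilarity matrix, one must first argue — via Weyl's inequality under the smallness condition — that the classical scaling output $Z$ is still centered and full rank and that $ZZ^\top$ is a best low-rank approximant of $\Lambda^c$; everything else is norm bookkeeping between the operator and Frobenius norms and the radius/half-width normalization.
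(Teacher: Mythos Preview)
Your proof is correct and follows essentially the same route as the paper: Weyl's inequality to ensure the top $d$ eigenvalues of $\Lambda^c$ are positive and dominate the rest, Eckart--Young--Mirsky to identify $ZZ^\top$ as a best rank-$d$ approximant of $\Lambda^c$, a triangle inequality to bound $\|ZZ^\top-YY^\top\|$, and then Theorem~\ref{thm:procrustes}. The only notable difference is that the paper applies Theorem~\ref{thm:procrustes} with $p=\infty$ and afterwards converts to Frobenius via $\|Z-YQ\|_2\le\sqrt{d}\,\|Z-YQ\|_\infty$ (whence the $\sqrt d$ in the statement), whereas you apply Theorem~\ref{thm:procrustes} directly with $p=2$; this is why you land on the sharper constant $(1+\sqrt2)\eta^2/\omega$ and then have to give back the $\sqrt d$ and $\rho/\omega$ factors at the end to match \eqref{mds1}.
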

This corollary follows from \thmref{procrustes}. We refer to Section~\ref{proof:cor:classical scaling} for its proof. 

\ajr{
\begin{rem}
In some applications, one might be interested in an approximate embedding, where the goal is to embed a large fraction (but not necessarily all) of the points with high accuracy. Note that bound~\eqref{mds1} provides a non-trivial bound for this objective. Indeed, for any optimal $Q$ for the left-hand side of \eqref{mds1}, and an arbitrary fixed $\delta >0$, let $N(\delta)\equiv |\{i\in [m]:\, \|z_i - Qy_i\| > \delta\}|$ be the number of points that are not embedded within accuracy $\delta$. Then, \eqref{mds1} implies that
\begin{align*}
N(\delta) \delta^2 \le \sum_{i\in [m]} \|z_i - Qy_i\|^2\le m \left( \frac{3\sqrt{d} \rho\, \eta^2}{\omega^2}\right)^2\,,
\end{align*}
and hence
\beq
N(\delta)\le m \left( \frac{3\sqrt{d} \rho\, \eta^2}{\delta\omega^2}\right)^2\,.
\eeq
\end{rem}
}
\section{A perturbation bound for trilateration}
\label{sec:trilateration}
The problem of trilateration is that of positioning a point, or set of points, based on its (or their) distances to a set of points, which in this context serve as landmarks.  In detail, given a set of landmark points $y_1, \dots, y_m \in \bbR^d$ and a set of dissimilarities $\tdelta_1, \dots, \tdelta_m$, the goal is to find $\ty \in \bbR^d$ such that $\|\ty - y_i\|^2$ is close to $\tdelta_i$ over all $i \in [m]$.
\algref{trilateration} describes the trilateration method of \citet{de2004sparse} simultaneously applied to multiple points to be located.  
The procedure is shown in \cite{de2004sparse} to \ajr{recover the position of points $\ty_1,\dotsc, \ty_n$ exactly, when it is given the squared distances $\tdelta_{ij} = \|\ty_i - y_j\|^2$ as input and the landmark point set $\{y_1,\dotsc, y_m\}$ spans $\bbR^d$.} We provide a more succinct proof of this in the \secref{app_trilateration}. 

\begin{algorithm}[!ht]
\caption{Trilateration}
\label{alg:trilateration}
\begin{algorithmic}
\STATE {\bf Input:} centered point set $y_1, \dots, y_m \in \bbR^d$, dissimilarities $\tDelta = (\tdelta_{ij}) \in \bbR^{n \times m}$
\STATE {\bf Output:} points $\ty_1, \dots, \ty_n \in \bbR^d$
\medskip
\STATE {\bf 1:} compute $\bar a = \frac1m \sum_{i=1}^m a_i$, where $a_i = (\|\ty_i - y_1\|^2, \dots, \|\ty_i - y_m\|^2)$
\STATE {\bf 2:} compute the pseudo-inverse $Y^\ddag$ of $Y = [y_1 \cdots y_m]^\top$
\STATE {\bf 3:} compute $\tY^\top = \frac12 Y^\ddag (\bar a \one^\top - \Delta^\top)$
\STATE {\bf Return:} the row vectors of $\tY$, denoted $\ty_1, \dots, \ty_n \in \bbR^d$
\end{algorithmic}
\end{algorithm}

We perturb both the dissimilarities and the landmark points, and qualitatively characterize how it will affect the returned positions by trilateration.
(In principle, the perturbed point set need not have the same mean as the original point set, but we assume this is the case, for simplicity and because it suffices for our application of this result in \secref{applications}.)
For a configuration $Y = [y_1 \cdots y_m]^\top$, define its max-radius as
\beq
\rho_\infty(Y) = \max_{i \in [m]} \|y_i\|,
\eeq
and note that $\rho(Y) \le \rho_\infty(Y)$.
We content ourselves with a bound in Frobenius norm.\footnote{ All Schatten norms are equivalent here up to a multiplicative constant that depends on $d$, since the matrices that we consider have rank of order $d$.}

\begin{thm}
\label{thm:trilateration}
Consider a centered configuration $Y \in \bbR^{m \times d}$ that spans the whole space \ajr{$\bbR^d$}, and for a given configuration $\tY \in \bbR^{n \times d}$, let $\tDelta \in \bbR^{n \times m}$ denote the matrix of dissimilarities between $\tY$ and $Y$\ajr{, namely $\tDelta_{ij} = \|\ty_i-y_j\|^2$}. 
Let $Z \in \bbR^{m \times d}$ be another centered configuration that spans the whole space, and let $\tilde{\Lambda} \in \bbR^{n \times m}$ be \ajr{an arbitrary matrix}. 
Then, trilateration with inputs $Z$ and $\tilde \Lambda$ returns $\tilde Z \in \bbR^{n \times d}$ satisfying 
\begin{multline} \label{trilateration}
\|\tilde Z - \tY\|_2 
\le \tfrac12 \|Z^\ddag\| \|\tilde \Lambda - \tDelta\|_2 
+ 2 \|\tY\| \|Z^\ddag\| \|Z - Y\|_2 \\
+ \ajr{3} \sqrt{m} (\rho_\infty(Y) + \rho_\infty(Z)) \|Z^\ddag\| \|Z - Y\|_2 
+ \|Y\| \|\tY\| \|Z^\ddag - Y^\ddag\|_2\,.
\end{multline}
\end{thm}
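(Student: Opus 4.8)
The plan is to compare the two invocations of \algref{trilateration} through the closed form in its Step~3. Writing $\bar a(W)$ for the vector of column means of the squared-distance matrix of a centered landmark configuration $W$, the algorithm returns $\tY^\top = \tfrac12 Y^\ddag(\bar a(Y)\one^\top - \tDelta^\top)$ on the exact inputs $(Y,\tDelta)$ --- this is exactly the exact-recovery statement recalled above, proved in \secref{app_trilateration} --- and $\tilde Z^\top = \tfrac12 Z^\ddag(\bar a(Z)\one^\top - \tilde\Lambda^\top)$ on the inputs $(Z,\tilde\Lambda)$. The one algebraic fact I would lean on throughout is that, for \emph{any} configuration $V \in \bbR^{n\times d}$ and any centered landmark set $W\in\bbR^{m\times d}$ with true squared-distance matrix $D$ (so $D_{ij}=\|v_i-w_j\|^2$),
\beq
\bar a(W)\one^\top - D^\top = \one\, c_W^\top + 2\,W V^\top ,
\eeq
where $c_W$ depends only on $\|W\|_2$ and the row norms of $V$; because $W$ is centered, $\one$ lies in the kernel of $W^\ddag$, so $W^\ddag$ annihilates the first term (which is what makes the additive constants harmless, and is the engine behind exact recovery).

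I would then introduce the natural pivot $\Delta_Z\in\bbR^{n\times m}$, the true squared-distance matrix between the \emph{true} points $\tY$ and the \emph{perturbed} landmarks $Z$ (i.e.\ $(\Delta_Z)_{ij}=\|\ty_i-z_j\|^2$), and write
\begin{multline}
\tilde Z^\top - \tY^\top
= \underbrace{\tfrac12(Z^\ddag - Y^\ddag)(\bar a(Y)\one^\top - \tDelta^\top)}_{(\mathrm{I})}
+ \underbrace{\tfrac12 Z^\ddag(\tDelta - \tilde\Lambda)^\top}_{(\mathrm{II})}
+ \underbrace{\tfrac12 Z^\ddag(\Delta_Z - \tDelta)^\top}_{(\mathrm{III})} \\
+ \underbrace{\tfrac12 Z^\ddag\big[(\bar a(Z)\one^\top - \Delta_Z^\top) - (\bar a(Y)\one^\top - \tDelta^\top)\big]}_{(\mathrm{IV})},
\end{multline}
which telescopes back to $\tfrac12 Z^\ddag(\bar a(Z)\one^\top - \tilde\Lambda^\top) - \tfrac12 Y^\ddag(\bar a(Y)\one^\top - \tDelta^\top)$. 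Term $(\mathrm{I})$: apply the displayed identity with $W=Y$, $V=\tY$; the $\one c_Y^\top$ piece dies since $(Z^\ddag-Y^\ddag)\one=0$ (both $Y,Z$ centered), leaving $(Z^\ddag-Y^\ddag)Y\tY^\top$, of Frobenius norm $\le \|Z^\ddag-Y^\ddag\|_2\|Y\|\|\tY\|$ --- the fourth term of \eqref{trilateration}. Term $(\mathrm{II})$: bounded at once by $\tfrac12\|Z^\ddag\|\|\tilde\Lambda-\tDelta\|_2$ --- the first term. Term $(\mathrm{IV})$: apply the identity twice (with $W=Z$ and $W=Y$, same $V=\tY$); the two $\one$-pieces combine into a multiple of $\one\one^\top$, killed by $Z^\ddag\one=0$, leaving $Z^\ddag(Z-Y)\tY^\top$, of norm $\le\|Z^\ddag\|\|Z-Y\|_2\|\tY\|$. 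Term $(\mathrm{III})$: expand entrywise, $(\Delta_Z-\tDelta)_{ij}=-2\ty_i^\top(z_j-y_j)+(\|z_j\|^2-\|y_j\|^2)$, so $(\Delta_Z-\tDelta)^\top = -2(Z-Y)\tY^\top + v\,\one^\top$ with $v_j=\|z_j\|^2-\|y_j\|^2$; the first summand gives another $\le\|Z^\ddag\|\|Z-Y\|_2\|\tY\|$ (which, added to the one from $(\mathrm{IV})$, produces the second term $2\|\tY\|\|Z^\ddag\|\|Z-Y\|_2$), while the rank-one remainder $\tfrac12 Z^\ddag v\,\one^\top$ has Frobenius norm $\le \tfrac12\|Z^\ddag\|\,\|v\|_2\,\|\one\|$ and, by the elementary bound $\big|\|z_j\|^2-\|y_j\|^2\big|\le(\|z_j\|+\|y_j\|)\|z_j-y_j\|\le(\rho_\infty(Y)+\rho_\infty(Z))\|z_j-y_j\|$, one has $\|v\|_2\le(\rho_\infty(Y)+\rho_\infty(Z))\|Z-Y\|_2$; this yields the third term, the $\sqrt m$ arising from the all-ones block. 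A final triangle inequality over $(\mathrm{I})$--$(\mathrm{IV})$ gives \eqref{trilateration}.

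The delicate points are two. First, the \emph{choice of pivot} $\Delta_Z$: comparing the actual run against the exact run that uses the \emph{perturbed} landmarks $Z$ (rather than $Y$) is what lets the landmark error split cleanly into a distance discrepancy $\Delta_Z-\tDelta$ and a pseudo-inverse gap $Z^\ddag-Y^\ddag$, so that $\|Z-Y\|_2$ and $\|Z^\ddag-Y^\ddag\|_2$ appear on separate footings. Second, \emph{norm bookkeeping}: at each product one must use $\|AB\|_2\le\|A\|\|B\|_2$ or $\|AB\|_2\le\|A\|_2\|B\|$ in the right direction, so that $\|Y\|$ and $\|\tY\|$ enter as operator norms while $\|Z-Y\|_2$, $\|Z^\ddag-Y^\ddag\|_2$ and $\|\tilde\Lambda-\tDelta\|_2$ enter as Frobenius norms; this is where the bound earns its precise shape. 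Everything else --- the two displayed identities and the final triangle inequality --- is routine. (One could in fact be a touch sharper: the two copies of $Z^\ddag(Z-Y)\tY^\top$ above carry opposite signs and partly cancel; keeping both, as in \eqref{trilateration}, is merely convenient.)
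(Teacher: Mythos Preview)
Your approach is correct and uses essentially the same ingredients as the paper's proof: write the closed forms for $\tY$ and $\tilde Z$, pivot through the exact trilateration on the \emph{perturbed} landmarks (your $\Delta_Z$ is the paper's $\Theta$), expand the difference, and bound the vector $v_j=\|z_j\|^2-\|y_j\|^2$ via the max-radii. Your four-term telescoping is in fact a little cleaner than the paper's: the paper introduces $\hat Y$ (trilateration with inputs $Z$ and $\Theta$), which by exactness equals $\tY$, so its bound on $\|\hat Y-\tY\|_2$ is really a bound on zero, used only to manufacture the $\|Z^\ddag-Y^\ddag\|_2$ term; you instead obtain that term directly in $(\mathrm{I})$ by exploiting $(Z^\ddag-Y^\ddag)\one=0$, and similarly kill the $\one$-blocks in $(\mathrm{IV})$ rather than bounding them.

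One small slip: in $(\mathrm{III})$ the all-ones vector in $v\,\one^\top$ has length $n$, not $m$, so the factor you actually get is $\tfrac12\sqrt{n}$ rather than $\sqrt{m}$; the paper's own derivation has the same $\sqrt n$ versus the stated $\sqrt m$. Your final parenthetical is also right (and in fact the cancellation between $(\mathrm{III})$ and $(\mathrm{IV})$ is exact, not partial): combining them before taking norms drops the $2\|\tY\|\|Z^\ddag\|\|Z-Y\|_2$ term entirely, giving a genuinely sharper inequality than \eqref{trilateration}.
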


\ajr{In the bound~\eqref{trilateration}, we see that the first term captures the effect of the error in the dissimilar matrix, i.e., $\|\tDelta - \tilde\Lambda\|$, while the other three terms reflect the impact of the error in the landmark positions, i.e, $\|Z-Y\|$. As we expect, we have a more accurate embedding as these two terms get smaller, and in particular, when $\tDelta = \tilde\Lambda$ and $Y = Z$ (no error in the inputs), we have exact recovery, which corroborates our derivation in \secref{app_trilateration}. }

\begin{rem}
\label{rem:trilateration}
\ajr{For a bound not involving the pseudo-inverse of $Z$ -- which may be  difficult to interpret -- we can upper bound the right-hand side of~\eqref{trilateration} using}
\beq
\rho_\infty(Z) \le \rho_\infty(Y) + \rho_\infty(Z - Y), \quad  
\|Z^\ddag\| \le \|Y^\ddag\| + \|Z^\ddag - Y^\ddag\|, 
\eeq
and
\beq\label{trilateration-aux1}
\|Z^\ddag - Y^\ddag\|_p \le \frac{\sqrt{2} \|Y^\ddag\|^2 \|Z - Y\|_p}{(1 - \|Y^\ddag\| \|Z - Y\|)_+^2}, \quad p \in \{2, \infty\},
\eeq
as per Lemma~\ref{lem:wedin}.
\new{Also, a simple application of Mirsky's inequality \eqref{mirsky} implies that, when $Y$ spans the whole space then so does $Z$ whenever $\|Y^\ddag\| \|Z - Y\| < 1$.}
\end{rem}

The proof is in \secref{proof_trilateration}.
We now derive from this result another one in terms of point sets instead of matrices.
\begin{cor}
\label{cor:trilateration}
Consider a centered point set $y_1, \dots, y_m \in \bbR^d$ with radius $\rho$, max-radius $\rho_\infty$, and half-width $\omega > 0$.  
For a point set $\ty_1, \dots, \ty_n \in \bbR^d$ with radius $\zeta$, set $\tdelta_{ij} = \|\ty_i - y_j\|^2$.
Also, let $z_1, \dots, z_m \in \bbR^d$ denote another centered point set, and let $(\tlambda_{ij})$ denote another \ajr{arbitrary set of numbers for $1\le i\le n$, $1\le j\le m$}.  Set $\eps = \max_{i \in [m]} \|z_i - y_i\|$ and $\eta^4 = \frac1{nm} \sum_{ij} (\tlambda_{ij} - \tdelta_{ij})^2$.
If $\eps \le \omega/{2}$, trilateration with inputs $z_1, \dots, z_m$ and $(\tlambda_{ij})$ returns $\tilde z_1, \dots, \tilde z_n \in \bbR^d$ satisfying 
\beq\label{mybound}
\bigg(\frac1n \sum_{i \in [n]} \|\tilde z_i - \ty_i\|^2\bigg)^{1/2} 
\le C_0 \left( \frac{\eta^2}{\omega} + \left[\frac{\rho\zeta}{\omega^2} + \frac{\sqrt{m} \rho_\infty}{\sqrt{n}\omega}\right] \eps \right)\,,
\eeq
where $C_0$ is a universal constant.
\end{cor}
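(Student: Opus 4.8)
}
The plan is to run Theorem~\ref{thm:trilateration} on the point-set data and then convert the matrix norms into the geometric quantities $\rho,\omega,\rho_\infty,\zeta,\eps,\eta$, dividing through by $\sqrt n$ at the end. Stack the points into $Y=[y_1\cdots y_m]^\top$, $Z=[z_1\cdots z_m]^\top$, $\tY=[\ty_1\cdots\ty_n]^\top$, write $\tilde Z$ for the trilateration output, and note that the left side of~\eqref{mybound} is exactly $\tfrac1{\sqrt n}\|\tilde Z-\tY\|_2$. The translation dictionary is $\|Y\|=\sqrt m\,\rho$ and $\|Y^\ddag\|=1/(\sqrt m\,\omega)$ from~\eqref{radius}--\eqref{with}, $\|\tY\|=\sqrt n\,\zeta$, $\rho_\infty(Y)=\rho_\infty$, together with $\|Z-Y\|_2\le\sqrt m\,\eps$ (since $\sum_i\|z_i-y_i\|^2\le m\eps^2$) and $\|\tilde\Lambda-\tDelta\|_2=\sqrt{nm}\,\eta^2$. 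To apply Theorem~\ref{thm:trilateration} one also checks its hypotheses: since $\|Z-Y\|\le\|Z-Y\|_2\le\sqrt m\,\eps$, the assumption $\eps\le\omega/2$ gives $\|Y^\ddag\|\,\|Z-Y\|\le\eps/\omega\le\tfrac12$, so by the Mirsky-type remark recorded in Remark~\ref{rem:trilateration} the configuration $Z$ also spans $\bbR^d$, and the theorem is applicable.

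The only quantities appearing in~\eqref{trilateration} that are not already expressed on the $Y$-side are $\|Z^\ddag\|$, $\rho_\infty(Z)$, and $\|Z^\ddag-Y^\ddag\|_2$. For the last, Lemma~\ref{lem:wedin} with $p=2$ together with $\|Y^\ddag\|\,\|Z-Y\|\le\tfrac12$ gives $\|Z^\ddag-Y^\ddag\|_2\le 4\sqrt2\,\|Y^\ddag\|^2\,\|Z-Y\|_2$, whence $\|Z^\ddag\|\le\|Y^\ddag\|+\|Z^\ddag-Y^\ddag\|\le(1+2\sqrt2)\,\|Y^\ddag\|$; and $\rho_\infty(Z)\le\rho_\infty(Y)+\rho_\infty(Z-Y)=\rho_\infty+\eps\le\tfrac32\rho_\infty$, using $\eps\le\omega/2\le\rho_\infty/2$ (recall $\omega\le\rho\le\rho_\infty$). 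So, up to universal constants, $\|Z^\ddag\|$ can be replaced by $\|Y^\ddag\|=1/(\sqrt m\,\omega)$ and $\rho_\infty(Z)$ by $\rho_\infty$ throughout~\eqref{trilateration}.

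Substituting term by term, the dissimilarity term contributes at most a constant times $\tfrac1{\sqrt m\,\omega}\cdot\sqrt{nm}\,\eta^2=\sqrt n\,\eta^2/\omega$; the two $\|\tY\|$-terms contribute at most a constant times $\sqrt n\,\zeta\cdot\tfrac1{\sqrt m\,\omega}\cdot\sqrt m\,\eps+\sqrt m\,\rho\cdot\sqrt n\,\zeta\cdot\tfrac1{m\,\omega^2}\cdot\sqrt m\,\eps=\sqrt n\,\zeta\,\eps/\omega+\sqrt n\,\rho\zeta\,\eps/\omega^2$, where $\omega\le\rho$ lets the first be absorbed into the second; and the remaining landmark term contributes at most a constant times $\sqrt m\,\rho_\infty\cdot\tfrac1{\sqrt m\,\omega}\cdot\sqrt m\,\eps=\sqrt m\,\rho_\infty\,\eps/\omega$. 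Dividing by $\sqrt n$ and summing the constants produces~\eqref{mybound} with a universal $C_0$.

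The computation is entirely mechanical once Theorem~\ref{thm:trilateration} is in hand; the one place that demands care is controlling the perturbed pseudo-inverse $Z^\ddag$ in terms of $Y^\ddag$, which is exactly where Lemma~\ref{lem:wedin} and the standing assumption $\eps\le\omega/2$ enter — without such a bound $Z$ could be rank-deficient and $\|Z^\ddag\|$ unbounded, so the hypothesis $\eps\le\omega/2$ cannot be dropped. The only other bookkeeping issue is tracking which terms carry a $\sqrt m$ versus a $\sqrt n$ normalization, which is the origin of the $\sqrt m/\sqrt n$ factor in the bracket of~\eqref{mybound}.
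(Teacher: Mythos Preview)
Your proposal is correct and follows essentially the same route as the paper's proof: apply \thmref{trilateration}, verify $Z$ has full rank via $\|Y^\ddag\|\|Z-Y\|\le\eps/\omega\le\tfrac12<1$, control $\|Z^\ddag-Y^\ddag\|$ and $\|Z^\ddag\|$ through \lemref{wedin}, then substitute the dictionary $\|Y\|=\sqrt m\,\rho$, $\|Y^\ddag\|=1/(\sqrt m\,\omega)$, $\|\tY\|\le\sqrt n\,\zeta$, $\|Z-Y\|_2\le\sqrt m\,\eps$, $\|\tilde\Lambda-\tDelta\|_2=\sqrt{nm}\,\eta^2$, and divide by $\sqrt n$. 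The paper carries slightly different explicit constants (e.g.\ it rounds $4\sqrt2$ to $8$ and obtains $\|Z^\ddag\|\le 5/(\sqrt m\,\omega)$), but the logic and term-by-term bookkeeping are identical to yours.
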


Corollary~\ref{cor:trilateration} follows from \thmref{trilateration} and its proof is given in Section~\ref{proof:cor:trilateration}. 

\ajr{
\begin{rem}\label{rem:justification1}
In the bound~\eqref{mybound}, the terms $\eps$ and $\eta$ respectively quantify the errors in the positions of landmarks and the error in the dissimilarities that are fed to the trilateration procedure. 
As we expect, smaller values of $\eps$ and $\eta$ lead to a more accurate embedding of the points, and in the extreme situation where $\eps = 0$ and $\eta = 0$, we can infer the positions of the points $\ty_i$ exactly.
Also, the bound is reciprocal in the half-width of the landmark set, $\omega$. This is also expected because a small $\omega$ means that the landmarks have small dispersion along some direction in $\bbR^d$ and hence the positions of other points cannot be well approximated along that direction. This can also be seen from Step 3 of the trilateration procedure. The quantity $\|Y^\ddagger\|$ measures the sensitivity of $X$ to the dissimilarities $\Delta$. Invoking~\eqref{with}, $\omega = \|Y^\ddagger\|^{-1}/m$, and hence a small $\omega$ corresponds to large sensitivity, meaning that a small perturbation in $\Delta$ can lead to large errors in $X$. This is consistent with our bound as the error in $\Delta$, i.e., $\eta^2$ appears by the scaling factor $1/\omega$. 
\end{rem}
}

\section{Applications to manifold learning}
\label{sec:applications}
Consider a set of points in a possibly high-dimensional Euclidean space, that lie on a smooth Riemannian manifold.
Isometric manifold learning (or embedding) is the problem of embedding these points into a lower-dimensional Euclidean space, and do as while preserving as much as possible the Riemannian metric.  
There are several variants of the problem under other names, such as nonlinear dimensionality reduction.  

\begin{rem}
Manifold learning is intimately related to the problem of embedding items with only partial dissimilarity information, which practically speaking means that some of the dissimilarities are missing.  We refer to this problem as graph embedding below, although it is known under different names such as graph realization, graph drawing, and sensor localization.  This connection is due to the fact that, in manifold learning, the short distances are nearly Euclidean, while the long distances are typically not.  In fact, the two methods for manifold learning that we consider below can also be used for graph embedding.  The first one, Isomap \cite{Tenenbaum00ISOmap}, coincides with MDS-MAP  \cite{shang2003localization} (see also \cite{niculescu2003dv}), although the same method was suggested much earlier by \citet{kruskal1980designing}; the second one, Maximum Variance Unfolding, was proposed as a method for graph embedding by  the same authors \cite{weinberger2006graph}, and is closely related to other graph embedding methods \citep{biswas2006semidefinite, javanmard2013localization, so2007theory}.
\end{rem}

\subsection{A performance bound for (Landmark) Isomap}
\label{sec:isomap}
Isomap is a well-known method for manifold learning, suggested by \citet*{Tenenbaum00ISOmap}.  
\algref{isomap} describes the method.  
(There, we use the notation $A^{\circ2}$ to denote the matrix with entries $A_{ij}^2$.)

\begin{algorithm}[!t]
\caption{Isomap}
\label{alg:isomap}
\begin{algorithmic}
\STATE {\bf Input:} data points $x_1, \dots, x_n \in \bbR^D$, embedding dimension $d$, neighborhood radius $r$
\STATE {\bf Output:} embedding points $z_1, \dots, z_n \in \bbR^d$
\medskip
\STATE {\bf 1:} construct the graph on $[n]$ with edge weights $w_{ij} = \|x_i - x_j\|\, \IND{\|x_i - x_j\| \le r}$
\STATE {\bf 2:} compute the shortest-path distances in that graph $\Gamma = (\gamma_{ij})$
\STATE {\bf 3:} apply classical scaling with inputs $\Gamma^{\circ 2}$ and $d$, resulting in points $z_1, \dots, z_n \in \bbR^d$
\STATE {\bf Return:} the points $z_1, \dots, z_n$
\end{algorithmic}
\end{algorithm}

There are two main components to Isomap: 1) Form the $r$-ball neighborhood graph based on the data points and compute the shortest-path distances; 2) Pass the obtained distance matrix to classical scaling (together with the desired embedding dimension) to obtain an embedding.
The algorithm is known to work well when the underlying manifold is isometric to a convex domain in $\bbR^d$.  Indeed, assuming an infinite sample size, so that the data points are in fact all the points of the manifold, as $r \to 0$, the shortest-path distances will converge to the geodesic distances on the manifold, and thus, in that asymptote (infinite sample size and infinitesimal radius), an isometric embedding in $\bbR^d$ is possible under the stated condition.
We will assume that this condition, that the manifold is isometric to a convex subset of $\bbR^d$, holds.

In an effort to understand the performance of Isomap, \citet{bernstein2000graph} study how well the shortest-path distances in the $r$-ball neighborhood graph approximate the actual geodesic distances.  
Before stating their result we need to state a definition.
\begin{Def}
The \emph{reach} of a subset $\cA$ in some Euclidean space is the supremum over $t \ge 0$ such that, for any point $x$ at distance at most $t$ from $\cA$, there is a unique point among those belonging to $\cA$ that is closest to $x$.  When $\cA$ is a $C^2$ submanifold, its reach is known to bound its radius of curvature from below \cite{MR0110078}.
\end{Def}
Assume that the manifold $\M$ has reach at least $\tau > 0$, and the data points are sufficiently dense in that 
\beq\label{a}
\min_{i \in [n]} g_\M(x, x_i) \le a, \quad \forall x \in \M,
\eeq
where $g_\M$ denote the metric on $\M$ (induced by the surrounding Euclidean metric). If $r$ is sufficiently small in that $r < \tau$, then \citet{bernstein2000graph} show that
\beq
1 - c_0 (r/\tau)^2 \le \frac{\gamma_{ij}}{g_{ij}} \le 1 + c_0 (a/r), \quad \forall i, j \in [n],
\eeq
where $\gamma_{ij}$ is the graph distance, $g_{ij}$ is the geodesic distance between $x_i$ and $x_j$, and $c_0 \ge 1$ is a universal constant.
\new{(In fact, \citet{bernstein2000graph} derive such a bound under the additional condition that $\M$ is geodesically convex, although the result can be generalized without much effort \cite{arias2017unconstrained}.)}

We are able to improve the upper bound in the restricted setting considered here, where the underlying manifold is assumed to be isometric to a convex domain.

\begin{prp}
\label{prp:better BSLT}
In the present situation, there is a universal constant $c_1 \ge 1$ such that, if $a/r \le 1/\sqrt{c_1}$, 
\beq
\frac{\gamma_{ij}}{g_{ij}} \le 1 + c_1 (a/r)^2, \quad \forall i, j \in [n].
\eeq
\end{prp}

\ajr{Thus, if we set 
\begin{align}\label{eq:xi}
\xi = c_0 (r/\tau)^2 \vee c_1 (a/r)^2\,,
\end{align}
using the notation $a\vee b = \max(a,b)$,} and it happens that $\xi < 1$, we have
\beq\label{BSLT}
1 - \xi \le \frac{\gamma_{ij}}{g_{ij}} \le 1 + \xi, \quad \forall i, j \in [n].
\eeq

Armed with our  perturbation bound for classical scaling, we are able to complete the analysis of Isomap, obtaining the following performance bound.

\begin{cor}
\label{cor:isomap}
In the present context, let $y_1, \dots, y_n \in \bbR^d$ denote a possible (exact and centered) embedding of the data points $x_1, \dots, x_n \in \M$, and let $\rho$ and $\omega$ denote the max-radius and half-width of the embedded points, respectively.  
\ajr{Let $\xi$ be defined by Equation~\eqref{eq:xi}.} If \ajR{$\xi \le \frac1{24}(\rho/\omega)^{-2}$}, then Isomap returns $z_1, \dots, z_n \in \bbR^d$ satisfying
\beq\label{eq:isomap-bound}
\min_{Q \in \cO} \bigg(\frac1n \sum_{i \in [n]} \|z_i - Q y_i\|^2\bigg)^{1/2} 
\le \frac{36 \sqrt{d} \rho^3}{\omega^2} \xi.
\eeq
\end{cor}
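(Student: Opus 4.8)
The plan is to reduce Corollary~\ref{cor:isomap} to Corollary~\ref{cor:classical scaling}, using the bound \eqref{BSLT} to control the discrepancy between the squared shortest-path distances $\Gamma^{\circ 2}$ (the actual input to classical scaling inside Isomap) and the squared geodesic distances, which by the isometry-to-a-convex-domain assumption equal the squared Euclidean distances $\delta_{ij} = \|y_i - y_j\|^2$ of the reference embedding. So the first step is to set $\lambda_{ij} = \gamma_{ij}^2$ and $\delta_{ij} = g_{ij}^2 = \|y_i - y_j\|^2$, and estimate $\eta^4 = \frac1{n^2}\sum_{ij}(\lambda_{ij} - \delta_{ij})^2$ in terms of $\xi$ and the geometry of the $y_i$.

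For the second step, I would use \eqref{BSLT} pointwise: $|\gamma_{ij}^2 - g_{ij}^2| = |\gamma_{ij} - g_{ij}|\,(\gamma_{ij} + g_{ij}) \le \xi g_{ij}\cdot(2+\xi) g_{ij} \le 3\xi\, g_{ij}^2$ since $\xi < 1$. Hence $|\lambda_{ij} - \delta_{ij}| \le 3\xi \|y_i - y_j\|^2$, and therefore
\[
\eta^4 \le \frac{9\xi^2}{n^2}\sum_{ij} \|y_i - y_j\|^4 \le 9\xi^2 \Big(\max_{ij}\|y_i - y_j\|^2\Big)\cdot\frac1{n^2}\sum_{ij}\|y_i-y_j\|^2.
\]
Since the $y_i$ are centered with max-radius $\rho$ (here $\rho$ denotes the max-radius, per the corollary statement), $\|y_i - y_j\| \le 2\rho$, so $\max_{ij}\|y_i-y_j\|^2 \le 4\rho^2$ and $\frac1{n^2}\sum_{ij}\|y_i-y_j\|^2 \le 4\rho^2$ as well; this gives $\eta^4 \le 144\, \xi^2 \rho^4$, i.e. $\eta^2 \le 12\,\xi\,\rho^2$. (One may be able to shave constants by using $\frac1{n^2}\sum_{ij}\|y_i-y_j\|^2 = 2\cdot\frac1n\sum_i\|y_i\|^2 \le 2\rho^2$ with $\rho$ the ordinary radius, but since the corollary states $\rho$ as the max-radius the cruder bound is fine and keeps the argument uniform.)

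The third step is to check the hypothesis of Corollary~\ref{cor:classical scaling}, namely $\eta/\omega \le 1/\sqrt2$. From $\eta^2 \le 12\xi\rho^2$ we get $\eta^2/\omega^2 \le 12\xi(\rho/\omega)^2$, so the assumption $\xi \le \frac1{24}(\rho/\omega)^{-2}$ yields $\eta^2/\omega^2 \le \tfrac12$, as required. Then Corollary~\ref{cor:classical scaling} applies with this $\eta$, giving
\[
\min_{Q\in\cO}\Big(\frac1n\sum_i\|z_i - Q y_i\|^2\Big)^{1/2} \le \frac{3\sqrt d\,\rho}{\omega^2}\,\eta^2 \le \frac{3\sqrt d\,\rho}{\omega^2}\cdot 12\xi\rho^2 = \frac{36\sqrt d\,\rho^3}{\omega^2}\,\xi,
\]
which is exactly \eqref{eq:isomap-bound}. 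One small bookkeeping point: Corollary~\ref{cor:classical scaling} requires the reference point set to be centered, which holds by assumption on the $y_i$, and full rank, which is implicit in $\omega > 0$; and the $z_i$ returned by classical scaling are automatically centered, so the Procrustes alignment on the left is the right object to compare.

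I do not expect a genuine obstacle here — the work is all in Proposition~\ref{prp:better BSLT} and Corollary~\ref{cor:classical scaling}, which we are given. The only thing to be careful about is the constant chase: making sure the $3$ from $|\gamma^2 - g^2| \le 3\xi g^2$, the $\leq 2\rho$ diameter bound (squared and summed), and the factor $3\sqrt d\rho/\omega^2$ from \eqref{mds1} combine to give exactly $36$ rather than something larger, and that the threshold $\frac1{24}(\rho/\omega)^{-2}$ is precisely what is needed to land at $\eta^2/\omega^2 \le \tfrac12$. So the main "obstacle" is simply tracking constants consistently with whether $\rho$ is read as radius or max-radius.
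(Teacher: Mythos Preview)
Your proposal is correct and follows essentially the same route as the paper: bound $\eta^2$ by $\max_{i,j}|\gamma_{ij}^2 - g_{ij}^2| \le (2\xi+\xi^2)(2\rho)^2 \le 12\rho^2\xi$ using \eqref{BSLT} and the diameter bound $g_{ij}\le 2\rho$, verify that $\xi\le\tfrac1{24}(\rho/\omega)^{-2}$ forces $\eta^2/\omega^2\le\tfrac12$, and then plug into \corref{classical scaling}. The paper's argument is slightly more direct in that it bounds $\eta^2$ by the entrywise maximum in one line rather than passing through the $\ell^2$ sum, but the constants and conclusion are identical.
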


\ajr{
\begin{rem}\label{rem:justification2}
As we can see, the performance of Isomap degrades as $\omega$ gets smaller, which we already justified in Remark~\ref{rem:justification1}. Also the performance improves for smaller values of $\xi$. Recalling the definition of $\xi$ in \eqref{eq:xi}, fixing $r$, a smaller $\xi$ corresponds to a denser set of points on the manifold, i.e., a smaller $a$, and also a smaller reach, i.e., a smaller $\tau$, which leads the graph distances to better approximate the geodesic distances.
\end{rem}}

\begin{figure}[t!]
	\centering
    \subfloat[Data points $x_i \in \M$ and the $r$-ball neighborhood graph ]{\includegraphics[scale = 0.5]{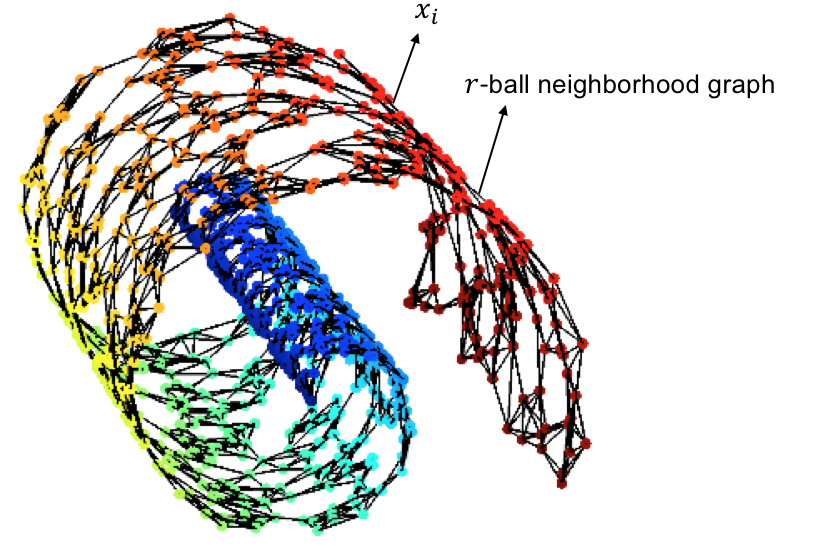}}\\
    \subfloat[Exact embedding onto the low-dimensional space ]{\includegraphics[scale=0.45]{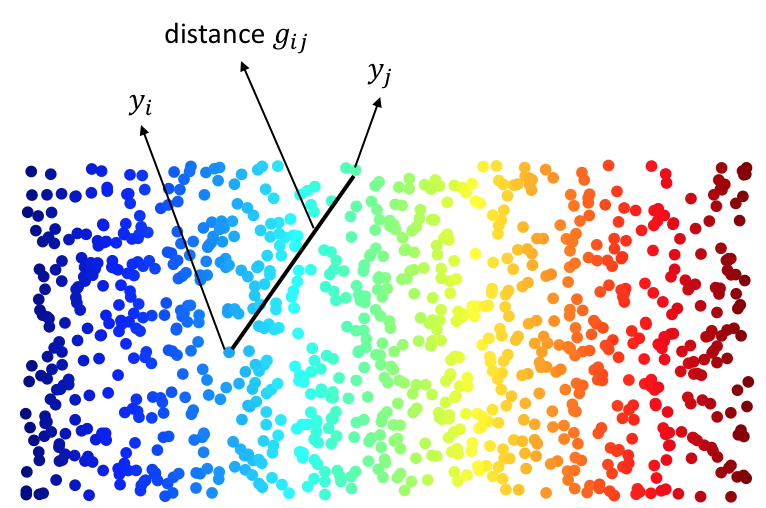}}~\hspace{1.6cm}
    \subfloat[Returned locations by Isomap]{\includegraphics[scale=0.45]{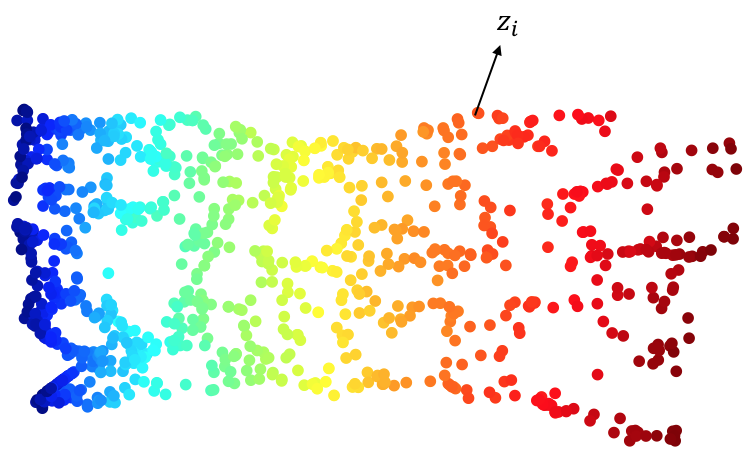}}
	\caption{Schematic representation of exact locations $y_i\in \bbR^d$, data points $x_i\in \M$, returned locations by Isomap $z_i\in \bbR^d$. Note that $g_{ij} = \|y_i-y_j\|$ is the geodesic distance between $x_i$ and $x_j$ because
	$\{y_i\}_{i=1}^n$ is an exact isometric embedding of data points $\{x_i\}_{i=1}^n$. Also the distances $\gamma_{ij}$ are computed as shortest path distances between $x_i$ and $x_j$ on the $r$-ball neighborhood graph.}
	\label{fig:schematic}
\end{figure}

\begin{proof}
Before we provide the proof, we refer to \figref{schematic} for a schematic representation of exact locations $y_i\in \bbR^d$, data points $x_i\in \M$, returned locations by Isomap $z_i\in \bbR^d$, as well as graph distances $\gamma_{ij}$ and geodesic distances $g_{ij}$.

The proof itself is a simple consequence of \corref{classical scaling}.  Indeed, with \eqref{BSLT} it is straightforward to obtain (with $\eta$ defined in \corref{classical scaling} and $\gamma_{ij}$ and $g_{ij}$ as above), 
\beq\label{eta}
\eta^2 
\le \max_{i, j \in [n]} |\gamma_{ij}^2 - g_{ij}^2| 
\le \max_{i, j \in [n]} (2\xi + \xi^2) g_{ij}^2
\le (2\xi + \xi^2) (2\rho)^2
\le 12 \rho^2 \xi,
\eeq
\ajr{where in the last step we used the fact that $\xi < 1$ because \ajR{$\xi \le \frac1{24}(\rho/\omega)^{-2}$} by our assumption and $\omega \le \rho$, by definition.} 
In particular, $\eta$ fulfills the conditions of \corref{classical scaling} under the stated bound $\xi$, so we may conclude by applying that corollary and simplifying. 
\end{proof}

If $\cD$ is a domain in $\bbR^d$ that is isometric to $\M$, then the radius of the embedded points ($\rho$ above) can be bounded from above by the radius of $\cD$, and under mild assumptions on the sampling, the half-width of the embedded points ($\omega$ above) can be bounded from below by a constant times the half-width of $\cD$, in which case $\rho$ and $\omega$ should be regarded as fixed.
Similarly, $\tau$ should be considered as fixed, so that the bound is of order $O(r^2 \vee (a/r)^2)$, optimized at $r \asymp a^{1/2}$.
If the points are well spread-out, for example if the points are sampled iid from the uniform distribution on the manifold, then $a$ is on the order of $(\log(n)/n)^{1/d}$, and the bound (with optimal choice of radius) is $O((\log(n)/n)^{1/d})$.


\paragraph{Landmark Isomap}
Because of the relatively high computational complexity of Isomap, and also of classical scaling, \citet{de2004sparse, silva2003global} proposed a Nystr\"om approximation (as explained in \cite{platt2005fastmap}).  Seen as a method for MDS, it starts by embedding a small number of items, which effectively play the role of landmarks, and then embedding the remaining items by trilateration based on these landmarks.
Seen as a method for manifold learning, the items are the points in space, and the dissimilarities are the squared graph distances, which are not provided and need to be computed.
\algref{landmark isomap} details the method in this context.  
The landmarks may be chosen at random from the data points, although other options are available, and we discuss some of them in \secref{app_landmarks}. 

\begin{algorithm}[!ht]
\caption{Landmark Isomap}
\label{alg:landmark isomap}
\begin{algorithmic}
\STATE {\bf Input:} data points $x_1, \dots, x_n \in \bbR^D$, embedding dimension $d$, neighborhood radius $r$, number of landmarks $\ell$
\STATE {\bf Output:} embedding points $\{z_i:\, i\in \cL\}\cup\{\tz_i:\, i\notin \cL\} \subseteq \bbR^d$ for a choice of $|\cL|=\ell$ landmarks
\medskip
\STATE {\bf 1:} construct the graph on $[n]$ with edge weights $w_{ij} = \|x_i - x_j\|\, \IND{\|x_i - x_j\| \le r}$
\STATE {\bf 2:} select $\cL \subset [n]$ of size $\ell$ according to one of the methods in \secref{app_landmarks}
\STATE {\bf 3:} compute the shortest-path distances in that graph $\Gamma = (\gamma_{ij})$ for $(i,j) \in [n] \times \cL$
\STATE {\bf 4:} apply classical scaling with inputs $\Gamma^{\circ 2}_{\cL \times \cL}$ and $d$, resulting in (landmark) points $z_i, i \in \cL$ in $\bbR^d$
\STATE {\bf 5:} for each $i \notin \cL$, apply trilateration based on $\{z_j : j \in \cL\}$ and $\Gamma^{\circ 2}_{i \times \cL}$ to obtaining $\tz_i \in \bbR^d$
\STATE {\bf Return:} the points $\{z_i:\, i\in \cL\}\cup\{\tz_i:\, i\notin \cL\}$.
\end{algorithmic}
\end{algorithm}

With our work, we are able to provide a performance bound for Landmark Isomap.

\begin{cor}
\label{cor:landmark isomap}
\ajR{
Consider $n$ data points $x_1, \dots, x_n \in \M$, which has a possible (exact and centered) embedding in $\reals^d$.
Let $\cL$ be a subset of the points ($|\cL| = \ell$) with exact embedding $\{y_1,\dotsc, y_\ell\}$ and denote the embedding of the other points by $\ty_1,\dotsc, \ty_{n-\ell}$. Assume that $\{y_1,\dotsc,y_m\}$ has half-width $\omega_* > 0$, the exact embedding $\{y_1,\dotsc, y_\ell\}\cup\{\ty_1,\dotsc,\ty_{n-\ell}\}$ has maximum-radius $\rho$, and
{$\ell \le (n/2)\wedge [(72\sqrt{d}\xi)^{-2} (\rho/\omega_*)^{-6}]$}. 
Then the Landmark Isomap, with the choice of $\cL$ as landmarks, returns $\{z_1,\dotsc,z_\ell\}\cup\{\tz_1,\dotsc,\tz_{n-\ell}\} \subseteq \bbR^d$  satisfying
\beq\label{eq:landmark-isomap}
\min_{Q \in \cO} \bigg(\frac1n \sum_{i\in [\ell]}  \|z_i - Q y_i\|^2 + \frac1n \sum_{i \in [n-\ell]} \|\tz_i - Q \ty_i\|^2\bigg)^{1/2} 
\le {C_1 \frac{\rho^2}{\omega_*}}\,,
\eeq
where $C_1$ is a universal constant.}
\end{cor}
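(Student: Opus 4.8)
The plan is to chain the two point-set perturbation results already in hand: \corref{classical scaling} for the landmark points (which Landmark Isomap embeds by classical scaling in Step~4) and \corref{trilateration} for the remaining points (embedded by trilateration in Step~5), expressing both estimates through a single orthogonal transformation $Q^\star$. Since an exact isometric embedding is unique only up to a rigid motion, I would first assume without loss of generality that the landmark embedding $\{y_1,\dots,y_\ell\}$ is centered; re-centering changes the max-radius by at most a factor of $2$, which is harmless as it is absorbed into the universal constant. Exactly as in the proof of \corref{isomap}, the bound \eqref{BSLT} gives $|\gamma_{ij}^2-g_{ij}^2|\le(2\xi+\xi^2)g_{ij}^2\le 12\rho^2\xi$ (using $g_{ij}\le2\rho$ and $\xi<1$), so the dissimilarity error $\eta^2$ --- computed over $\cL\times\cL$ in Step~4 and over the non-landmarks against $\cL$ in Step~5 --- satisfies $\eta^2\le 12\rho^2\xi$ in both cases. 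Finally, the hypothesis on $\ell$ is equivalent to $72\sqrt d\,\sqrt\ell\,\xi\,(\rho/\omega_*)^3\le 1$, a bound I will use repeatedly; in particular, since $d,\ell\ge1$ and $\rho\ge\omega_*$, it forces $\xi\le 1/72$.

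I would then treat the landmarks. Applying \corref{classical scaling} with half-width $\omega_*$, radius at most $\rho$, and $\eta^2\le12\rho^2\xi$ (its hypothesis $\eta/\omega_*\le 1/\sqrt2$ following from $\xi\le1/72$) yields a rotation $Q^\star\in\cO$ with
\beq
\Big(\tfrac1\ell\sum_{i\in\cL}\|z_i-Q^\star y_i\|^2\Big)^{1/2}\;\le\; \frac{3\sqrt d\,\rho}{\omega_*^2}\,(12\rho^2\xi)\;=\;\frac{36\sqrt d\,\rho^3\xi}{\omega_*^2}\,.
\eeq
Passing from this root-mean-square bound to a supremum bound costs a factor $\sqrt\ell$, so $\eps:=\max_{i\in\cL}\|z_i-Q^\star y_i\|\le \sqrt\ell\cdot 36\sqrt d\,\rho^3\xi/\omega_*^2$; the assumed bound on $\ell$ is precisely what makes $\eps\le\omega_*/2$, which is the hypothesis required below.

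Next I would treat the non-landmark points, using that trilateration is equivariant under orthogonal transformations of the landmark set: run with input landmarks $\{Q^\star y_j:j\in\cL\}$ and input dissimilarities $\{g_{ij}^2\}$, it returns exactly $\{Q^\star\ty_i\}$ (by the exact-recovery argument of \secref{app_trilateration}). Landmark Isomap instead runs it with $\{z_j:j\in\cL\}$, which differ from $\{Q^\star y_j\}$ by at most $\eps$ in norm, and with $\{\gamma_{ij}^2\}$, which differ from $\{g_{ij}^2\}$ by at most $12\rho^2\xi$. Applying \corref{trilateration} (legitimate since $\eps\le\omega_*/2$), bounding the landmark radius and max-radius and the radius of $\{Q^\star\ty_i\}$ all by a constant multiple of $\rho$, and using $\sqrt\ell/\sqrt{n-\ell}\le1$ (valid because $\ell\le n/2$), I would obtain
\beq
\Big(\tfrac1{n-\ell}\sum_{i\in[n-\ell]}\|\tz_i-Q^\star\ty_i\|^2\Big)^{1/2}\;\le\; C_0\Big(\frac{12\rho^2\xi}{\omega_*}+\Big[\frac{\rho^2}{\omega_*^2}+\frac{\rho}{\omega_*}\Big]\eps\Big)\;\le\; C_0'\,\frac{\rho^2}{\omega_*}\,,
\eeq
for a universal $C_0'$, the last step substituting the bound on $\eps$ and using $72\sqrt d\,\sqrt\ell\,\xi(\rho/\omega_*)^3\le1$ together with $\rho\ge\omega_*$ and $\xi\le1/72$. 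Finally, choosing $Q=Q^\star$ in the left-hand side of \eqref{eq:landmark-isomap}, splitting the sum over $\cL$ and its complement, and using $\ell/n\le1$, $(n-\ell)/n\le1$, and once more the bound on $\ell$ to dominate the landmark term, the two pieces combine into $(C_1\rho^2/\omega_*)^2$; taking square roots gives the claim.

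The part I expect to require the most care is the glue between the two steps: carrying a single rotation $Q^\star$ through both, and in particular the root-mean-square-to-supremum passage in the landmark step, whose unavoidable $\sqrt\ell$ factor is exactly the reason the admissible number of landmarks must scale like $(\sqrt d\,\xi)^{-2}(\rho/\omega_*)^{-6}$. One then has to check that this one condition on $\ell$ simultaneously validates the hypothesis $\eta/\omega_*\le1/\sqrt2$ of \corref{classical scaling} and the hypothesis $\eps\le\omega_*/2$ of \corref{trilateration}; once that is in place, the rest is routine substitution of the configuration parameters (all bounded by a constant multiple of $\rho$) and collecting constants.
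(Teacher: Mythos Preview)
Your proposal is correct and follows essentially the same route as the paper: apply the classical-scaling bound (via \corref{isomap}) to the landmarks, pass from the root-mean-square error to a supremum error at the cost of $\sqrt{\ell}$, then feed that into \corref{trilateration} for the remaining points, and combine. The paper handles the common rotation by assuming without loss of generality that the optimal $Q$ equals the identity, whereas you track $Q^\star$ explicitly and invoke the equivariance of trilateration; these are equivalent. One small imprecision: the hypothesis $\eta/\omega_*\le 1/\sqrt{2}$ of \corref{classical scaling} does not follow from $\xi\le 1/72$ alone (one needs $\xi\le\tfrac{1}{24}(\omega_*/\rho)^2$), but it does follow from the full assumption on $\ell$, since $\ell\ge 1$ gives $\xi\le (72\sqrt{d})^{-1}(\omega_*/\rho)^3\le \tfrac{1}{24}(\omega_*/\rho)^2$, exactly as the paper checks.
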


The result is a direct consequence of applying \corref{isomap}, which allows us to control the accuracy of embedding the landmarks using classical scaling, followed by applying \corref{trilateration}, which allows us to control the accuracy of embedding using trilateration. 
The proof is given in Section~\ref{proof:cor:landmark isomap}. \ajr{As we see our bound~\eqref{eq:landmark-isomap} on the embedding error improves when the half-width of the landmarks, $\omega_*$, increases. We justified this observation in~\remref{justification1}: a higher half-width of the landmarks yields a better performance of the trilateration procedure. In \secref{app_landmarks}, we use this observation to provide guidelines for choosing landmarks.}  

We note that for the set of (embedded) landmarks to have positive half-width, it is necessary that they span the whole space, which compels $\ell \ge d+1$.  In \secref{app_landmarks} we show that choosing the landmarks at random performs reasonably well in that, with probability approaching 1 very quickly as $\ell$ increases, their (embedded) half-width is at least half that of the entire (embedded) point set.

\subsection{A performance bound for Maximum Variance Unfolding}
\label{sec:mvu}

Maximum Variance Unfolding is another well-known method for manifold learning, proposed by
\citet{weinberger2006unsupervised,weinberger2006introduction}.  \algref{mvu} describes the method, which relies on solving a semidefinite relaxation.
\ajr{There is also an interpretation of MVU as a regularized shortest path solution~\cite[Theorem 2]{paprotny2012connection}.}

\begin{algorithm}[!ht]
\caption{ Variance Unfolding (MVU)}
\label{alg:mvu}
\begin{algorithmic}
\STATE {\bf Input:} data points $x_1, \dots, x_n \in \bbR^D$, embedding dimension $d$, neighborhood radius $r$
\STATE {\bf Output:} \ajr{embedded points $z_1, \dots, z_n \in \bbR^d$}
\medskip
\STATE{\bf 1:} set $\gamma_{ij} = \|x_i - x_j\|$ if $\|x_i - x_j\| \le r$, and $\gamma_{ij} = \infty$ otherwise
\STATE {\bf 2:} solve the following semidefinite program
\ajr{
\beqn
\text{maximize } \sum_{i, j \in [n]} \|p_i - p_j\|^2 \text{ over } p_1, \dots, p_n \in \bbR^D, \text{ subject to } \|p_i - p_j\| \le \gamma_{ij}
\eeq

\STATE {\bf 3:} center a solution set and embed it into $\bbR^d$ using principal component analysis}
\STATE {\bf Return:} the embedded point set, denoted by \ajr{$z_1, \dots, z_n$}
\end{algorithmic}
\end{algorithm}

Although MVU is broadly regarded to be more stable than Isomap, \citet{arias2013convergence} show that it works as intended under the same conditions required by Isomap, namely, that the underlying manifold is geodesically convex.  
Under these conditions, in fact, under the same conditions as in \corref{isomap}, where in particular \eqref{BSLT} is assumed to hold with $\xi$ sufficiently small, \citet{paprotny2012connection} show that MVU returns an embedding, $z_1, \dots, z_n \in \bbR^d$, with dissimilarity matrix \ajr{$\Lambda = (\lambda_{ij})$, $\lambda_{ij} = \|z_i - z_j\|^2$}, satisfying
\beq\label{paprotny}
|\Lambda - \Delta|_1 \le 9 \rho^2 n^2 \xi,
\eeq
where $\Delta = (\delta_{ij})$, $\delta_{ij} = \|y_i - y_j\|^2$ (the correct underlying distances), and for a matrix $A = (a_{ij})$, $|A|_p^p = \sum_{i,j} |a_{ij}|^p$.
Based on that, and on our work in \secref{classical scaling}, we are able to provide the following performance bound for MVU, which is \ajr{similar} to the bound we obtained for Isomap.

\begin{cor}
\label{cor:mvu}
\ajR{Let $y_1, \dots, y_n \in \bbR^d$ denote a possible (exact and centered) embedding of the data points $x_1, \dots, x_n \in \M$, and let $\rho$ and $\omega$ denote the max-radius and half-width of the embedded points, respectively.  Suppose that the neighborhood radius $r$ is chosen so that the corresponding neighborhood graph on points $\{x_i\}_{i\in [n]}$ is connected. 
Let $\xi$ be defined by Equation~\eqref{eq:xi}. If $\xi \le (12 \sqrt{3})^{-1} (\rho/\omega)^{-2}$}, then Maximum Variance Unfolding returns $z_1, \dots, z_n \in \bbR^d$ satisfying
\beq
\min_{Q \in \cO} \bigg(\frac1n \sum_{i \in [n]} \|z_i - Q y_i\|^2\bigg)^{1/2} 
\le \frac{18 \sqrt{3d} \rho^3}{\omega^2} \xi.
\eeq
\end{cor}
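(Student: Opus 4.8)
The plan is to reduce everything to Corollary~\ref{cor:classical scaling} (the point-set version of the classical scaling bound), exactly as was done for Isomap in Corollary~\ref{cor:isomap}, but feeding in the dissimilarity error coming from \eqref{paprotny} rather than the one coming from the Bernstein--Silva--Langford--Tenenbaum estimate \eqref{BSLT}. First I would recall that MVU's last step is precisely: center a feasible point set and apply PCA to embed into $\bbR^d$, which is classical scaling applied to the squared-distance matrix $\Lambda = (\lambda_{ij})$ of the (centered) MVU solution. So the returned points $z_1,\dots,z_n$ are exactly what Corollary~\ref{cor:classical scaling} produces with input dissimilarities $\{\lambda_{ij}\}$ and target configuration $\{y_i\}$ having squared-distance matrix $\Delta=(\delta_{ij})$. (Connectedness of the neighborhood graph is needed only so that the $\gamma_{ij}$ in \eqref{paprotny} are finite and \eqref{BSLT} applies, which is why it appears in the hypotheses.)

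Next I would convert the $\ell_1$-bound \eqref{paprotny} into the $\ell_2$-type quantity $\eta$ used in Corollary~\ref{cor:classical scaling}. There, $\eta^4 = \tfrac1{n^2}\sum_{i,j}(\lambda_{ij}-\delta_{ij})^2$. The cleanest route is to bound the sum of squares by $(\max_{ij}|\lambda_{ij}-\delta_{ij}|)\cdot\sum_{ij}|\lambda_{ij}-\delta_{ij}| \le (\text{something}\cdot\rho^2)\cdot |\Lambda-\Delta|_1$, using that each individual entry difference is at most a constant times $\rho^2$ (both $\delta_{ij}$ and $\lambda_{ij}$ are squared distances between points of diameter $O(\rho)$ — for $\lambda$ this uses the MVU constraint $\lambda_{ij}\le\gamma_{ij}^2\le(1+\xi)^2 g_{ij}^2 = O(\rho^2)$). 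Combined with \eqref{paprotny} this yields $\eta^4 \le C\rho^2 \cdot 9\rho^2 n^2\xi / n^2 = O(\rho^4\xi)$, hence $\eta^2 \le C'\rho^2\sqrt{\xi}$ for an explicit constant — mirroring \eqref{eta} in the Isomap proof but with $\sqrt{\xi}$ in place of $\xi$. Actually, a slightly more careful accounting (noting $|\lambda_{ij}-\delta_{ij}| = O(\rho^2\xi)$ entrywise, again via \eqref{BSLT}) gives $\eta^4 \le O(\rho^2\xi)\cdot 9\rho^2 n^2\xi/n^2 = O(\rho^4\xi^2)$, i.e. $\eta^2 \le C''\rho^2\xi$, which is what produces the linear-in-$\xi$ final bound; I would use this sharper version.

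Then I would check the hypothesis $\eta/\omega \le 1/\sqrt2$ of Corollary~\ref{cor:classical scaling}: with $\eta^2 \le C''\rho^2\xi$ this follows from the assumed $\xi \le (12\sqrt3)^{-1}(\rho/\omega)^{-2}$ after tracking constants, and it also forces $\xi<1$ so that \eqref{BSLT} is legitimately in force. Finally, plugging $\eta^2 \le C''\rho^2\xi$ into the conclusion $\min_Q (\tfrac1n\sum\|z_i-Qy_i\|^2)^{1/2} \le \tfrac{3\sqrt d\,\rho}{\omega^2}\eta^2$ of Corollary~\ref{cor:classical scaling} gives $\le \tfrac{3C''\sqrt d\,\rho^3}{\omega^2}\xi$, and I would choose the constants so that $3C'' \le 18\sqrt3$, matching the stated bound. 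The only real subtlety — and the step I expect to need the most care — is the passage from $|\Lambda-\Delta|_1$ to the squared quantity $\eta^4$ while keeping the $\xi$-dependence linear rather than square-root: this hinges on the entrywise bound $|\lambda_{ij}-\delta_{ij}| \lesssim \rho^2\xi$, which one must extract from \eqref{BSLT} together with the MVU feasibility constraint (lower bound on $\lambda_{ij}$ is the delicate direction, since MVU only imposes an upper constraint $\lambda_{ij}\le\gamma_{ij}^2$; here one uses that $\Delta$ itself is feasible, so the maximizer cannot shrink distances too much on average — though for the crude entrywise bound $|\lambda_{ij}-\delta_{ij}|=O(\rho^2)$ suffices and then the $\ell_1$ bound carries the $\xi$). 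Everything else is bookkeeping of universal constants.
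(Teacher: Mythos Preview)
Your outline is exactly the paper's: bound $\|\Lambda-\Delta\|_2^2 \le |\Lambda-\Delta|_\infty \cdot |\Lambda-\Delta|_1$, combine the Paprotny--Rohlfing $\ell_1$ estimate \eqref{paprotny} with an $\ell_\infty$ bound of order $\rho^2\xi$, obtain $\eta^2 \le 6\sqrt3\,\rho^2\xi$, and feed this into Corollary~\ref{cor:classical scaling}. The check of the hypothesis $\eta/\omega\le 1/\sqrt2$ under $\xi\le(12\sqrt3)^{-1}(\rho/\omega)^{-2}$ and the final constant $18\sqrt{3d}$ then drop out exactly as you describe.

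Where your proposal has a genuine gap is the $\ell_\infty$ step itself, which you rightly flag as the crux but do not resolve. Both of your suggested patches fail. First, ``$\Delta$ itself is feasible'' is false: on an edge $(i,j)$ the MVU constraint reads $\|p_i-p_j\|\le\|x_i-x_j\|$ (ambient Euclidean), while $\sqrt{\delta_{ij}}=g_{ij}\ge\|x_i-x_j\|$ (geodesic vs.\ chord), so the isometric image $\{y_i\}$ is generally \emph{infeasible} and the ``maximizer cannot shrink distances below $\delta_{ij}$'' argument never gets started. Second, your closing fallback --- use the crude entrywise bound $|\lambda_{ij}-\delta_{ij}|=O(\rho^2)$ and let the $\ell_1$ bound ``carry the $\xi$'' --- directly contradicts your own earlier (correct) calculation: that route gives $\eta^4=O(\rho^4\xi)$, hence $\eta^2=O(\rho^2\sqrt\xi)$ and a final error of order $\sqrt\xi$, not the stated linear $\xi$. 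The paper, by contrast, simply records $|\Lambda-\Delta|_\infty\le 12\rho^2\xi$ (writing it as $\max_{ij}|\gamma_{ij}^2-g_{ij}^2|$); the direction $\lambda_{ij}\le\gamma_{ij}^2$ follows, as you note, by chaining the MVU edge constraints along shortest paths, and the paper takes the matching two-sided control as given. You need that two-sided $O(\rho^2\xi)$ entrywise bound --- the crude $O(\rho^2)$ one cannot deliver the corollary as stated.
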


\begin{proof}
As in \eqref{eta}, we have 
\beq
|\Lambda - \Delta|_\infty
= \max_{i, j \in [n]} |\gamma_{ij}^2 - g_{ij}^2
|\le 12 \rho^2 \xi,
\eeq
so that, in combination with \eqref{paprotny}, we have 
\beq
\|\Lambda - \Delta\|_2
\le |\Lambda - \Delta|_\infty^{1/2} |\Lambda - \Delta|_1^{1/2}
\le 6\sqrt{3} n \rho^2 \xi.
\eeq
In particular, the conditions of \corref{classical scaling} are met under the stated bound on $\xi$.  Therefore, we may apply that corollary to conclude.
\end{proof}

\ajR{\section{Numerical Experiments}\label{sec:numerical}
\paragraph{Procrustes problem.} 
We let $n = 100$, $d = 10$ and generate $X\in \bbR^{n\times d}$ as $X = UDV^\top$, where $U,V\in \bbR^{n\times d}$ are two random orthonormal matrices drawn independently  from the Haar measure and $D$ is a diagonal matrix of  size $d$, with its diagonal entries chosen uniformly at random from $[0, 10\delta]$. We also generate $Z\in \bbR^{n\times d}$ via the same generative model as $X$ and let $Y = a X+(1-a) Z$ for $a$ changing values from zero to one. As $a$ varies, we compute $\eps^2 = \|YY^\top-XX^\top\|_2$ and then solve for the procrustes problem $\min_{Q \in \cO}
\|Y- XQ\|_2$ using \algref{procrustes}. Figure \ref{fig:procrustes} plots $\|Y-XQ\|_2$ versus $\epsilon$ in the log-log scale, for different values of $\delta = 1, 2, \dotsc, 5,10$. 

Firstly, we observe that the slope of the best fitted line to each curve is very close to 2, indicating that $\|Y-XQ\|_2$ scales as $\eps^2$. 
\ajR{Secondly, since the singular values of $X$ (there are $d = 10$ of them) are drawn uniformly at random from $[0,10\delta]$}, we have that $\|X^\ddagger\|$ changes as $1/\delta$.
As we observe from the plot, for fixed $\eps$, the term $\|Y-XQ\|_2$ is monotone in $\|X^\ddagger\|\sim \delta^{-1}$. These observations are in good match with our theoretical bound in \thmref{procrustes}.

We next compare the procrustes error $\|Y-XQ\|_2$ with the proposed upper bounds~\eqref{mysharper} and \eqref{procrustes} in \thmref{procrustes}. Recall that the upper bound \eqref{mysharper} reads as
\begin{align*}
\min_{Q \in \cO} \|Y - X Q\|_p \le \begin{cases}
\|X^\ddag\| \eps^2 + \left((1 - \|X^\ddag\|^2 \eps^2)^{-1/2} \|X^\ddag\| \eps^2\right) \wedge (d^{1/4} \eps) \,, \quad &\text{ if } \eps \|X^\ddag\| < 1\,,\\
\|X^\ddag\| \eps^2 + d^{1/4} \eps &\text{ otherwise.}
\end{cases}
\end{align*}
Under the same generative model for configurations $X$ and $Y \in \reals^{n\times d}$, with $\delta = 0.1$,
\figref{bounds-proc}(a) plots the procrustes error along with the above upper bound in the log-log scale. The solid part of the red curve corresponds to the regime where $\eps \|X^\ddag\| <1$ and the dashed part refers to the regime where $\eps\|X^\ddag\| > 1$. Likewise, we plot the upper bound~\eqref{procrustes} in black, which assumes $\eps\|X^\ddag\|<\tfrac{1}{\sqrt{2}}$. The part of this upper bound where this assumption is violated is plotted in dashed form. \figref{bounds-proc}(b) depicts the same curves in the regular (non-logarithmic) scale.
In \figref{bounds-proc}(c), we show the ratio of the upper bounds over the computed procrustes error from the simulation.

 \begin{figure}[]
	\centering
    \includegraphics[scale = 0.4]{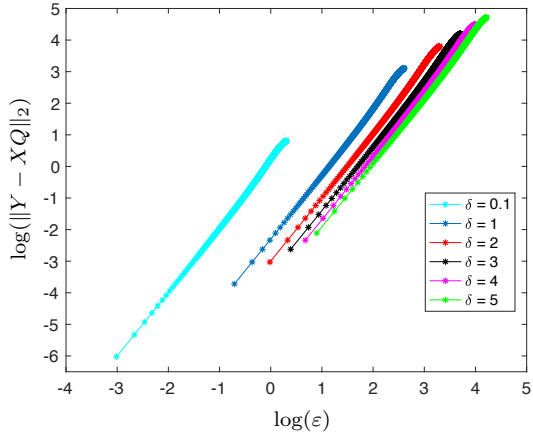}
    \put(-100,-8){{\scriptsize $\log(\eps)$}}
    \put(-200,55){\rotatebox{90}{{\scriptsize $\log(\|Y-XQ\|_2)$}}}
	\caption{\ajR{Procrustes error $\min_{Q\in\cO} \|Y-XQ\|_2$ versus $\epsilon$, in log-log scale and for different values of $\delta$ (i.e., different values of $\|X^\ddagger\|$).}}
	\label{fig:procrustes}
\end{figure}

\begin{figure}[t!]
\captionsetup[subfloat]{captionskip=10pt}
	\centering
    \subfloat[log-log scale]{\includegraphics[scale = 0.4]{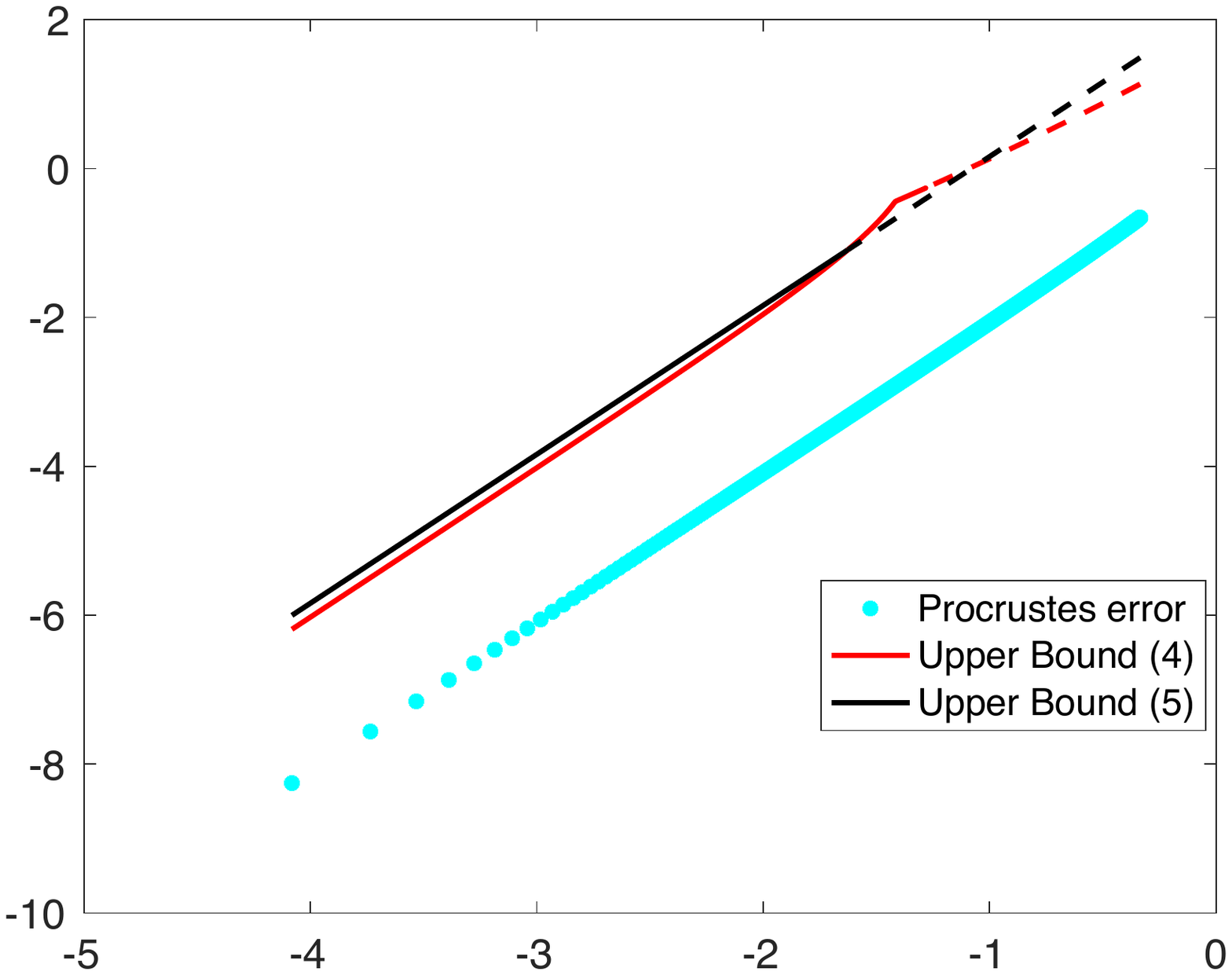} \put(-105,-10){{\scriptsize $\log(\eps)$}}}
    ~\hspace{1.6cm}
    \subfloat[regular (non-logarithmic) scale]{\includegraphics[scale=0.4]{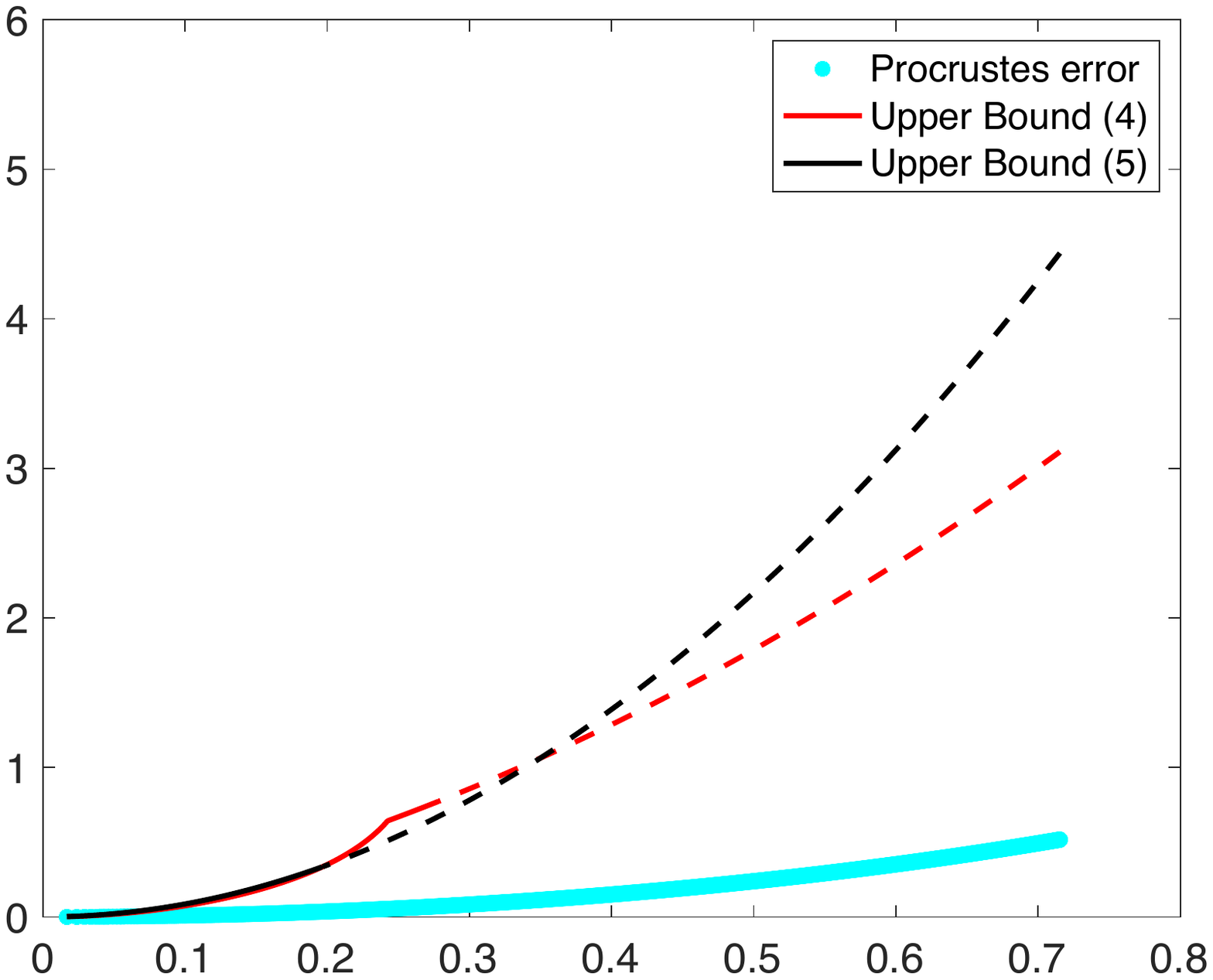}\put(-100,-10){{\scriptsize $\eps$}}}\\
    \subfloat[ratio of upper bounds over the procrustes error]{\includegraphics[scale=0.4]{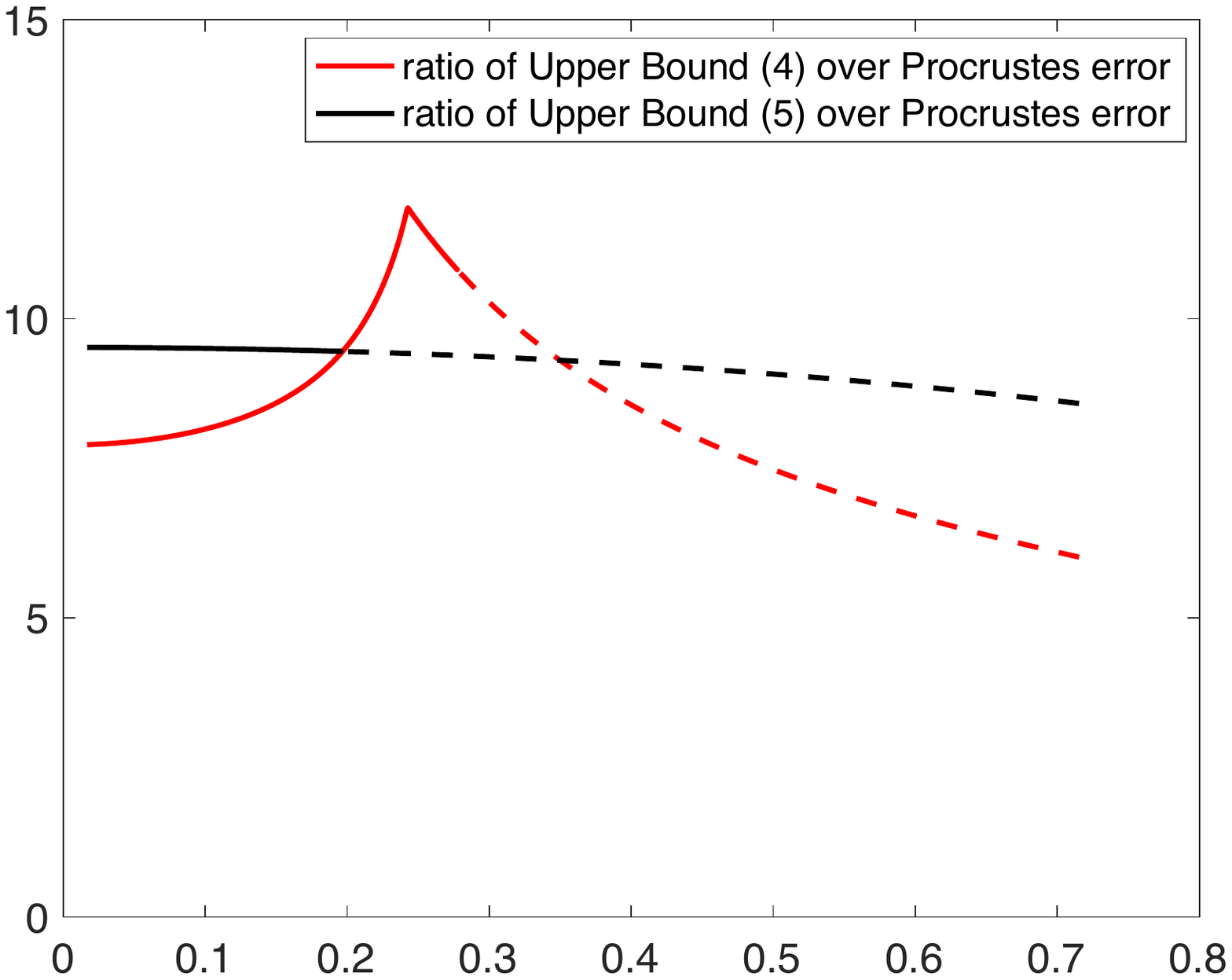}\put(-100,-10){{\scriptsize $\eps$}}}
	\caption{\ajR{Comparison between the procrustes error $\|Y-XQ\|_2$ with the upper bound~\eqref{mysharper} and upper bound~\eqref{procrustes} for the described generative model for configurations $X, Y$ with $\delta = 0.1$; $(a)$ is in log-log scale, $(b)$ is in regular scale; (c) plots the ratio of the upper bounds over the procrustes error. For the red curve (upper bound~\eqref{mysharper}) the solid part corresponds to the regime $\eps\|X^\ddag\| <1$. For the black curve (upper bound~\eqref{procrustes}) the solid part corresponds to the regime where the assumption in deriving this bound, namely $\eps\|X^\ddag\| <\tfrac{1}{\sqrt{2}}$, holds.}}
	\label{fig:bounds-proc}
\end{figure}

\paragraph{Manifold learning algorithms.} 
To evaluate the error rates obtained for manifold learning algorithms in Section \ref{sec:applications}, we carry out two numerical experiments.  

For the first experiment, we consider the `bending map' $\mathcal{B}: [-0.5,0.5]^d \mapsto \bbR^{d+1}$, defined as 
\[
\mathcal{B}(t_1, t_2,\dotsc, t_d) = [R\sin(t_1/R), t_2, \dotsc, t_d, R(1-\cos(t_1/R))]\,.
\]
This map bends the $d$-dimensional hypercube in the $(d+1)$-dimensional space and the parameter $R$ controls the degree of bending (with a large $R$ corresponding to a small amount of bending), and thus controls the reach of the resulting submanifold of $\bbR^{d+1}$. See \figref{bend-a} for an illustration.

We set $R=0.2$ and generate $n$ points $y_1,\dotsc, y_n$ uniformly at random in the $d$-dimensional hypercube. The samples on the manifold are then given by $x_i = \mathcal{B}(y_i)$, for $i=1, \dotsc, n$.
Since the points are well spread out on the manifold, the quantity $a$ given by \eqref{a} is $O(\log(n)/n)^{1/d}$ and following our discussion after the proof of \corref{isomap}, our bound~\eqref{eq:isomap-bound} is optimized at $r\asymp a^{1/2}$. With this choice of $a$, our bound \eqref{eq:isomap-bound} becomes of order $O((\log(n)/n)^{1/d})$.
Following this guideline, we let $r = 2(\log(n)/n)^{1/(2d)}$ and run Isomap (\algref{isomap})  for $d  = 2, 8 , 15$ and $n = 100, 200, \dotsc, 1000$.

Denoting by $z_1,\dotsc, z_n\in \bbR^d$ the output of Isomap in $\bbR^d$, and $Z = [z_1, \dotsc, z_n]^\top\in \bbR^{n\times d}$, $Y = [y_1, \dotsc, y_n]^\top\in \bbR^{n\times d}$, we compute the mismatch between the inferred locations $Z$ and the original ones $Y$ via our metric 
$${{\rm d}}(Y,Z) = \frac{1}{\sqrt{n}} \min_{Q\in \cO} \|Z- YQ\|_2 = \min_{Q \in \cO} \bigg(\frac1n \sum_{i \in [n]} \|z_i - Q y_i\|^2\bigg)^{1/2} \,.$$    
 
 For each $n$, we run the experiment for 50 different realizations of the points in the hypercube and compute the average and the $95\%$ confidence region of the the error ${{\rm d}}(Y,Z)$.
\figref{isomap-bend} reports the results for Isomap in a log-log scale, along with the best linear fits to the data points. 
The slopes of the best fitted lines are $-0.50, -0.14, -0.08$, for $d = 2, 8, 15$, which are close to the corresponding exponent $-\tfrac1d$ implied by our \corref{isomap}, namely,  -0.50,  -0.125, -0.067 (ignoring logarithmic factors). 

Likewise, \figref{MVU-bend} shows the error for Maximum Variance Unfolding (MVU) in the same experiment. As we see, MVU is achieving lower error rates than Isomap. Also the slopes of the best fitted lines are $-0.47, -0.12, -0.04$, for $d = 2, 8, 15$, which are in good agreement with our error rate ($O(\sqrt{d} n^{-1/d})$) in \corref{mvu}. 

In the second experiment, we consider the Swiss Roll manifold, which is a prototypical example in manifold learning. Specifically we consider the mapping $\mathcal{T}: [-\tfrac{9\pi}{2},\tfrac{15\pi}{2}]\times[-40,40]  \mapsto \reals^3$, given by
\begin{align}
\mathcal{T}(t_1,t_2) = [t_1\cos(t_1), t_2, t_1\sin(t_1)]\,.
 \end{align}
The range of this mapping is a Swiss Roll manifold (see \figref{swiss} for an illustration.) For this experiment, we consider non-uniform samples from the manifold as follows. For each $n$, we keep drawing points with first coordinate $\sim \normal(1,\sigma^2)$ and the second coordinate $\sim \normal(0,(10\sigma)^2)$, for a pre-determined value of $\sigma$. If the generated point falls in the rectangle $[-\tfrac{9\pi}{2},\tfrac{15\pi}{2}]\times[-40,40]$, we keep that otherwise reject it. We continue this procedure until we generate $n$ points $y_1, \dotsc, y_n$. The samples on the manifold are given by $x_i = \mathcal{T}(y_i)$. The parameter $\sigma$ controls the dispersion of the samples on the manifold. 

We run Isomap and MVU to infer the underlying positions $y_i$ from the samples $x_i$ on the manifold. For each $\sigma = 0.5, 1, 2$ and $n= 100, 200, \dotsc, 1000$, we run the experiment 50 times and compute the average error ${\rm d}(Y,Z)$ and the $95\%$ confidence region. The results are reported in \figref{swiss-tot} in a log-log scale. As we see the error curves for both algorithms scales as $\sim n^{-1/2}$ for various choice of $\sigma$, which again supports our theoretical error rates stated in Section~\ref{sec:applications}.  
 
 \vspace{0.5cm}

 \begin{figure}[t!]
 \captionsetup[subfloat]{captionskip=20pt}
	\centering
    \subfloat[Bended square ($d=2$, $R = 0.2$)\label{fig:bend-a}]{\includegraphics[scale = 0.4]{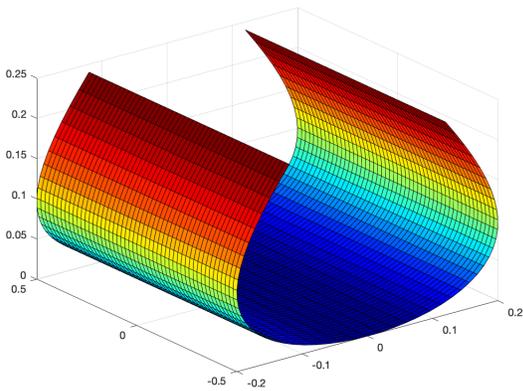} }
    ~\hspace{1.6cm}
    \subfloat[Isomap\label{fig:isomap-bend}]{\includegraphics[scale=0.4]{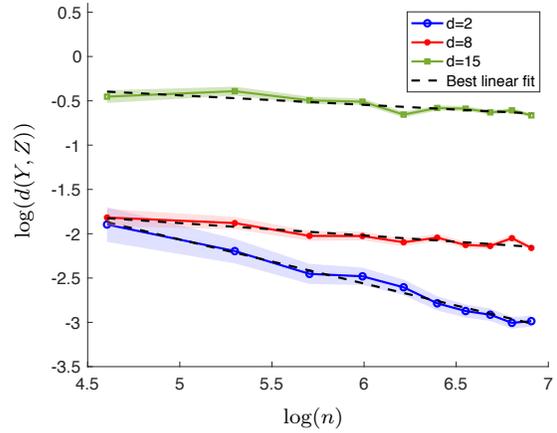}}
    \put(-205,60){\rotatebox{90}{{\scriptsize $\log(d(Y,Z))$}}}
    \put(-103,-10){{\scriptsize $\log(n)$}}
    \\
    \subfloat[Maximum Variance Unfolding\label{fig:MVU-bend}]{\includegraphics[scale=0.4]{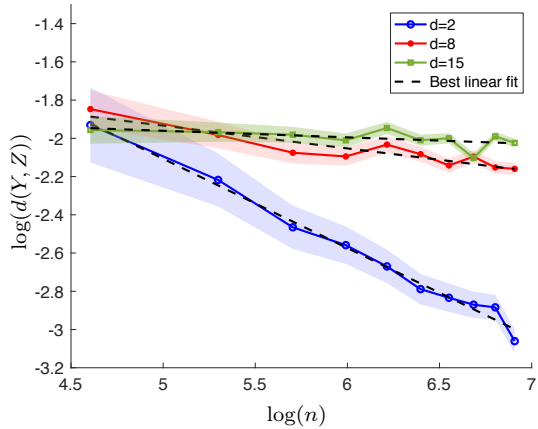}}
    \put(-200,55){\rotatebox{90}{{\scriptsize $\log(d(Y,Z))$}}}
    \put(-103,-10){{\scriptsize $\log(n)$}}
	\caption{\ajR{Performance of Isomap and MVU on the data points sampled from the bended hypercube of dimension $d$. Different curves correspond to different values of $d$. Each curve is plotted along with the corresponding best fitted line and the 95\% confidence region.}}
	\label{fig:bend}
\end{figure}


 \begin{figure}[t!]
 \captionsetup[subfloat]{captionskip=20pt}
	\centering
    \subfloat[Swiss roll manifold\label{fig:swiss}]{\includegraphics[scale = 0.4]{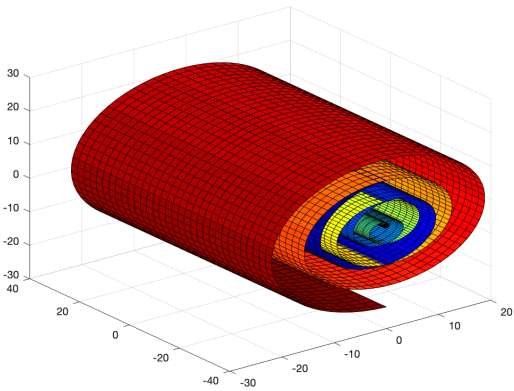}}
    ~\hspace{1.6cm}
    \subfloat[Isomap]{\includegraphics[scale=0.4]{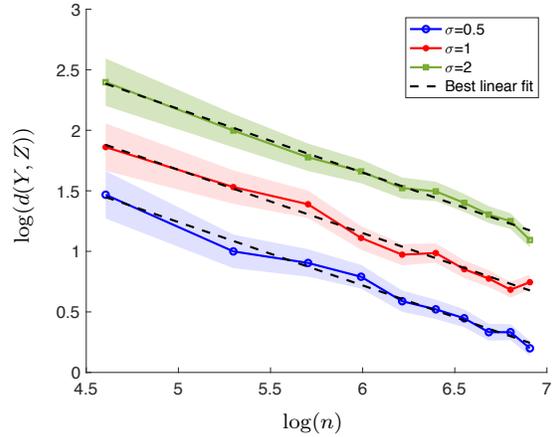}} 
    \put(-205,60){\rotatebox{90}{{\scriptsize $\log(d(Y,Z))$}}}
    \put(-103,-10){{\scriptsize $\log(n)$}}
    \\
    \subfloat[Maximum Variance Unfolding]{\includegraphics[scale=0.4]{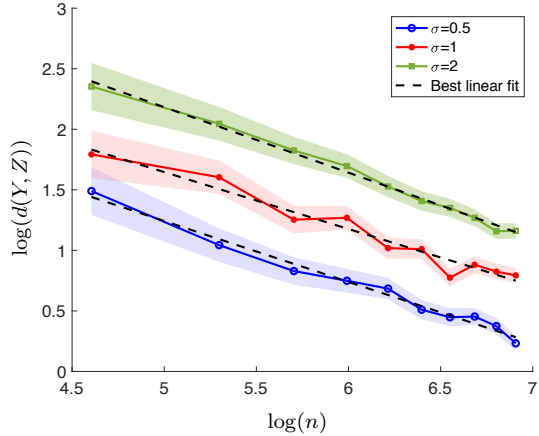}}
    \put(-200,55){\rotatebox{90}{{\scriptsize $\log(d(Y,Z))$}}}
    \put(-103,-10){{\scriptsize $\log(n)$}}
	\caption{\ajR{Performance of Isomap and MVU on the data points sampled non-uniformly from the Swiss Roll manifold. Different curves correspond to different values of $\sigma$, which controls the dispersion of the sampled points. Each curve is plotted along with the corresponding best fitted line and the 95\% confidence region.}}
	\label{fig:swiss-tot}
\end{figure}
}
\section{Discussion}
\label{sec:discussion}
\subsection{Optimality considerations}
The performance bounds that we derive for Isomap and Maximum Variance Unfolding are the same up to a universal multiplicative constant.  This may not be surprising as they are known to be closely related, since the work of \citet{paprotny2012connection}.  Based on our analysis of classical scaling, we believe that the bound for Isomap is sharp up to a multiplicative constant.  But one may wonder if Maximum Variance Unfolding, or a totally different method, can do strictly better.

This optimality problem can be formalized as follows:  

Consider the class of isometries $\varphi: \cD \to \M \subset \bbR^D$, one-to-one, such that its domain $\cD$ is a convex subset of $\bbR^d$ with max-radius at most $\rho_0$ and half-width at least $\omega_0 > 0$, and its range $\M$ is a submanifold with reach at least $\tau_0 > 0$.  To each such isometry $\varphi$, we associate the uniform distribution on its range $\M$, denoted by ${\sf P}_\varphi$.  We then assume that we are provided with iid samples of size $n$ from ${\sf P}_\varphi$, for some unknown isometry $\varphi$ in that class.  If the sample is denoted by $x_1, \dots, x_n \in \M$, with $x_i = \varphi(y_i)$ for some $y_i \in \cD$, the goal is to recover $y_1, \dots, y_n$ up to a rigid transformation, and the performance is measured in average squared error. Then, what is the optimal achievable performance?

Despite some closely related work on manifold estimation, in particular work of \citet{genovese2012manifold} and of \citet{kim2015tight}, we believe the problem remains open.  Indeed, while in the setting in dimension $d=1$ the two problems are particularly close, in dimension $d \ge 2$ the situation here appears more delicate here, as it relies on a good understanding of the interpolation of points by isometries.

\subsection{Choosing landmarks}
\label{sec:app_landmarks}
In this subsection we discuss the choice of landmarks.  
We consider the two methods originally proposed by \citet{de2004sparse}:
\bitem
\item {\sf Random.}  
The landmarks are chosen uniformly at random from the data points.
\item {\sf MaxMin.}  
After choosing the first landmark uniformly at random from the data points, each new landmark is iteratively chosen from the data points to maximize the minimum distance to the existing landmarks. 
\eitem
(For both methods, \citet{de2004sparse} recommend using different initializations.)

The first method is obviously less computationally intensive compared to the second method, but the hope in the more careful (and also more costly) selection of landmarks in the second method is that it would require fewer landmarks to be selected.  In any case, \citet{de2004sparse} observe that the random selection is typically good enough in practice, so we content ourselves with analyzing this method.  

In view of our findings (\corref{trilateration}, \ref{cor:isomap}, \ref{cor:landmark isomap}, and \ref{cor:mvu}), a good choice of landmarks is one that has large (embedded) half-width, ideally comparable to, or even larger than that of the entire dataset.
In that light, the problem of selecting good landmarks is closely related, if not identical, to problem of selecting rows of a tall matrix in a way that leads to a submatrix with good condition number.  In particular, several papers have established bounds for various ways of selecting the rows, some of them listed in \cite[Tab 2]{holodnak2015randomized}.  Here the situation is a little different in that the dissimilarity matrix is not directly available, but rather, rows (corresponding to landmarks) are revealed as they are selected.

The {\sf Random} method, nonetheless, has been studied in the literature.  Rather than fetch existing results, we provide a proof for the sake of completeness.  As everyone else, we use random matrix concentration \cite{tropp2012user}.  We establish a bound for a slightly different variant where the landmarks are selected with replacement, as it simplifies the analysis.  Related work is summarized in \cite[Tab 3]{holodnak2015randomized}, although for the special case where the data matrix (denoted $Y$ earlier) has orthonormal columns (an example of paper working in this setting is \cite{ipsen2014effect}).

\begin{prp}\label{prp:random landmark}
Suppose we select $\ell$ landmarks among $n$ points in dimension $d$, with half-width $\omega$ and max-radius $\rho_\infty$, according to the {\sf Random} method, but with replacement.  Then with probability at least $1 - 2 (d+1) \exp[-\ell \omega^2/9\rho_\infty^2]$, the half-width of the selected landmarks is at least $\omega/2$.
\end{prp}

The proof of \prpref{random landmark} is given in \secref{proof random landmark}.
Thus, if $\ell \ge 9 (\rho_\infty/\omega)^2 \log(2(d+1)/\delta)$, then with probability at least $1-\delta$ the landmark set has half-width at least $\omega/2$.  Consequently, if the dataset is relatively well-conditioned in that its aspect ratio, $\rho_\infty/\omega$, is relatively small, then {\sf Random} (with replacement) only requires the selection of a few landmarks in order to output a well-conditioned subset (with high probability). 

%


\section{Proofs}
\label{sec:proofs}

\subsection{Preliminaries}
\label{sec:preliminaries}
We start by stating a number of lemmas pertaining to linear algebra and end the section with a result for a form of  procrustes analysis, a well-known method for matching two sets of points in a Euclidean space.  

\paragraph{Schatten norms}
For a matrix\footnote{ All the matrices and vectors we consider are real, unless otherwise specified.} $A$, we let $\nu_1(A) \ge \nu_2(A) \ge \cdots$ denote its singular values.
Let $\|\cdot\|_p$ denote the following Schatten quasi-norm,
\beq
\|A\|_p \equiv \big(\nu_1(A)^p + \cdots + \nu_d(A)^p)^{1/p},
\eeq
which is a true norm when $p \in [1, \infty]$.  When $p = 2$ it corresponds to the Frobenius norm (which will also be denoted by $\|\cdot\|_2$) and when $p = \infty$ it corresponds to the usual operator norm (which will also be denoted by $\|\cdot\|$).
We mention that each Schatten quasi-norm is unitary invariant, and satisfies
\beq\label{norm}
\|AB\|_p \le \|A\|_\infty \|B\|_p,
\eeq
for any matrices of compatible sizes, and it is sub-multiplicative if it is a norm ($p \ge 1$).
In addition, $\|A\|_p = \|A^\top\|_p$ and $\|A\|_p = \|A^\top A\|_{p/2}^{1/2} = \|A A^\top\|_{p/2}^{1/2}$, due to the fact that 
\beq
\|A\|_p^p
= \sum_j \nu_j(A)^p
= \sum_j \nu_j(A^\top A)^{p/2}
= \|A^\top A\|_{p/2}^{p/2}\,.
\eeq
and if $A$ and $B$ are positive semidefinite satisfying $A \preceq B$, where $\preceq$ denotes the Loewner order, then $\|A\|_p \le \|B\|_p$.
\ajr{
To see this, note that by definition $A \preceq B$ means $0\preceq B-A$, and so $0\le v^\top(B-A)v$ for any vector $v$. Therefore, by using the variational principle of eigenvalues (min-max Courant-Fischer theorem) we have
\begin{align*}
\nu_j(A) &= \underset{V, {{\rm dim}}(V) = n-j+1} {\min} \;\; \underset{v\in V, \|v\|=1}{\max}\;\; v^\top A v\\
&\le \underset{V, {{\rm dim}}(V) = n-j+1}{\min} \;\; \underset{v\in V, \|v\|=1}{\max}\;\; v^\top B v = \nu_j(B)\,,
\end{align*}
for all $j$. As a result, $\|A\|_p \le \|B\|_p$.
We refer the reader to \cite{bhatia2013matrix} for more details on the Schatten norms and the Loewner ordering on positive semidefinite matrices.
}

Unless otherwise specified, $p$ will be fixed in $[1, \infty]$.
Note that, for any fixed matrix $A$, $\|A\|_p \le \|A\|_q$ whenever $q \le p$, and 
\beq\label{continuity}
\|A\|_p \to \|A\|_\infty, \quad p \to \infty.
\eeq

\paragraph{Moore-Penrose pseudo-inverse}
The Moore-Penrose pseudo-inverse of a matrix is defined as follows \cite[Thm III.1]{MR1061154}.  
Let $A$ be a $m$-by-$k$ matrix, where $m \ge k$, with singular value decomposition $A = U D V^\top$, where $U$ is $m$-by-$k$ orthogonal, $V$ is $k$-by-$k$ orthogonal, and $D$ is $k$-by-$k$ diagonal with diagonal entries $\nu_1 \ge \cdots \ge \nu_l > 0 = \cdots = 0$, so that the $\nu_j$'s are the nonzero singular values of $A$ and $A$ has rank $l$.  
The pseudo-inverse of $A$ is defined as $A^\ddag = V D^\ddag U^\top$, where $D^\ddag = \diag(\nu_1^{-1}, \dots, \nu_l^{-1}, 0, \dots, 0)$.  
If the matrix $A$ is tall and full rank, then $A^\ddag = (A^\top A)^{-1} A^\top$.  In particular, if a matrix is square and non-singular, its pseudo-inverse coincides with its inverse.

\begin{lem}\label{lem:pseudo}
Suppose that $A$ is a tall matrix with full rank.  Then $A^\ddag$ is non-singular, and for any other matrix $B$ of compatible size,
\beq\label{pseudo}
\|B\|_p \le \|A^\ddag\|_\infty \|AB\|_p.
\eeq
\end{lem}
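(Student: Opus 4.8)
The plan is to exploit the identity $A^\ddag A = I$, valid for any tall full-rank matrix, and then reduce \eqref{pseudo} to the sub-multiplicativity property \eqref{norm} recorded above.

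First I would record the explicit form of the pseudo-inverse. Since $A$ is $m$-by-$k$ with $m \ge k$ and has full rank $k$, the Gram matrix $A^\top A$ is $k$-by-$k$ and invertible, so by the formula recalled just before the lemma, $A^\ddag = (A^\top A)^{-1} A^\top$. Equivalently, writing $A = U D V^\top$ for a thin SVD with $D = \diag(\nu_1, \dots, \nu_k)$ and $\nu_k > 0$, one has $A^\ddag = V D^{-1} U^\top$; in particular $A^\ddag$ has rank $k$ — hence is non-singular when $m = k$ — and $\|A^\ddag\|_\infty = 1/\nu_k < \infty$, so the right-hand side of \eqref{pseudo} is well-defined precisely because $A$ is assumed full rank. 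From either expression one reads off the crucial identity $A^\ddag A = I_k$ (for the second expression this is immediate; for the SVD expression it follows from $U^\top U = I_k$ and $V^\top V = I_k$).

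Next, for $B$ of size compatible with the product $AB$ — that is, $B$ has $k$ rows — I would simply write $B = I_k B = (A^\ddag A) B = A^\ddag (AB)$ by associativity of matrix multiplication. Applying \eqref{norm} with the matrices $A^\ddag$ and $AB$ then gives $\|B\|_p = \|A^\ddag (AB)\|_p \le \|A^\ddag\|_\infty \|AB\|_p$, which is exactly \eqref{pseudo}.

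There is essentially no obstacle here: the argument is a one-line consequence of $A^\ddag A = I_k$ together with \eqref{norm}. The only points deserving a word of care are that ``non-singular'' is to be read in the full-rank sense when $A$ is not square, and that finiteness of $\|A^\ddag\|_\infty$ (equivalently, meaningfulness of the bound) hinges on the full-rank hypothesis. If one prefers to avoid the SVD entirely, the identity $A^\ddag A = I_k$ can be derived directly from the four defining Moore--Penrose conditions, or from $A^\ddag = (A^\top A)^{-1} A^\top$.
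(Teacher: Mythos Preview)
Your argument is correct and is essentially identical to the paper's own proof: both use $A^\ddag A = I$ (valid since $A$ is tall and full rank) and then apply \eqref{norm} to $B = A^\ddag (AB)$. The paper's version is just the one-line form of what you wrote.
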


\begin{proof}
This simply comes from the fact that $A^\ddag A = I$ (since $A$ is tall and full rank), so that
\beq
\|B\|_p = \|A^\ddag A B\|_p \le \|A^\ddag\|_\infty \|AB\|_p,
\eeq
by \eqref{norm}.
\end{proof}

\begin{lem} \label{lem:wedin}
Let $A$ and $B$ be matrices of same size.  Then, for $p \in \{2, \infty\}$,
\beq\label{wedin}
\|B^\ddag - A^\ddag\|_p \le \frac{\sqrt{2} \|A^\ddag\|^2 \|B - A\|_p}{(1 - \|A^\ddag\| \|B - A\|)_+^2}.
\eeq
\end{lem}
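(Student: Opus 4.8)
The plan is to reduce the statement to the known perturbation theory for the Moore--Penrose pseudo-inverse, due to Wedin, and then to massage the resulting bound into the clean form stated here. The classical result (see Wedin, or \cite{MR1061154}) states that for matrices $A, B$ of the same size with $\operatorname{rank}(B) = \operatorname{rank}(A)$,
\beq
\|B^\ddag - A^\ddag\|_p \le \mu_p \, \|A^\ddag\| \, \|B^\ddag\| \, \|B - A\|_p,
\eeq
where $\mu_p = \sqrt{2}$ for $p \in \{2, \infty\}$ (and $\mu_p = 1$ in some one-sided cases). So the first step is to invoke this, reducing everything to controlling $\|B^\ddag\|$ in terms of $\|A^\ddag\|$ and $\|B-A\|$.

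The second step is the bound on $\|B^\ddag\|$. Writing $\sigma_{\min}(A) = 1/\|A^\ddag\|$ for the smallest nonzero singular value, Weyl's inequality gives $|\sigma_k(B) - \sigma_k(A)| \le \|B - A\| \le \|B-A\|$ for every $k$ (here using the operator norm, which is dominated by $\|B-A\|_p$ for the $p$ we care about, but actually the cleaner route is to keep $\|B-A\|$ as the operator norm throughout the denominator, matching the statement). When $\|A^\ddag\|\,\|B-A\| < 1$, this forces $\sigma_{\min}(B) \ge \sigma_{\min}(A) - \|B-A\| = (1 - \|A^\ddag\|\,\|B-A\|)/\|A^\ddag\| > 0$, and in particular (via Mirsky/Weyl, exactly the remark flagged after Remark~\ref{rem:trilateration}) $B$ has the same rank as $A$ so the Wedin bound applies. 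Hence
\beq
\|B^\ddag\| = \frac{1}{\sigma_{\min}(B)} \le \frac{\|A^\ddag\|}{1 - \|A^\ddag\|\,\|B - A\|}.
\eeq
Substituting into the Wedin inequality yields
\beq
\|B^\ddag - A^\ddag\|_p \le \sqrt{2}\, \|A^\ddag\| \cdot \frac{\|A^\ddag\|}{1 - \|A^\ddag\|\,\|B-A\|} \cdot \|B-A\|_p = \frac{\sqrt{2}\,\|A^\ddag\|^2\,\|B-A\|_p}{1 - \|A^\ddag\|\,\|B-A\|}.
\eeq
The final step is cosmetic: since $1 - \|A^\ddag\|\,\|B-A\| \ge (1 - \|A^\ddag\|\,\|B-A\|)_+^2$ whenever the quantity lies in $(0,1]$ (because $t \ge t^2$ for $t \in [0,1]$), the displayed bound implies the one in \eqref{wedin}; and when $\|A^\ddag\|\,\|B-A\| \ge 1$ the right-hand side of \eqref{wedin} is $+\infty$, so there is nothing to prove.

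The main obstacle, and the only genuinely nontrivial input, is the Wedin-type inequality with the constant $\sqrt 2$ for the spectral and Frobenius norms — this is a known but nonelementary fact (the constant jumps from $1$ for one-sided/acute perturbations to $\sqrt 2$ in general, and for general unitarily invariant norms it can be even larger), so I would cite it rather than reprove it; everything else is Weyl's inequality plus the elementary estimate $t \ge t^2$ on $[0,1]$. One subtlety worth stating carefully is the distinction between the operator norm $\|B-A\|$ appearing in the denominator and the Schatten norm $\|B-A\|_p$ appearing in the numerator: the rank-stability argument and the lower bound on $\sigma_{\min}(B)$ only need the operator norm, and since $\|B-A\| \le \|B-A\|_p$ the stated hypothesis form is consistent, so no loss is incurred.
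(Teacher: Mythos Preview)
Your proof is correct and follows essentially the same two-step approach as the paper: cite Wedin's pseudo-inverse perturbation bound, then control $\|B^\ddag\|$ via Mirsky/Weyl on the smallest singular value. The only cosmetic difference is that the paper invokes Wedin in the form $\|B^\ddag - A^\ddag\|_p \le \sqrt{2}\,(\|A^\ddag\|\vee\|B^\ddag\|)^2\,\|B-A\|_p$, which after substituting the bound on $\|B^\ddag\|$ produces the squared denominator directly, whereas your product form $\sqrt{2}\,\|A^\ddag\|\,\|B^\ddag\|\,\|B-A\|_p$ yields the sharper intermediate bound with a first-power denominator that you then relax via $t\ge t^2$; your explicit handling of the rank-preservation hypothesis is also a point the paper leaves implicit.
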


\begin{proof}
A result of Wedin \cite[Thm III.3.8]{MR1061154} gives \footnote{\ajr{For $p = \infty$, the factor $\sqrt{2}$ in \eqref{eq:Lema3-1} can be removed, giving a tighter bound in this case.}}
\beq\label{eq:Lema3-1}
\|B^\ddag - A^\ddag\|_p \le \sqrt{2}\, (\|B^\ddag\| \vee \|A^\ddag\|)^2\, \|B - A\|_p, \quad p \in \{2, \infty\}.
\eeq
Assuming $B$ has exactly $k$ nonzero singular values, using Mirsky's inequality \cite[Thm IV.4.11]{MR1061154}, namely 
\beq\label{mirsky}
\max_j |\nu_j(B) - \nu_j(A)| \le \|B - A\|,
\eeq
we have
\beq\label{eq:Lema3-2}
\|B^\ddag\|^{-1} = \nu_k(B) \ge (\nu_k(A) - \|B - A\|)_+ \ge (\|A^\ddag\|^{-1} - \|B - A\|)_+.
\eeq
By combining Equations~\eqref{eq:Lema3-1} and~\eqref{eq:Lema3-2}, we get
\beq
\|B^\ddag - A^\ddag\|_p \le \sqrt{2} \left(\|A^\ddag\| \vee \frac{1}{(\|A^\ddag\|^{-1} - \|B - A\|)_+}\right)^2 \|B - A\|_p\,,
\eeq
from which the result follows.
\end{proof}

\paragraph{Some elementary matrix inequalities}
The following lemmas are elementary inequalities involving Schatten norms.

\begin{lem} \label{lem:perp-matrices}
For any two matrices $A$ and $B$ of same size such that $A^\top B = 0$ or $A B^\top = 0$,
\beq\label{perp-matrices}
\|A + B\|_p \ge \|A\|_p \vee \|B\|_p.
\eeq
\end{lem}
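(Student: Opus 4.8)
The plan is to reduce everything to two facts recorded just above: the identity $\|M\|_p = \|M^\top M\|_{p/2}^{1/2} = \|M M^\top\|_{p/2}^{1/2}$, and the monotonicity of Schatten $(p/2)$-quasi-norms under the Loewner order (which, as noted, follows from the eigenvalue comparison $\nu_j(S)\le\nu_j(T)$ for $0\preceq S\preceq T$ via Courant--Fischer). I would treat the two hypotheses in parallel; say $A^\top B = 0$. Then $B^\top A = (A^\top B)^\top = 0$ as well, so expanding the Gram matrix gives
\[
(A+B)^\top(A+B) = A^\top A + A^\top B + B^\top A + B^\top B = A^\top A + B^\top B .
\]
Since $A^\top A \succeq 0$ and $B^\top B \succeq 0$, this yields the two Loewner inequalities $A^\top A \preceq (A+B)^\top(A+B)$ and $B^\top B \preceq (A+B)^\top(A+B)$.

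Next I would apply monotonicity with exponent $q = p/2$ together with the norm identity above, obtaining
\[
\|A\|_p^2 = \|A^\top A\|_{p/2} \le \|(A+B)^\top(A+B)\|_{p/2} = \|A+B\|_p^2 ,
\]
and the same chain with $B$ in place of $A$. Taking square roots and then the maximum of the two bounds gives exactly $\|A+B\|_p \ge \|A\|_p \vee \|B\|_p$. For the other hypothesis, $AB^\top = 0$, the argument is identical after replacing every occurrence of the Gram matrix $M^\top M$ by $M M^\top$; alternatively one simply applies the case already proved to the transposes $A^\top, B^\top$ and uses $\|M^\top\|_p = \|M\|_p$, which is enough since $(A+B)^\top = A^\top + B^\top$.

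I do not expect a genuine obstacle here. The only point needing a moment's care is that when $p < 2$ the exponent $p/2$ lies below $1$, so $\|\cdot\|_{p/2}$ is only a quasi-norm; but the monotonicity under the Loewner order that we invoke is proved through the singular-value comparison and does not appeal to the triangle inequality, so it remains valid in that range. Everything else is a direct substitution into the stated identities.
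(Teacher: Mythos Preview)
Your proof is correct and follows essentially the same approach as the paper: expand $(A+B)^\top(A+B)=A^\top A+B^\top B$ under the hypothesis $A^\top B=0$, compare in the Loewner order, and convert via $\|M\|_p=\|M^\top M\|_{p/2}^{1/2}$. Your treatment is slightly more explicit about the second hypothesis $AB^\top=0$ (via transposes) and about the $p<2$ quasi-norm caveat, but these are cosmetic additions to the same argument.
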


\begin{proof}
Assume without loss of generality that $A^\top B = 0$.  
In that case, $(A+B)^\top (A+B) = A^\top A + B^\top B$, which is not smaller than $A^\top A$ or $B^\top B$ in the Loewner  order.  
Therefore,
\begin{align}
\|A\|_p
= \|A^\top A\|_{p/2}^{1/2}
&\le \|A^\top A + B^\top B\|_{p/2}^{1/2} \\
&= \|(A+B)^\top (A+B)\|_{p/2}^{1/2}
= \|A+B\|_p,
\end{align}
applying several of the properties listed above for Schatten (quasi)norms.
\end{proof}

\ajR{

\begin{lem}\label{lem:A-II}
For any matrix $A$ and any positive semidefinite matrix $B$, we have
\beq\label{A-II}
\|A\|_p \le \|A(B+I)\|_p,
\eeq
where $I$ denotes the identity matrix, with the same dimension as $B$ .
\end{lem}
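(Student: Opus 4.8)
The plan is to reduce the claimed inequality to a Loewner-order comparison between two positive semidefinite matrices, and then invoke the monotonicity of Schatten (quasi)norms under the Loewner order that was recorded above.

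First I would use the identity $\|M\|_p = \|M M^\top\|_{p/2}^{1/2}$, valid for every Schatten quasi-norm as noted in the preliminaries, applied with $M = A(B+I)$. Since $B+I$ is symmetric, this gives
\[
\|A(B+I)\|_p = \|A (B+I)^2 A^\top\|_{p/2}^{1/2}, \qquad \|A\|_p = \|A A^\top\|_{p/2}^{1/2}.
\]
Hence it suffices to show $\|A A^\top\|_{p/2} \le \|A (B+I)^2 A^\top\|_{p/2}$.

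Next I would observe that, since $B \succeq 0$, we have $(B+I)^2 = B^2 + 2B + I \succeq I$, so $(B+I)^2 - I \succeq 0$, and therefore $A\big((B+I)^2 - I\big)A^\top \succeq 0$, i.e. $A(B+I)^2 A^\top \succeq A A^\top$ in the Loewner order. Both matrices are positive semidefinite, so applying the monotonicity of $\|\cdot\|_q$ under the Loewner order (established above via the Courant–Fischer min–max characterization) with $q = p/2$ yields $\|A A^\top\|_{p/2} \le \|A(B+I)^2 A^\top\|_{p/2}$. Taking square roots of both sides then gives \eqref{A-II}.

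I do not expect a genuine obstacle here; the only point deserving a little care is that the norm monotonicity under the Loewner order is being applied with exponent $p/2$, which may be less than $1$. However, the Courant–Fischer argument recalled above shows directly that $\nu_j(A A^\top) \le \nu_j\big(A(B+I)^2 A^\top\big)$ for every index $j$, and this eigenvalue-wise domination implies the inequality for the quasi-norm $\|\cdot\|_{p/2}$ just as well as for a true norm, so the argument goes through for all $p \in [1,\infty]$.
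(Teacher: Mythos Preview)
Your proof is correct and follows essentially the same route as the paper's: both arguments reduce to the Loewner comparison $A(B+I)^2A^\top \succeq AA^\top$ (equivalently, $A(B^2+2B)A^\top \succeq 0$), then invoke the Courant--Fischer min--max principle to deduce the singular-value domination $\nu_k(A(B+I)) \ge \nu_k(A)$ for every $k$, from which the Schatten inequality follows. The only cosmetic difference is that you pass through the identity $\|M\|_p = \|MM^\top\|_{p/2}^{1/2}$ and then take a square root, whereas the paper extracts the singular-value inequality directly; your remark about the $p/2$ quasi-norm case is well taken and matches what the paper does implicitly.
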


\begin{proof}
We write
\[
A(B+I) (B+I)^\top A^\top = A(B^2+2B+I)A^\top = AA^\top + A(B^2+2B) A^\top,
\]
with $A(B^2+2B) A^\top \succeq 0$. Therefore, for all $k$,
\[
\nu_k(A(B+I) (B+I)^\top A^\top) \ge \nu_k(AA^\top),
\]
which then implies that $\nu_k(A(B + I)) \ge \nu_k(A)$ for all $k$, which finally yields the result from the mere  definition of the $p$-Schatten norm.
\end{proof}
}

%

\ajR{
\subsection{Proof of \thmref{procrustes}}
\label{sec:proof_procrustes}
Suppose $X, Y\in \mathbb{R}^{n\times d}$  and 
  let $P\in \mathbb{R}^{n\times n}$ be the orthogonal projection onto the column space of $X$, which can be expressed as $P = X X^\ddag$.  
Define $Y_1 = PY$ and $Y_2 = (I-P)Y$, and note that $Y = Y_1 + Y_2$ with $Y_2^\top Y_1 = 0$, and also $Y_2^\top X = 0$.  

Define $M = X^\ddag Y\in \mathbb{R}^{d\times d}$, and apply a singular value decomposition to obtain $M = U D V^\top$, where $U$ and $V$ are orthogonal matrices of size $d$, and $D$ is diagonal with nonnegative entries. Indeed columns of $U$ span the row space of $X$ and columns of $V$ span the row space of $Y$. {Then define $Q = U V^\top$, which is orthogonal.}
We show that the bound \eqref{procrustes} holds for this orthogonal matrix.

We start with the triangle inequality,
\begin{align}\label{eq:dec}
\|Y - X Q\|_p 
= \|Y_1 - XQ + Y_2\|_p 
\le \|Y_1 - XQ\|_p + \|Y_2\|_p.
\end{align}
Noting that $Y_1 = X X^\ddag Y = X M$, we have
\begin{align}
\|Y_1 - XQ\|_p 
= \|X M - X Q\|_p 
&= \|X U D V^\top - X U V^\top\|_p \nonumber\\
&= \|X U (D-I) V^\top\|_p 
\le \|X U (D-I)\|_p.\label{eq:U1}
\end{align}
Now by Lemma~\ref{lem:A-II}, we have
\begin{align}
\|XU(D-I)\|_p \le \|XU(D-I) (D+I)\|_p = \|XU(D^2-I)\|_p\,.\label{eq:U2}
\end{align}
Now by unitary invariance, we have
\begin{align}
\|XU(D^2-I)\|_p =\|XU(D^2-I)U^\top\|_p = \|XUD^2U^\top - XUU^\top\|_p = \|XUD^2U^\top - X\|_p\,,\label{eq:U3}
\end{align}
where in the last step we used the fact that columns of $U$ span the row space of $X$ and hence $UU^\top X^\top = X^\top$.
Combining~\eqref{eq:U1}, \eqref{eq:U2} and \eqref{eq:U3}, we obtain
\begin{align}
\|Y_1 - XQ\|_p  &\le  \|XUD^2U^\top - X\|_p \\
& =  \|(XMM^\top - X)(X^\ddagger X)^\top\|_p\\
&\le \|X^\ddagger\| \| XMM^\top X^\top - XX^\top\|_p\\
&=\|X^\ddag\| \|Y_1 Y_1^\top - X X^\top\|_p,
\end{align}
where the first equality holds since $X^\ddagger X = I$, given that $X$ has full column rank.

Coming from the other end, so to speak, we have
\begin{align}
\eps^2 = \|Y Y^\top - X X^\top\|_p
&= \|Y_1 Y_1^\top - X X^\top + Y_1 Y_2^\top + Y_2 Y_1^\top + Y_2 Y_2^\top\|_p \\
&\ge \|Y_1 Y_1^\top - X X^\top + Y_1 Y_2^\top\| \vee \|Y_2 Y_1^\top + Y_2 Y_2^\top\|_p \\
&\ge \|Y_1 Y_1^\top - X X^\top\|_p \vee \|Y_1 Y_2^\top\|_p \vee \|Y_2 Y_1^\top\|_p \vee \|Y_2 Y_2^\top\|_p, \label{delta1}
\end{align}
using \lemref{perp-matrices} thrice, once based on the fact that
\beqn
(Y_1 Y_1^\top - X X^\top + Y_1 Y_2^\top)^\top (Y_2 Y_1^\top + Y_2 Y_2^\top)
= \underbrace{(Y_1 Y_1^\top - X X^\top + Y_2 Y_1^\top)Y_2}_{= 0}(Y_1^\top + Y_2^\top)
= 0,
\eeq
and then based on the fact that
\beqn
(Y_1 Y_1^\top - X X^\top) (Y_1 Y_2^\top)^\top = \underbrace{(Y_1 Y_1^\top - X X^\top) Y_2}_{=0} Y_1^\top = 0,
\eeq
and
\beqn
(Y_2 Y_1^\top) (Y_2 Y_2^\top)^\top = Y_2 \underbrace{Y_1^\top Y_2}_{=0} Y_2^\top.
\eeq

From \eqref{delta1}, we extract the bound $\|Y_1 Y_1^\top - X X^\top\|_p \le \eps^2$, from which we get (based on the derivations above)
\beq\label{1st}
\|Y_1 - X Q\|_p 
\le \|X^\ddag\| \eps^2.
\eeq

 Recalling the inequality~\eqref{eq:dec}, we proceed to bound $\|Y_2\|_p$.  From \eqref{delta1}, we extract the bound $\|Y_2 Y_2^\top\|_p \le \eps^2$, and combine it with
\beqn
\|Y_2 Y_2^\top\|_p
= \|Y_2\|_{2p}^2
\ge d^{-1/p} \|Y_2\|_p^2\,,
\eeq
where $d$ is the number of columns and the inequality is Cauchy-Schwarz's, to get 
\beqn\label{Y2-general}
\|Y_2\|_p \le d^{1/2p} \eps\,.
\eeq
 
 We next derive another upper bound for $\|Y_2\|_p$, for the case that $\|X^\ddag\|\eps< 1$.
 Denote by $\lambda_1\ge\dotsc\ge\lambda_d$ be the singular values of $X$ and by $\nu_1\ge \dotsc\ge \nu_d$ the singular values of $Y_1$. Given that $X$ has full column rank we have $\lambda_d >0$ and so $\|X^\ddagger\| = 1/\lambda_d$. Further, by an application of Mirsky's inequality \cite[Thm IV.4.11]{MR1061154}, we have
\[
\max_i|\nu_i^2 - \lambda_i^2|\le \|Y_1 Y_1^\top - XX^\top\| \le  \|Y_1 Y_1^\top - XX^\top\|_p \le \eps^2,
\]
using Equation~\eqref{delta1}. Therefore $\nu_d^2> \lambda_d^2 - \eps^2 >0$ 
 by our assumption that $\|X^\ddag\| \eps^2 < 1$, which 
implies that $Y_1$ has full column rank. Now, by an application of Lemma~\ref{lem:pseudo}, we obtain
\beq\label{eq:Y2}
\|Y_2\|_p = \|Y_2^\top\|_p \le \|Y_1^\ddagger\| \|Y_1 Y_2^\top\|_p \le \eps^2\|Y_1^\ddagger\|\,,
\eeq
 where we used \eqref{delta1} in the last step. Also,
 \beq\label{eq:Y1}
 \|Y_1^\ddagger\| =  \frac{1}{\nu_d} \le \frac{1}{(\lambda_d^2 - \eps^2)^{1/2}} = \frac{\lambda_d^{-1}}{(1 - \eps^2 \lambda_d^{-2})^{1/2}}
 = \|X^\ddagger\| (1 - \eps^2\|X^\ddagger\|^2)^{-1/2}\,. 
 \eeq
 Combining~\eqref{eq:Y1} and \eqref{eq:Y2} we obtain
 \beq\label{eq:Y2-final}
 \|Y_2\|_p\le \eps^2 \|X^\ddagger\| (1 - \eps^2\|X^\ddagger\|^2)^{-1/2}\,, \quad \quad \text{ if }\;\; \|X^\ddag\|\eps<1\,. 
 \eeq

Combining the the bounds~\eqref{Y2-general} with \eqref{eq:Y2-final} and \eqref{1st} in the inequality~\eqref{eq:dec}, we get \eqref{mysharper}.
The bound~\eqref{procrustes} follows readily from~\eqref{mysharper}. 
}


\subsection{Proof of \thmref{trilateration}}
\label{sec:proof_trilateration}
Let $\bar a$ denote the average dissimilarity vector defined in \algref{trilateration} based on $Y$, and define $\bar b$ similarly based on $Z$.
Let $\Theta$ denote the matrix of dissimilarities between $\tY$ and $Z$, and let $\hat Y$ denote the result of \algref{trilateration} with inputs $Z$ and $\Theta$.  
From \algref{trilateration}, we have 
\beq
\tY^\top = \frac12 Y^\ddag (\bar a \one^\top - \tDelta^\top), \quad
\hat Y^\top = \frac12 Z^\ddag (\bar b \one^\top  -\Theta^\top), \quad
\tilde Z^\top = \frac12 Z^\ddag (\bar b \one^\top - \tilde\Lambda^\top),
\eeq 
due to the fact that the algorithm is exact.

We have
\beq
\|\tilde Z - \tY\|_2 \le \|\tilde Z - \hat Y\|_2 + \|\hat Y - \tY\|_2.
\eeq
On the one hand, 
\beq
2 \|\tilde Z - \hat Y\|_2 
\le \|Z^\ddag\| \|\tilde\Lambda - \Theta\|_2 
\le \|Z^\ddag\| (\|\tilde\Lambda - \tDelta\|_2 + \|\tDelta - \Theta\|_2). 
\eeq
On the other hand, starting with the triangle inequality,
\begin{align*}
2 \|\hat Y - \tY\|_2
&= \|Z^\ddag (\bar b \one^\top -\Theta^\top) - Y^\ddag (\bar a \one^\top -\tDelta^\top)\|_2 \\
&\le \|Z^\ddag (\bar b \one^\top -\Theta^\top) - Z^\ddag (\bar a \one^\top -\tDelta^\top)\|_2 + \|Z^\ddag (\bar a \one^\top -\tDelta^\top) - Y^\ddag (\bar a \one^\top -\tDelta^\top)\|_2 \\
&\le \|Z^\ddag\| (\|\bar b \one^\top- \bar a \one^\top\|_2 + \|\Theta -\tDelta\|_2) + \|\bar a \one^\top -\tDelta^\top\| \|Z^\ddag - Y^\ddag\|_2.
\end{align*}
Together, we find that 
\beq
2 \|\tilde Z - \tY\|_2 \le \|Z^\ddag\| (\|\tilde\Lambda - \tDelta\|_2 + 2 \|\Theta -\tDelta\|_2 + \sqrt{m} \|\bar b - \bar a\|) +  \|\bar a \one^\top -\tDelta^\top\| \|Z^\ddag - Y^\ddag\|_2.
\eeq

In the following, we bound the terms $\|\bar a \one^\top -\tDelta^\top\|$, $\|\Theta -\tDelta\|_2$ and $\|\bar b - \bar a\|$, separately. 

First, using \lemref{pseudo} and the fact that $(Y^\ddag)^\ddag = Y$ has full rank, 
\beq 
\|\tY\| = \frac12 \|Y^\ddag (\bar a \one^\top -\tDelta^\top)\| \ge \frac12 \|Y\|^{-1} \|\bar a \one^\top -\tDelta^\top\|\,.
\eeq
Therefore,
\beq
\|\bar a \one^\top -\tDelta^\top\| \le 2 \|Y\| \|\tY\|.
\eeq

Next, set $Y = [y_1, \cdots, y_m]^\top$ and $Z = [z_1, \cdots, z_m]^\top$, as well as $\tY = [\ty_1, \cdots, \ty_n]^\top$.
Since 
\beq
(\Theta - \tDelta)_{ij} = 2 \ty_i^\top (y_j - z_j) + \|z_j\|^2 - \|y_j\|^2,
\eeq
we have
\begin{align}
\|\Theta -\tDelta\|_2 
= \|2 \tY (Y^\top - Z^\top) + \one c^\top\|_2
\le 2 \|\tY\| \|Y - Z\|_2 + \sqrt{m} \|c\|,
\end{align}
with $c = (c_1, \dots, c_m)$ and $c_j = \|z_j\|^2 - \|y_j\|^2$.
Note that
\begin{align*}
\|c\|^2 
&= \sum_{j \in [m]} (\|z_j\|^2 - \|y_j\|^2)^2 \\
&\le \sum_{j \in [m]} \|z_j - y_j\|^2 (\|z_j\| + \|y_j\|)^2 \\
&\le (\rho_\infty(Y) + \rho_\infty(Z))^2 \|Z - Y\|_2^2,
\end{align*}
so that
\beq
\|\Theta -\tDelta\|_2 
\le 2 \|\tY\| \|Y - Z\|_2 + \sqrt{m} (\rho_\infty(Y) + \rho_\infty(Z)) \|Z - Y\|_2.
\eeq

Finally, recall that $\bar a$ and $\bar b$ are respectively the average of the columns of the dissimilarity matrix for the landmark $Y$ and the landmark $Z$.  Using the fact that the $y$'s are centered and that the $z$'s are also centered, we get 
\ajr{
\beq
\bar b - \bar a = c + \avgc \one,
\eeq
}
where \ajr{$\avgc = \frac1m \sum_{j \in [m]} c_j$, and therefore
\begin{align}
\|\bar b - \bar a\|^2 
\le \sum_{j \in [m]} (c_j + \avgc)^2
= \|c\|^2 + 3m \avgc^2
\le 4\|c\|^2\,,
\end{align}
using the Cauchy-Schawrz inequality at the last step.}

Combining all these bounds, we obtain the bound stated in \eqref{trilateration}.  The last part comes from the triangle inequality and an application of \lemref{wedin}.

\subsection{Proof of Corollary~\ref{cor:classical scaling}}\label{proof:cor:classical scaling}
\ajr{
If the half-width $\omega= 0$, the claim becomes trivial. Hence, we assume $\omega>0$, which implies that $Y = [y_1\dotsc y_m]^\top\in \bbR^{m\times d}$ is of rank $d$.
Recall that $\nu_d(Y)$ denotes the $d$-th largest singular value of $Y$. 
By characterization~\eqref{with} and since $Y$ has full column rank, we have $\nu_d(Y) = \sqrt{m} \omega$.

We denote by $\Lambda = (\lambda_{ij})\in \bbR^{m\times m}$ and $\Delta = (\delta_{ij})\in \bbR^{m\times m}$ and represent the centering matrix of size $m$, by $H$. Using \eqref{norm} and the fact that $\|H\|_\infty = 1$ (since $H$ is an orthogonal projection), we have
\beq\label{eq:eps-eta0}
\eps_0^2 \equiv \tfrac12 \|H(\Lambda - \Delta)H\| \le \|\Lambda - \Delta\| \le \|\Lambda - \Delta\|_2  = m\eta^2 \,.
\eeq

By our assumption \ajR{$\frac{\eta}{\omega}\le \frac{1}{\sqrt{2}} <1$}, which along with \eqref{eq:eps-eta0} yields 
\beq\label{eps-nu}
\eps_0^2<  m\omega^2 = \nu_{d}^2(Y)\,.
\eeq
In addition, by~\eqref{eq:MDSidentity} and since $Y\one = 0$ (data points are centered), we have $YY^\top = HYY^\top H = -\frac12 H\Delta H$, and as a result $\nu_d(-\frac12 H\Delta H) = \nu_d^2(Y)$.
By using the Weyl's inequality, we have
\[
\nu_d(-\dfrac12 H\Lambda H) \ge \nu_d(-\dfrac12 H\Delta H) - \eps_0^2 = \nu_d^2(Y) - \eps_0^2>0\,,
\] 
where the last step holds by~\eqref{eps-nu}. In words, the first top $d$ eigenvalues of $(-1/2) H\Lambda H$ are positive. Therefore, if $Z= [z_1,\dotsc, z_m]^\top \in \bbR^{m\times d}$ is the output of the classical scaling with input $\Lambda$, we have that $ZZ^\top$ is indeed the best rank $d$- approximation of $(-1/2) H\Lambda H$. Given that $(-1/2) H\Delta H$ is of rank $d$, this implies that
\beq
 \|ZZ^\top +\tfrac12 H\Lambda H\|_2 \le \|\tfrac 12 H(\Lambda-\Delta) H\|_2\,. 
\eeq
Thus, by triangle inequality
\begin{align}
\eps^2 \equiv\|ZZ^\top - YY^\top\| &\le \|ZZ^\top +\tfrac12 H\Lambda H\| + \|\tfrac12 H(\Lambda- \Delta)H\| \nonumber\\
&\le  \|ZZ^\top +\tfrac12 H\Lambda H\|_2 + \|\tfrac12 H(\Lambda- \Delta)H\|_2 \nonumber\\
&\le\|H(\Lambda - \Delta)H\|_2 \nonumber\\
&\le \|\Lambda - \Delta\|_2 \nonumber\\
&\le  m\eta^2 \le \ajR{m\omega^2/2}\,.\label{eq:eps-eta}
\end{align}
where in the penultimate line we used \eqref{norm} and the fact that $\|H\|_\infty = 1$. The last line follows from the definition of $\eta$ and our assumption on $\eta$, given in the theorem statement.

We next apply \thmref{procrustes} with $p=\infty$.
Note that by invoking Equations~\eqref{radius} and \eqref{with}, we get
\ajR{
\beq
\|Y^\ddag\| \eps  = \frac{\eps}{\sqrt{m}\omega} \le \frac{1}{\sqrt{2}}\,,
\eeq 
}
Hence, by using \thmref{procrustes} we have
\begin{align}
\min_{Q \in \cO} \bigg(\frac1m \sum_{i \in [m]} \|z_i - Q y_i\|^2\bigg)^{1/2} &\le \sqrt{\frac{d}{m}} \min_{Q\in \cO} \|Z-YQ\| \nonumber\\
& \le \sqrt{\frac{d}{m}} (\rho/\omega+2) \frac{\eps^2}{\sqrt{m}\omega}\nonumber\\
&\le \sqrt{d}(\rho/\omega+2) \frac{\eta^2}{\omega} \le \frac{3\sqrt{d}\rho \eta^2}{\omega^2}\,,
\end{align}
where the last line follows from~\eqref{eq:eps-eta} and the fact that $\omega\le \rho$.
}
\subsection{Proof of Corollary~\ref{cor:trilateration}}\label{proof:cor:trilateration}
\new{We apply \thmref{trilateration} to $\tY = [\ty_1, \cdots, \ty_n]^\top$, $Y = [y_1, \cdots, y_m]^\top$, and $Z = [z_1, \cdots, z_m]^\top$.
To be in the same setting, we need $Z$ to have full rank.  As we point out in \remref{trilateration}, this is the case as soon as $\|Y^\ddag\| \|Z-Y\| < 1$.  Since $\|Y^\ddag\| = (\sqrt{m} \omega)^{-1}$ and $\|Z-Y\| \le \sqrt{m} \max_{i\in [m]} \|z_i-y_i\| \le \sqrt{m} \eps$, the condition is equivalent to $\eps < \omega$, which is fulfilled by assumption.  
}
Continuing, we have
\beq
\|Z-Y\|_2\le \sqrt{m} \max_{i\in [m]} \|z_i-y_i\|\le \sqrt{m}\epsilon.
\eeq 
Hence, by~\eqref{trilateration-aux1}, 
\beq
\|Z^\ddag - Y^\ddag\|_2 \le \frac{\frac{2}{m\omega^2} \sqrt{m}\epsilon}{(1 -\frac{1}{\sqrt{m}\omega} \sqrt{m}\epsilon)_+^2} 
\le \frac{8 \eps}{\sqrt{m}\omega^2} 
\le \frac{4}{\sqrt{m}\omega}\,,
\eeq
using the fact that $\eps/\omega\le 1/2$.
Hence,
\beq
\|Z^\ddag\|  \le \|Y^\ddag\| + \|Z^\ddag - Y^\ddag\|
\le \frac{5}{\sqrt{m}\omega}\,, 
\eeq
 Further, $\|\tDelta-\tLambda\|_2 =  \sqrt{mn}\eta^2$. In addition, $\|\tY\| \le \sqrt{n} \zeta$. Likewise, $\|Y\| \le \sqrt{m}\rho$.
Therefore, by applying \thmref{trilateration}, we get
\begin{align}
\|\tZ - \tY\|_2 
&\le \frac{5}{\sqrt{m}\omega} \Big[\frac{1}{2}\sqrt{nm} \eta^2 + 2\sqrt{nm}\zeta \epsilon + \ajr{2} \sqrt{m} (2\rho_\infty + \epsilon) \sqrt{m} \epsilon \Big] 
+ (\sqrt{m} \rho)(\sqrt{n} \zeta) {\frac{8 \eps}{\sqrt{m}\omega^2}}\nonumber\\
&\le \ajr{20}\left( \frac{\sqrt{n}\eta^2}{\omega} + \frac{\sqrt{n} \zeta\epsilon}{\omega} + \frac{\rho_\infty+\epsilon}{\omega} \sqrt{m}\epsilon + \frac{\sqrt{n}\rho\zeta \eps}{\omega^2}\right)\,,
\end{align}
from which we get the stated bound, using the fact that $\eps \le \omega \le \rho \le \rho_\infty$.

\subsection{Proof of Corollary~\ref{cor:landmark isomap}}\label{proof:cor:landmark isomap}
Without loss of generality, suppose the chosen landmark points are $x_1, \dots, x_\ell$.  Using $\{\gamma_{ij}: i,j \in [\ell]\}$, we embed them using classical scaling, obtaining a centered point set $z_1, \dots, z_\ell \in \bbR^d$.  
\ajR{Note that by our assumption on the number of landmarks $\ell\ge 1$, we have
\[
\xi < (72\sqrt{d})^{-1} (\rho/\omega_*)^{-3} < \frac{1}{24} ({\rho}/{\omega_*})^{-2}\,,
\]
since $\omega_*\le \rho$. Hence the assumption on $\xi$ in \corref{isomap} holds and by applying this corollary,} we have
\beq
\min_{Q \in \cO} \bigg(\frac1\ell \sum_{i \in [\ell]} \|z_i - Q y_i\|^2\bigg)^{1/2} 
\le \frac{36 \sqrt{d} \rho_*^3}{\omega_*^2} \xi,
\eeq
where $\rho_*$ and $\omega_*$ are the max-radius and half-width of $\{y_1,\dotsc, y_\ell\}$.
We may assume that the minimum above is attained at $Q = I$ without loss of generality, in which case we have
\beq
\eps \equiv \max_{i \in [\ell]} \|z_i - y_i\| 
\le \frac{36 \sqrt{d \ell} \rho^3}{\omega_*^2} \xi,
\eeq
using the fact that $\rho_* \le \rho$.

The next step consists in trilaterizing the remaining points based on the embedded landmarks.  With $\eta$ as in \eqref{eta}, and noting that $\eps/\omega_* \le 1/{2}$ by our assumption on $\xi$, we may apply \corref{trilateration} (with the constant $C_0$ defined there) to obtain
\begin{align}
\frac1{C_0} \bigg(\frac1{n-\ell} \sum_{i = \ell+1}^n \|\tz_i - \ty_i\|^2\bigg)^{1/2} 
&\le \frac{\eta^2}{\omega_*} + \left[\frac{\rho_*\rho}{\omega_*^2} + \frac{\sqrt{\ell} \rho_*}{\sqrt{n-\ell}\,\omega_*}\right] \eps \\
&\le \frac{\eta^2}{\omega_*} + \frac{2 \rho^2 \eps}{\omega_*^2} \\
&\asymp \frac{\rho^2 \xi}{\omega_*} +  \frac{\rho^2}{\omega_*^2} \frac{\sqrt{d \ell} \rho^3 \xi}{\omega_*^2} \\
&\asymp \frac{\sqrt{d} \rho^5}{\omega_*^4} \sqrt{\ell}\, \xi\,,
\end{align}
using the fact that $\omega_* \le \rho_*$.

With this and the fact that 
\beq
\bigg(\frac1\ell \sum_{i =1}^\ell \|z_i - y_i\|^2\bigg)^{1/2} \le \eps,
\eeq
along with the bound on $\eps$, we have
\ajR{
\[
\min_{Q \in \cO} \bigg(\frac1n \sum_{i \in [\ell]} \|z_i - Q y_i\|^2 + \frac1n \sum_{i \in [n-\ell]} \|\tz_i - Q \ty_i\|^2\bigg)^{1/2} \lesssim \frac{\sqrt{d} \rho^5}{\omega_*^4} \sqrt{\ell}\, \xi
\asymp \frac{\rho^2}{\omega_*}\,,
\]
using our assumption on the number of landmarks.}



\appendix
\section*{Appendix}
\renewcommand{\thesubsection}{A.\arabic{subsection}}\subsection{A succinct proof that \algref{trilateration} is correct}
\label{sec:app_trilateration}
To prove that \algref{trilateration} is exact, it suffices to do so for the case where we want to position one point, i.e., when $n = 1$, and we denote that point by $\ty$.  In that case, $\tDelta$ is in fact a (row) vector, which we denote by $\tdelta^\top$.
We have $\|\ty - y_i\|^2 = \|\ty\|^2 + \|y_i\|^2 - 2 y_i^\top \ty$, so that $\delta = \|\ty\|^2 \one + \zeta - 2 Y \ty$, where $\zeta = (\|y_1\|^2, \dots, \|y_m\|^2)^\top$.  
We also have $\|y_j - y_i\|^2 = \|y_j\|^2 + \|y_i\|^2 - 2 y_j^\top y_i$, so that $\bar a = b \one + \zeta$, where $b = \frac1m (\|y_1\|^2 + \cdots + \|y_m\|^2)$, using the fact that $\frac1m \sum_{i=1}^m y_i = 0$.
Hence, $\bar a - \tdelta = (b - \|\ty\|^2) \one + 2 Y \ty$, and therefore, 
\beq
\frac12 Y^\ddag (\bar a - \tdelta) = \frac12 (b - \|\ty\|^2) Y^\ddag \one + Y^\ddag Y \ty.
\eeq
We now use the fact that $Y^\ddag = (Y^\top Y)^{-1} Y^\top$.
On the one hand, $Y^\ddag \one = (Y^\top Y)^{-1} Y^\top \one = 0$ since $Y^\top \one = 0$ (because the point set is centered).
On the other hand, $Y^\ddag Y = (Y^\top Y)^{-1} Y^\top Y = I$.
We conclude that $\frac12 Y^\ddag (\bar a - \tdelta) = \ty$, which is what we needed to prove.

\subsection{Proof of \prpref{better BSLT}}\label{sec:proof better BSLT}

The data points are denoted $x_1, \dots, x_n \in \M$, and by assumption we assume that $x_i = \varphi(y_i)$, where $\varphi : \cD \to \M$ is a one-to-one isometry, with $\cD$ being a convex subset of $\bbR^d$.  Fix $i, j \in [n]$, and note that $g_{ij} = g_\cM(x_i, x_j) = \|y_i - y_j\|$.  

If $g_{ij} \le r$, then $\|x_i - x_j\| \le g_{ij} \le r$, so that $i$ and $j$ are neighbors in the graph, and in particular $\gamma_{ij} = \|x_i - x_j\|$.  We may thus conclude that, in this situation, $\gamma_{ij} \le g_{ij}$, which implies the stated bound.

Henceforth, we assume that $g_{ij} > r$.
Consider $z_k = y_i + (k/m) (y_j - y_i)$, where $m = \lceil 2 g_{ij}/r \rceil \ge 2$.  Note that $z_0 = y_i$ and $z_m = y_j$.
Let $y_{i_k}$ be the closest point to $z_k$ among $\{y_1, \dots, y_n\}$, with $i_0 = i$ and $i_m = j$.  
By the triangle inequality, we have
\begin{align}
\|y_{i_{k+1}} - y_{i_k}\| 
&\le \|z_{k+1} - z_k\| + \|y_{i_{k+1}} - z_{k+1}\| + \|y_{i_k} - z_k\| \\
&\le \frac1m g_{ij} + 2a
\le r/2 + 2a
\le r,
\end{align}
if $a/r \le 1/4$.
Therefore, 
\beq
\|x_{i_{k+1}} - x_{i_k}\| 
\le g_\M(x_{i_{k+1}}, x_{i_k})
= \|y_{i_{k+1}} - y_{i_k}\|
\le r,
\eeq
implying that $(i_k : k = 0, \dots, m)$ forms a path in the graph.  

So far, the arguments are the same as in the proof of \cite[Thm 2]{bernstein2000graph}.  What makes our arguments sharper is the use of the Pythagoras theorem below.  To make use of that theorem, we need to construct a different sequence of points on the line segment. 
Let $\tilde z_k$ denote the orthogonal projection of $y_{i_k}$ onto the line (denoted $\cL$) defined by $y_i$ and $y_j$.  
See \figref{line} for an illustration.

\begin{figure}[t!]
	\centering
    \includegraphics[scale = 0.5]{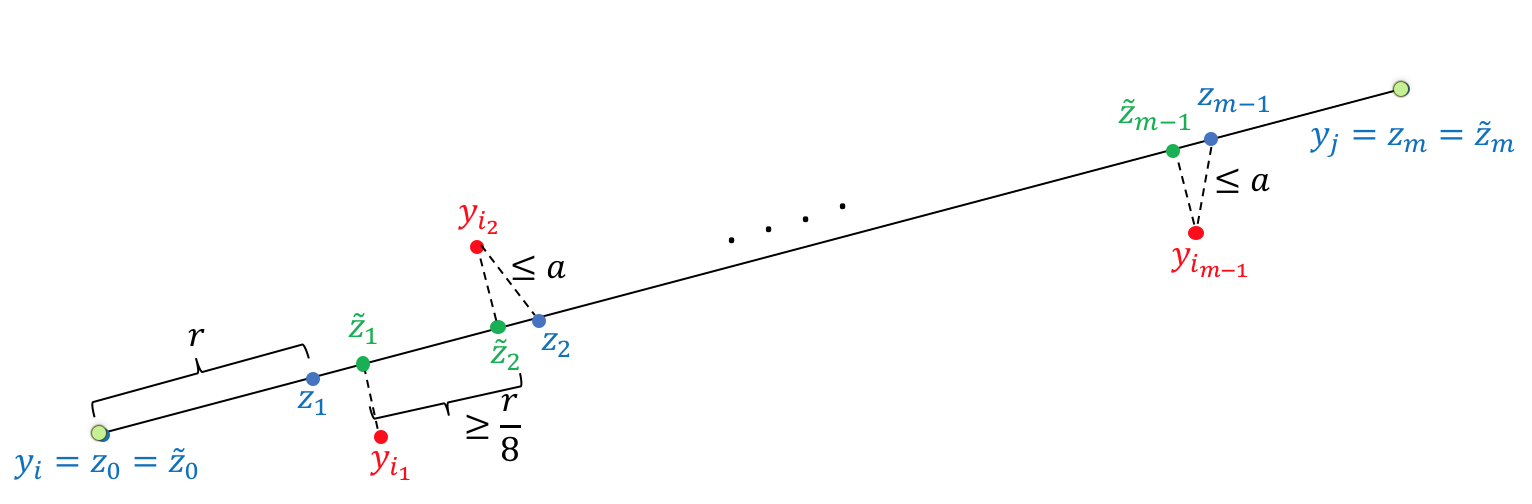}
	\caption{illustration for the proof of \prpref{better BSLT}}
	\label{fig:line}
\end{figure}

In particular the vector $\tilde z_k - y_{i_k}$ is orthogonal to $\cL$, and 
\beq
\|\tilde z_k - y_{i_k}\| = \min_{z \in \cL} \|z - y_{i_k}\| \le \|z_k - y_{i_k}\| \le a.
\eeq
It is not hard to see that $\tilde z_k$ is in fact on the line segment defined by $y_i$ and $y_j$.  Moreover, they are located sequentially on that segment.  Indeed, using the triangle inequality,
\begin{align}
\|\tilde z_k - y_i\| 
&\le \|z_k - y_i\| + \|z_k - \tilde z_k\| \\
&\le \|z_k - y_i\| + \|z_k - y_{i_k}\| + \|y_{i_k} - \tilde z_k\| \\
&\le \|z_k - y_i\| + 2 a \\
&= \frac{k}m g_{ij} + 2a,
\end{align}
while, similarly,
\begin{align}
\|\tilde z_{k+1} - y_i\| 
\ge \|z_{k+1} - y_i\| - 2 a
= \frac{k+1}m g_{ij} - 2a,
\end{align}
so that $\|\tilde z_k - y_i\| < \|\tilde z_{k+1} - y_i\|$ as soon as $g_{ij}/m > 4 a$.  Noting that $g_{ij} > (m-1)r/2$, this condition is met when $a/r \le (m-1)/8m$.  Recalling that $m \ge 2$, it is enough that $a/r \le 1/16$.
From the same derivations, we also get
\beq\label{z diff lb}
\|\tilde z_{k+1} - \tilde z_k\| 
\ge \frac1m g_{ij} - 4a
\ge \frac{(m-1)r}{2m} - 4a
\ge r/8,
\eeq
if $a/r \le 1/32$.

Since $(i_k : k = 0, \dots, m)$ forms a path in the graph, we have
\beq
\gamma_{ij} 
\le \sum_{k=0}^{m-1} \|x_{i_{k+1}} - x_{i_k}\|
\le \sum_{k=0}^{m-1} \|y_{i_{k+1}} - y_{i_k}\|.
\eeq
By the Pythagoras theorem, we then have 
\begin{align}
\|y_{i_{k+1}} - y_{i_k}\|^2 
&= \|\tilde z_{k+1} - \tilde z_k\|^2 + \|y_{i_{k+1}} - \tilde z_{k+1} + \tilde z_k - y_{i_k}\|^2 \\
&\le \|\tilde z_{k+1} - \tilde z_k\|^2 + (2 a)^2,
\end{align}
so that, using \eqref{z diff lb},
\beq
\|y_{i_{k+1}} - y_{i_k}\|
\le (1 + (2a)^2/(r/8)^2)^{1/2} \|\tilde z_{k+1} - \tilde z_k\|
= (1 + C (a/r)^2) \|\tilde z_{k+1} - \tilde z_k\|,
\eeq
where $C \le 128$, yielding
\beq
\gamma_{ij} 
\le (1 + C (a/r)^2) \sum_{k=0}^{m-1} \|\tilde z_{k+1} - \tilde z_k\|
= (1 + C (a/r)^2) g_{ij}. 
\eeq

\subsection{Proof of \prpref{random landmark}}\label{sec:proof random landmark}
We use concentration bounds for random matrices developed by \citet{tropp2012user}.
Consider a point set $\cY = \{y_1, \dots, y_n\}$, assumed centered without loss of generality.  We apply {\sf Random} to select a subset of $\ell$ points chosen uniformly at random with replacement from $\cY$.  We denote the resulting (random) point set by $\cZ = \{z_1, \dots, z_\ell\}$.
Let $Y = [y_1 \cdots y_n]$ and $Z = [z_1 \cdots z_\ell]$.
We have that $\cY$ has squared half-width equal to $\omega^2 \equiv \nu_d(Y^\top Y)/n$, and similarly, $\cZ$ has squared half-width equal to $\omega_Z^2 = \nu_d(Z^\top Z - \ell\, \bar z \bar z^\top)/\ell$, where $\bar z = (z_1 + \cdots + z_\ell)/\ell$. 
Note that, by \eqref{mirsky},
\beq
\omega_Z^2 
\ge \nu_d(Z^\top Z)/\ell - \nu_1(\bar z \bar z^\top)
= \nu_d(Z^\top Z)/\ell - \nu_1(\bar z)^2
= \nu_d(Z^\top Z)/\ell - \|\bar z\|^2.
\eeq
We bound the two terms on the right-hand side separately.

First, we note that $Z^\top Z = \sum_j z_j z_j^\top$, with $z_1 z_1^\top, \dots, z_\ell z_\ell^\top$ sampled independently and uniformly from $\{y_1 y_1^\top, \dots, y_n y_n^\top\}$.  These matrices are positive semidefinite, with expectation $Y^\top Y/n$, and have operator norm bounded by $\max_i \|y_i y_i^\top\| = \max_i \|y_i\|^2 = \rho_\infty^2$.  
We are thus in a position to apply \cite[Thm 1.1, Rem 5.3]{tropp2012user}, which gives that
\beq
\P\left(\nu_d(Z^\top Z)/\ell \le \tfrac12 \omega^2\right)
\le d \exp\big[-\tfrac18 \ell \omega^2/\rho_\infty^2\big].
\eeq

Next, we note that $\ell \bar z = \sum_j z_j$, with $z_1, \dots, z_n$ being iid uniform in $\{y_1, \dots, y_n\}$.  These are here seen as rectangular $d \times 1$ matrices, with expectation 0 (since the $y$'s are centered), and operator norm bounded by $\max_i \|y_i\| = \rho_\infty$.
We are thus in a position to apply \cite[Thm 1.6]{tropp2012user}, which gives that, for all $t \ge 0$,
\beq
\P\left(\|\bar z\| \ge t/\ell \right)
\le (d+1) \exp\big[-t^2/(2 \sigma^2 + \tfrac13 \rho_\infty t)\big],
\eeq
where 
\beq
\sigma^2 
= (\ell/n) \big(\|Y^\top Y\| \vee {\textstyle\sum}_i \|y_i\|^2\big)
= (\ell/n) {\textstyle\sum}_i \|y_i\|^2
\le \ell \rho_\infty^2.
\eeq
In particular,
\begin{align}
\P\left(\|\bar z\| \ge \tfrac14 \omega^2\right)
&\le (d+1) \exp\big[-\tfrac14\omega^2 \ell^2/(2 \rho_\infty \ell + \tfrac13 \rho_\infty \tfrac12 \omega \ell)\big] \\
&\le (d+1) \exp\big[-\tfrac19 \ell \omega^2/\rho_\infty^2 \big]\,,
\end{align}
using in the last line the fact that $\omega \le \rho_\infty$.

Combining these inequalities using the union bound, we conclude that
\beq
\P\left(\omega_Z \le \tfrac12 \omega\right)
\le d \exp\big[-\tfrac18 \ell \omega^2/\rho_\infty^2\big] 
	+ (d+1) \exp\big[-\tfrac19 \ell \omega^2/\rho_\infty^2 \big],
\eeq
from which the stated result follows.



\subsection*{Acknowledgements}
We are grateful to Vin de Silva, Luis Rademacher, and Ilse Ipsen for helpful discussions and pointers to the literature.
Part of this work was performed while the first and second authors were visiting the Simons Institute\footnote{ The Simons Institute for the Theory of Computing (\url{https://simons.berkeley.edu})} on the campus of the University of California, Berkeley.
The first author was partially supported by the National Science Foundation (DMS 0915160, 1513465, 1916071) and the French National Research Agency (ANR 09-BLAN-0051-01).  \ajr{The second author was partially supported by an Outlier Research in Business (iORB) grant from the USC Marshall School of Business, a Google Faculty Research Award and the NSF CAREER Award DMS-1844481.}

\small
\bibliographystyle{abbrvnat}
\bibliography{ref}

\end{document}